\newcommand{\s}[1][1]{\hspace{#1pt}}
\newcommand{\ones}{\rme}
\xdef\csname cal\x\endcsname{\noexpand 
	\ensuremath{\noexpand\mathcal{\x}}}
\xdef\csname scr\x\endcsname{\noexpand 
	\ensuremath{\noexpand\mathscr{\x}}}
\xdef\csname bb\x\endcsname{\noexpand 
	\ensuremath{\noexpand\mathbb{\x}}}
\xdef\csname rm\x\endcsname{\noexpand 
	\ensuremath{\noexpand\mathrm{\x}}}
\xdef\csname bf\x\endcsname{\noexpand 
	\ensuremath{\noexpand\mbf{\x}}}
\let\tau\uptau
\let\alpha\upalpha
\let\zeta\upzeta
\let\kappa\upkappa
\let\omega\upomega
\definecolor{red}{rgb}{1,0,0}
\definecolor{gray}{rgb}{0.5,0.5,0.5}
\definecolor{darkgray}{rgb}{0.4,0.4,0.4}
\definecolor{blue}{rgb}{0,0,1}
\definecolor{green}{rgb}{0,1,0}
\definecolor{deluge}{RGB}{124, 113, 173}
\definecolor{bamboo}{RGB}{220, 92, 5}
\definecolor{yellow}{RGB}{255, 172, 0}
\definecolor{orange}{RGB}{255, 144, 0}
\definecolor{oyster}{RGB}{151, 139, 125}
\definecolor{coral}{RGB}{199, 186, 167}
\definecolor{downy}{RGB}{110, 197, 184}
\title{{\Large \bf On Minimum Trace Factor Analysis} \\
\large \textsc{An Old Song Sung to a New Tune}}
\author{
Chang Yuan Li\footnote{Department of Statistics 
and Applied Probability, University of California, 
Santa Barbara. Email:~{\tt cli@pstat.ucsb.edu}.}
\hspace{0.32in} and \hspace{0.32in} 
Alex Shkolnik\footnote{Department of Statistics 
and Applied Probability, University of California, 
Santa Barbara, CA and Consortium for Data Analytics in Risk, 
University of California, Berkeley, CA.
Email:~{\tt shkolnik@ucsb.edu}.}
}
\date{\today}
\newcommand{\norm}[1]{\lVert#1\rVert}
\newcommand{\vertiii}[1]{{\left\vert\kern-0.25ex\left\vert\kern-0.25ex\left\vert #1 
    \right\vert\kern-0.25ex\right\vert\kern-0.25ex\right\vert}}
\newcommand{\proj}[1]{\text{P}_{#1}}
\newcommand{\defeq}{\vcentcolon=}
\DeclareMathOperator*{\minimize}{minimize\;}
\DeclareMathOperator*{\maximize}{maximize\;}
\DeclareMathOperator*{\argmin}{argmin}
\DeclareMathOperator*{\diag}{diag}
\DeclareMathOperator*{\rank}{rank}
\DeclareMathOperator*{\prox}{prox}
\DeclareMathOperator*{\vectorize}{vec}
\DeclareMathOperator*{\var}{Var}
\DeclareMathOperator*{\e}{E}
\DeclareMathOperator*{\tr}{tr}
\DeclareMathOperator*{\nul}{null}
\DeclareMathOperator*{\col}{col}
\DeclareMathOperator*{\row}{row}
\DeclareMathOperator*{\pdiag}{\proj{diag}}
\DeclareMathOperator*{\poffdiag}{\proj{diag}^\perp}
\theoremstyle{plain}
\newtheorem{thm}{Theorem}
\theoremstyle{definition}
\newtheorem{defn}[thm]{Definition} 
\newtheorem{cor}[thm]{Corollary}
\newtheorem{lmm}[thm]{Lemma}
\newtheorem{prop}[thm]{Proposition}
\newtheorem{remark}[thm]{Remark}
 \newtheorem{assmpt}[thm]{Assumption}
\begin{document}

\maketitle

\begin{abstract}
 Dimensionality reduction methods, such as principal component
analysis (PCA) and factor analysis, are central to many problems
in data science. There are, however, serious and well-understood
challenges to finding robust low dimensional approximations for
data with significant heteroskedastic noise. This paper
introduces a relaxed version of Minimum Trace Factor Analysis
(MTFA), a convex optimization method with roots dating back to
the work of Ledermann in 1940.  This relaxation is particularly
effective at not overfitting to heteroskedastic perturbations
and addresses the commonly cited Heywood cases in factor
analysis and the recently identified ``curse of
ill-conditioning" for existing spectral methods.  We provide theoretical
guarantees on the accuracy of the resulting low rank subspace and the convergence rate of the proposed algorithm
to compute that matrix. We develop a number of interesting
connections to existing methods, including Hetero PCA,
Lasso, and Soft-Impute, to fill an important gap in the already
large literature on low rank matrix estimation. Numerical
experiments benchmark our results against several
recent proposals for dealing with heteroskedastic noise.
\end{abstract}

\section{Introduction}

Minimum trace factor analysis (MTFA) is a long-existing
statistical method that is related to factor analysis and dates
back to the work of \cite{ledermann_iproblem_1940}. Its
fundamental objective is to extract the ``largest" diagonal
matrix $\scrD$ from a covariance matrix $\Sigma$ while ensuring that
the residual matrix $\Sigma -  \scrD$ remains 
symmetric and positive semi-definite.  The subtraction of this diagonal
matrix $\scrD$ may be viewed as a factor analysis that
minimizes the number of common factors (i.e., the rank of 
$\Sigma - \scrD$) that specify a
covariance matrix, resulting in a more parsimonious model.

Let $\bbS^p$ space of all real symmetric $p \times p$ matrices.
Given an input $\Sigma \in \bbS^p$, MTFA solves the 
following optimization problem.
\begin{align}
\begin{aligned}
    \minimize_{\mathscr L, \mathscr D} \quad&\tr(\mathscr L) \\
\text{s.t. }\quad &\scrL = \Sigma - \scrD  \in\mathbb{S}^p_+,\\
&\mathscr D= \pdiag(\mathscr D).
\end{aligned}
\label{eqn:mtfa}
\end{align} 
The feasible set $\bbS^p_+ \subset \bbS^P$ denotes
the space of positive semi-definite $p \times p$ matrices,
while $\pdiag(M)$ is the matrix $M$ with all but the
diagonal cells set to zero. Thus, $M =\pdiag(M)$ implies 
$M$ is a diagonal matrix. We further set
$\poffdiag(M) \defeq M - \pdiag(M)$.

The pioneering work in \cite{ledermann_iproblem_1940}
revealed that the minimum trace optimization problem
\eqref{eqn:mtfa} does not generally lead to a minimum rank
decomposition, i.e., for $D$ diagonal, 
$\Sigma = L + D$ may have $L \in \bbS^p_+$ be of lower
rank than the first variable of the minimizer. This might
have curtailed the application of MTFA to factor analysis,
where the rank of $L$ equates to the number of factors
in a factor analysis of a covariance. Decades later,
MTFA was rediscovered in
\cite{bentler_lower-bound_1972}, particularly in the context of
determining the greatest lower bound of reliability in factor
analysis. For example, it is common practice in psychology to report
reliability coefficients, i.e., 
\[ \frac{\ones^\top L\s \ones}{\ones^\top \Sigma \ones}= 
1 - \frac{\tr(D)}{\ones^\top \Sigma \ones}\]
is that coefficient for the model $\Sigma = L + D$ above,
and $\ones$ is a $p$-dimensional vector of all ones.
Therefore, MTFA offers a straightforward approach to producing
a lower bound on the reliability coefficient. This method was
further investigated 
in \cite{shapiro_rank-reducibility_1982} with respect to its rank
reducibility, addressing the question of how much the rank can
be reduced by modifying the diagonal elements (see Proposition \ref{prop:rank_reducibility} below). Other studies have also delved into various aspects, including \cite{fletcher_nonlinear_1981,della_riccia_minimum_1982,ten_berge_computational_1981,shapiro_weighted_1982,cronbach_internal_1988,bekker_generic_1997,jamshidian_quasi-newton_1998,shapiro_asymptotic_2000,shapiro_statistical_2002, saunderson_diagonal_2012,ning_linear_2015, bertsimas_certifiably_2017,shapiro_statistical_2019}. 

Much more recently, nuclear norm optimization has gained
significant attention in computer science and modern statistics
literature, for this convex surrogate provides a computationally
feasible heuristic for rank minimization
\cite{fazel_matrix_2002}. It offers statistical and
computational properties in area such as robust PCA
\cite{candes_robust_2011} and matrix completion
\cite{candes_exact_2009}, which makes it popular in image/video
processing \cite{bouwmans_handbook_2016} and recommendation
systems \cite{mazumder_spectral_2010}. MTFA can also be
equivalently expressed as a nuclear norm minimization problem,
replacing the trace in $\eqref{eqn:mtfa}$ by a 
nuclear norm. Because nuclear norm is the convex surrogate of the rank, MTFA tends to have
a lower rank among the elements of the feasible set $\bbS^p_+ \cap \{M: \poffdiag(M) = \poffdiag(\Sigma)\}$.


\begin{lmm}
For $\scrL \in \mathbb{S}^p_+$ and
$\norm{\s[2] \cdot \s[2] }_*$, the nuclear norm (ie., the 
sum of the singular values of the argument),
\[\norm{\mathscr L}_* =  \tr(\mathscr L)\]
\end{lmm}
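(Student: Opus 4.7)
The plan is to exploit the fact that for a symmetric positive semidefinite matrix the singular values coincide with the eigenvalues, after which both the nuclear norm and the trace reduce to the same spectral sum.

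First I would invoke the spectral theorem: since $\scrL \in \bbS^p$, there exists an orthogonal matrix $U$ and a diagonal matrix $\Lambda = \diag(\lambda_1, \dots, \lambda_p)$ with $\scrL = U \Lambda U^\top$, where the $\lambda_i$ are the eigenvalues of $\scrL$. The positive semidefiniteness assumption $\scrL \in \bbS^p_+$ then gives $\lambda_i \geq 0$ for every $i$.

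Next I would identify the singular values. In general, for a symmetric matrix the singular values are $|\lambda_i|$, which follows from the SVD $\scrL = U |\Lambda| \, \mathrm{sign}(\Lambda) U^\top$, where $\mathrm{sign}(\Lambda) U^\top$ is orthogonal. Because $\lambda_i \geq 0$, we have $|\lambda_i| = \lambda_i$, so the singular values of $\scrL$ coincide with its eigenvalues.

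Finally I would conclude by computing both quantities explicitly:
\[
\norm{\scrL}_* \;=\; \sum_{i=1}^p |\lambda_i| \;=\; \sum_{i=1}^p \lambda_i \;=\; \tr(\Lambda) \;=\; \tr(U \Lambda U^\top) \;=\; \tr(\scrL),
\]
using cyclicity of the trace in the penultimate equality. There is no real obstacle here; the only subtlety worth stating carefully is the identification of singular values with nonnegative eigenvalues, which is the step where the positive semidefiniteness hypothesis is actually used (without it, one would only obtain $\norm{\scrL}_* \geq \tr(\scrL)$).
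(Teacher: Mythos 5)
Your proof is correct and is the standard argument: spectral decomposition, identification of singular values with the nonnegative eigenvalues under the PSD hypothesis, and cyclicity of the trace. The paper states this lemma without proof (treating it as a well-known fact), so there is no competing approach to compare against; your writeup supplies exactly the reasoning the paper leaves implicit, and you are right to flag the PSD assumption as the load-bearing hypothesis.
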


In the past decade, a fresh analysis of MTFA was
presented in \cite{saunderson_diagonal_2012}, revealing
significant promise for this method through the exploration of
subspace coherence -- a recurring concept in the realms of
compressed sensing, robust PCA and other actively researched
areas. 

\begin{prop}[\cite{saunderson_diagonal_2012}]\label{prop:mtfa_exact} 
Let $\Sigma = L+D$ for $L \in \mathbb{S}^p_+$ of rank $r$ and $D$ 
a diagonal matrix. Denote the column space of $L$ by $\scrU$. 
The following are equivalent.
	      	\begin{enumerate}
	      		\item $\mathscr{U}$ is recoverable by MTFA,
i.e., $(L, D)$ is the unique minimizer of $\eqref{eqn:mtfa}$. 
	      		\item $\mathscr U$ is realizable, i.e.~there is a correlation matrix $Q\in \mathbb{S}^p_+$ such that $\nul (Q) \supseteq \mathscr U$. 
	      		\item $\mathscr U^\perp$ has the ellipsoid fitting property, i.e.~there is a $(p-r)\times p$ matrix $M$ with row space $\mathscr U^\perp$ and a centered ellipsoid in $\mathbb{R}^{p-r}$ passes through each column of $M$.
	      	\end{enumerate}
	      \end{prop}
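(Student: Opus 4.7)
The proof splits along the obvious seam: (1)$\Leftrightarrow$(2) will follow from SDP duality applied to \eqref{eqn:mtfa}, while (2)$\Leftrightarrow$(3) is a linear-algebraic reformulation of the set of correlation matrices whose nullspace contains $\scrU$.

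For (1)$\Leftrightarrow$(2), I would first derive the SDP dual. Substituting $\scrL = \Sigma - \scrD$ transforms \eqref{eqn:mtfa} into: maximize $\tr(\scrD)$ over diagonal $\scrD$ subject to $\Sigma - \scrD \succeq 0$. Introducing a multiplier $Q \succeq 0$ for the PSD constraint and dualizing in $\scrD$ forces $\pdiag(Q) = I$, yielding the dual problem of minimizing $\langle Q, \Sigma \rangle$ over correlation matrices $Q \in \bbS^p_+$. Slater's condition is easily verified, so strong duality holds and KKT applies. Complementary slackness $\langle Q, L \rangle = 0$ for a pair of PSD matrices is equivalent to $QL = 0$, i.e., $\col(L) = \scrU \subseteq \nul(Q)$, giving the realizable certificate of (2). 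Conversely, given any such $Q$, the identity $\tr(L') = \tr(\Sigma) - \langle Q, \Sigma \rangle + \langle Q, L' \rangle$ for any feasible $L'$ (which reduces to an algebraic manipulation using $\pdiag(Q) = I$ and diagonality of $D'$) bounds $\tr(L') \ge \tr(\Sigma) - \langle Q, \Sigma \rangle = \tr(L)$, proving optimality of $(L, D)$.

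For (2)$\Leftrightarrow$(3), I would fix an $(p-r) \times p$ matrix $M$ whose rows span $\scrU^\perp$. Every $Q \in \bbS^p_+$ with $\nul(Q) \supseteq \scrU$ factors uniquely as $Q = M^\top A M$ for some $A \in \bbS^{p-r}_+$, and the constraint $Q_{ii} = 1$ translates to $m_i^\top A m_i = 1$ on the columns $m_i$ of $M$. This is exactly the assertion that the columns of $M$ lie on the centered ellipsoid $\{x \in \bbR^{p-r} : x^\top A x = 1\}$, so (2) and (3) correspond under this bijection; the choice of $M$ is immaterial up to an invertible change of coordinates in $\bbR^{p-r}$.

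The hard part will be uniqueness of the minimizer in (2)$\Rightarrow$(1), since a bare dual certificate $Q$ only guarantees that every other minimizer $L'$ satisfies $\col(L') \subseteq \nul(Q)$, not $L' = L$. My plan is to upgrade the certificate: show that realizability implies the existence of some $Q$ with $\nul(Q) = \scrU$ exactly (equivalently, $A \succ 0$ in the ellipsoid parameterization, i.e., a nondegenerate fitting ellipsoid), which yields strict complementarity $\rank(Q) + \rank(L) = p$. Any alternative minimizer then has $\col(L') = \scrU = \col(L)$, so $L - L' = D' - D$ is simultaneously diagonal and supported in column space $\scrU$; under the mild nondegeneracy that $\scrU$ contains no standard basis vector (which is forced whenever a nondegenerate ellipsoid passes through every column of $M$), this compels $L = L'$ and hence $D = D'$. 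Producing the strict-complementarity certificate from a merely containment-type one is where I expect the main technical work to lie, and I would attack it by a dimension-count argument on the face of the correlation spectrahedron determined by $\scrU$.
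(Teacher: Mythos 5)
The paper itself does not prove this proposition; it is cited verbatim from \cite{saunderson_diagonal_2012}, so there is no in-paper argument to compare against. Judged on its own merits, your overall architecture is right: the $(1)\Leftrightarrow(2)$ equivalence via SDP duality of \eqref{eqn:mtfa}, with the dual being minimization of $\langle Q,\Sigma\rangle$ over the elliptope $\{Q\succeq 0:\pdiag(Q)=I\}$, and the $(2)\Leftrightarrow(3)$ equivalence via the parameterization $Q=M^\top A M$, $A\in\bbS^{p-r}_+$, are exactly the levers used in the original work.

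Where you go astray is in the uniqueness step of $(2)\Rightarrow(1)$, which you flag as ``the hard part'' and propose to resolve by upgrading the dual certificate to one with $\nul(Q)=\scrU$ exactly via a dimension count on the elliptope face. This detour is unnecessary, and a bare containment certificate already suffices. Fix any correlation matrix $Q$ with $\nul(Q)\supseteq\scrU$. For any feasible $(L',D')$ the identity $\tr(L')=\tr(\Sigma)-\langle Q,\Sigma\rangle+\langle Q,L'\rangle$ (which you derived) shows that $L'$ is optimal iff $\langle Q,L'\rangle=0$, i.e.\ $\col(L')\subseteq\nul(Q)$. In particular $\col(L)\subseteq\nul(Q)$ and $\col(L')\subseteq\nul(Q)$ together give $\col(L-L')\subseteq\nul(Q)$. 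But $L-L'=D'-D$ is diagonal, so its column space is spanned by the standard basis vectors $e_i$ with $(D'-D)_{ii}\neq 0$. If any such $i$ exists then $e_i\in\nul(Q)$, whence $Q_{ii}=e_i^\top Q e_i=0$, contradicting $Q_{ii}=1$. Hence $D'=D$ and $L'=L$. Two further small corrections: even with $\nul(Q)=\scrU$, complementary slackness only gives $\col(L')\subseteq\scrU$, not the equality $\col(L')=\scrU$ you assert; and the fact that $\scrU$ contains no standard basis vector is an automatic consequence of realizability ($e_i\in\scrU\subseteq\nul(Q)$ would force $Q_{ii}=0$), not a consequence of the fitting ellipsoid being nondegenerate. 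With these repairs, your proof outline closes without needing strict complementarity at all.
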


The connection between realizability and subspace coherence is given below. 

        \begin{defn}[Coherence of a subspace]\label{defn:coherence}
        Let $\mathscr U$ be a $r$-dimensional subspace of $\mathbb{R}^p$, then
             \begin{align}
             \hbar(\mathscr U)\defeq \max_{1\leq i\leq p} \norm{\proj{\mathscr{U}}e_i}^2 = \max_{u\in \mathscr U\backslash \{0\}} \frac{\norm{u}_\infty^2}{\norm{u}^2}= \norm{U}_{2, \infty} ^2 \in [r/p, 1]
             \end{align}
             where $U\in\mathbb{O}^{p, r} = \{\mathscr U\in \mathbb{R}^{p\times r}: \mathscr U ^\top \mathscr U = I_r \}$ has column space $\mathscr U$ and $\norm{\cdot}_{a, b}$ is the $(a, b)$-norm that is defined as follows: it first computes the $l_a$-norm for each row, and then computes the $l_b$-norm of the resulting vector.
        \end{defn}
	\begin{prop}[\cite{saunderson_diagonal_2012}]\label{prop:balanced} If $\mathscr U$ is a subspace of $\mathbb{R}^p$ and  $\hbar(\mathscr{U})  < 1/2$, then $\mathscr U$ is realizable. On the other hand, given any $\alpha > 1/2$, there is a subspace $\mathscr U$ with $\hbar(\mathscr{U}) = \alpha$ that is not realizable.\end{prop}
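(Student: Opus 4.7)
My plan is to use the correlation matrix characterization from Proposition~\ref{prop:mtfa_exact}: $\mathscr U$ is realizable iff there exists $Q \in \mathbb S^p_+$ with $\diag(Q) = \mathbf 1$ and $QU = 0$, where $U \in \mathbb O^{p, r}$ is an orthonormal basis of $\mathscr U$. I handle the second (negative) claim first, as it is more explicit. For any $\alpha \in (1/2, 1]$ and any $p \geq 3$, set $\mathscr U = \spn\{u\}$ with $u = (\sqrt\alpha,\, \sqrt{1-\alpha},\, 0, \dots, 0)^\top$; then $\hbar(\mathscr U) = \max_i u_i^2 = \alpha$. Choosing the orthonormal basis $v_1 = (-\sqrt{1-\alpha},\, \sqrt\alpha,\, 0, \dots, 0)^\top$ together with $v_k = e_{k+1}$ for $2 \leq k \leq p-1$ of $\mathscr U^\perp$, one has $m_1 := V^\top e_1 = -\sqrt{1-\alpha}\,\hat e_1$ and $m_2 := V^\top e_2 = \sqrt{\alpha}\,\hat e_1$ in $\mathbb R^{p-1}$, which are colinear. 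The ellipsoid-fitting conditions $m_i^\top X m_i = 1$ thus force both $X_{11} = 1/(1-\alpha)$ and $X_{11} = 1/\alpha$, an impossibility for $\alpha \neq 1/2$. Hence such $\mathscr U$ is not realizable.

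For the positive direction, let $\Pi := I - UU^\top = P_{\mathscr U^\perp}$ and consider the ansatz $Q = \Pi\, \diag(d)\, \Pi$ for some $d \in \mathbb R^p$. Then $QU = 0$ automatically, and $Q \succeq 0$ whenever $d \geq 0$. The constraint $\diag(Q) = \mathbf 1$ reduces to the linear system $Ad = \mathbf 1$, where $A_{ij} := \Pi_{ij}^2$. Setting $c_i := \|U^\top e_i\|^2$, the coherence hypothesis gives $\Pi_{ii} = 1 - c_i > 1/2$, while Cauchy--Schwarz yields $|(UU^\top)_{ij}|^2 \leq c_i c_j$, and the idempotency $\Pi^2 = \Pi$ implies $\sum_j A_{ij} = \Pi_{ii}$. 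Combining these,
\[
A_{ii} - \sum_{j \neq i} A_{ij} = \Pi_{ii}^2 - \Pi_{ii}(1 - \Pi_{ii}) = \Pi_{ii}(2\Pi_{ii} - 1) > 0,
\]
so $A$ is strictly diagonally dominant with positive diagonal and nonnegative off-diagonal entries; in particular, $A$ is invertible and $d := A^{-1}\mathbf 1$ is uniquely defined.

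The main obstacle is to show $d \geq 0$ entrywise, which is delicate because $A$ has nonnegative (rather than nonpositive) off-diagonal entries, so $A$ is not an M-matrix and $A^{-1}$ need not preserve nonnegativity. I plan to handle this via semidefinite duality: by a theorem of the alternative, $\mathscr U$ fails to be realizable iff there is a diagonal $D \in \mathbb S^p$ with $\Pi D \Pi \preceq 0$ and $\tr(D) > 0$. The matrix inequality $\Pi D \Pi \preceq 0$ encodes strictly more information than its $(i,i)$-entries $(Ad)_i \leq 0$; testing it against vectors $w = \Pi \epsilon$ for carefully chosen sign patterns $\epsilon \in \{\pm 1\}^p$ aligned with $d$ should allow one to exploit $\Pi_{ii} > 1/2$ and $|\Pi_{ij}| \leq \sqrt{c_i c_j} < 1/2$ to contradict $\tr(D) > 0$. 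An alternative route goes through the ellipsoid-fitting formulation: since $\sum_i m_i m_i^\top = I$ implies $\tr(X) \leq p$ for any $X \succeq 0$ with $m_i^\top X m_i \leq 1$, it suffices to show the maximum of $\tr(X)$ subject to these constraints equals $p$, which forces all constraints to be tight at the optimizer and thereby establishes realizability.
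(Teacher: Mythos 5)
The paper cites this proposition from \cite{saunderson_diagonal_2012} and offers no proof of its own, so there is nothing in the paper to compare your argument against; I can only assess it on its merits.

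Your negative direction is correct and clean: a rank-one subspace $\mathscr U = \mathrm{span}(u)$ with $u = (\sqrt\alpha,\sqrt{1-\alpha},0,\dots,0)$ has $\hbar(\mathscr U)=\alpha$, the columns $m_1, m_2$ of $V^\top$ are colinear with $\lVert m_1\rVert^2 = 1-\alpha \neq \alpha = \lVert m_2\rVert^2$, and no centered ellipsoid can pass through two colinear points of unequal norm. Since the ratio $\lVert m_2\rVert/\lVert m_1\rVert$ is preserved under $M \mapsto TM$ for invertible $T$, the obstruction is basis-independent, so $\mathscr U^\perp$ genuinely fails the ellipsoid-fitting property. (Your restriction $p\geq 3$ is unnecessary; $p=2$ works too.)

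The positive direction has a genuine gap, which you yourself flag. The reduction to $Ad=\mathbf 1$ with $A_{ij}=\Pi_{ij}^2$ is natural, and you correctly compute $A_{ii}-\sum_{j\neq i}A_{ij}=\Pi_{ii}(2\Pi_{ii}-1)>0$. But strict diagonal dominance does not give $A^{-1}\geq 0$ when the off-diagonals of $A$ are \emph{non}negative — that sign pattern is the opposite of an M-matrix — so $d=A^{-1}\mathbf 1\geq 0$ is unsupported, and the ansatz may well fail entrywise. (Note also that $d\geq 0$ is stronger than needed: $Q=\Pi\diag(d)\Pi\succeq 0$ only requires $\diag(d)$ to be PSD on $\mathscr U^\perp$, not globally.) Your two proposed repairs remain sketches. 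The duality route requires showing that $\Pi D\Pi\preceq 0$ together with $\tr(D)>0$ is impossible when all $\Pi_{ii}>1/2$; testing only the diagonal entries gives $\sum_i D_{ii}\Pi_{ii}\leq 0$, which with $\Pi_{ii}\in(1/2,1]$ yields $\tr(D^+)<2\tr(D^-)$ — not enough to force $\tr(D)\leq 0$ — and your suggestion of testing against $\Pi\epsilon$ for sign vectors $\epsilon$ does not obviously close the gap either, since the entries $(\Pi\epsilon)_i$ need not stay bounded away from zero. The trace-maximization route likewise asserts without proof that $\max\{\tr X : X\succeq 0,\ m_i^\top X m_i\leq 1\}=p$; the identity $\sum_i m_im_i^\top = I$ only gives the upper bound $\tr X\leq p$, and the claim that the optimum equals $p$ (so that all constraints bind) is precisely the nontrivial content of the proposition in disguise. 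As written, the positive direction is an outline, not a proof.
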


       \begin{cor}[\cite{ledermann_iproblem_1940,saunderson_diagonal_2012}]\label{cor:balanced}
        
         When $L = \sigma^2 \beta\beta^\top$ for $\sigma \in
\bbR$
and $\beta \in \bbR^p$, the row space of $L$ is recoverable by
MTFA if and only if the eigenvector $\beta$ of $L$ is balanced, i.e.
	      \begin{align*}
	    \s[32]  	|\beta_i|\leq\sum_{j\neq i}|\beta_j| 
 \s[32] \forall \s i=1, \cdots, p \s . 
	      \end{align*}
       \end{cor}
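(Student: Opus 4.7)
The plan is to apply Proposition~\ref{prop:mtfa_exact}. Since the column/row space $\mathscr{U}=\spn(\beta)$ is one-dimensional, recoverability of $\mathscr{U}$ by MTFA is equivalent to its realizability, that is, to the existence of a correlation matrix $Q\in\bbS^p_+$ (symmetric PSD with unit diagonal) satisfying $Q\beta=0$. I will show this existence is equivalent to $\beta$ being balanced. Note that Proposition~\ref{prop:balanced} is not sharp enough to settle the matter: the balanced vector $\beta=(2,1,1,1)$ already has $\hbar(\spn(\beta))=4/7>1/2$, yet it turns out to be realizable, so a rank-one-specific argument is needed.

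For necessity, suppose such $Q$ exists. Because $Q_{ii}=1$, the $i$th row of $Q\beta=0$ reads $\beta_i=-\sum_{j\neq i}Q_{ij}\beta_j$. Positive semi-definiteness of the $2\times 2$ principal submatrices forces $|Q_{ij}|\leq\sqrt{Q_{ii}Q_{jj}}=1$, so taking absolute values yields $|\beta_i|\leq\sum_{j\neq i}|\beta_j|$ immediately.

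For sufficiency, first absorb the signs: conjugation by $D_s=\diag(\operatorname{sign}(\beta))$ preserves the PSD cone, the unit diagonal, and the kernel constraint (since $D_s$ is orthogonal and $D_s|\beta|=\beta$), so I may assume $\beta\geq 0$. The task is then to produce a valid $Q$ for every balanced nonnegative $\beta$. A natural route is induction on the number of nonzero coordinates, sorted as $\beta_1\ge\cdots\ge\beta_k>0$: the base case $k=2$ forces $\beta_1=\beta_2$ and is handled by a $2\times 2$ off-diagonal $-1$ block padded with the identity; the inductive step uses the slack in the balance inequality $\beta_1\le\sum_{j\ge 2}\beta_j$ to split off a rank-one contribution $v v^\top$ with $v\in\{\pm 1\}^p$ orthogonal to the relevant subvector of $\beta$, reducing to a balanced vector of strictly smaller support. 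An alternative is the ellipsoid-fitting criterion in Proposition~\ref{prop:mtfa_exact}(3), applied to an orthonormal basis of $\beta^\perp$: the balance conditions translate directly into the feasibility of a centered ellipsoid through the columns.

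The main obstacle is the sufficiency direction. The obvious closed-form guess $Q=cI-\beta\beta^\top/\|\beta\|_1$ forces the diagonal equation $c-\beta_i^2/\|\beta\|_1=1$, which only holds when all $|\beta_i|$ coincide, so a genuine inductive or combinatorial construction is required. Tracking positive semi-definiteness across the induction (or verifying that the PSD ellipsoid-fitting system is jointly feasible) is the delicate step on which the whole argument turns.
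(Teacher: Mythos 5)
The paper does not give its own proof of Corollary~\ref{cor:balanced}; it simply cites \cite{ledermann_iproblem_1940,saunderson_diagonal_2012}, so there is nothing in the paper to compare against. Judged on its own terms, your proposal has the right frame (reduce via Proposition~\ref{prop:mtfa_exact} to the existence of a correlation matrix $Q$ with $Q\beta=0$) and the necessity half is complete and correct: from $Q_{ii}=1$, $|Q_{ij}|\le 1$, and $Q\beta=0$ one immediately reads off $|\beta_i|\le\sum_{j\neq i}|\beta_j|$. Your observation that Proposition~\ref{prop:balanced} is too coarse here is also right, as the example $\beta=(2,1,1,1)$ shows.

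The sufficiency half, however, has a genuine gap. Your inductive step wants to ``split off a rank-one contribution $vv^\top$ with $v\in\{\pm1\}^p$ orthogonal to the relevant subvector of $\beta$.'' For a generic balanced $\beta$ (e.g.\ one with rationally independent positive entries), there is \emph{no} $\pm1$ vector orthogonal to $\beta$ or to any sub-restriction of it, so the reduction to a smaller support never gets started. Your base case $k=2$ also silently assumes exact equality $\beta_1=\beta_2$, which is special to that case. In short, the combinatorial skeleton you describe will not survive contact with an arbitrary balanced $\beta$.

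A construction that does work, and is both shorter and explicit, is the classical ``polygon closing'' argument. After the sign reduction you already made (so $\beta\ge0$), the balance condition $\beta_i\le\sum_{j\ne i}\beta_j$ is exactly the condition that lengths $\beta_1,\dots,\beta_p$ close up into a (possibly degenerate) planar polygon: choose unit vectors $u_1,\dots,u_p\in\bbR^2$ with $\sum_j \beta_j u_j = 0$. (One way to see existence: partition $\{2,\dots,p\}$ into two groups $P,Q$ minimizing $|\sum_P\beta_j-\sum_Q\beta_j|$; that minimum is at most $\max_{j\ge2}\beta_j\le\beta_1$, so the three lengths $\beta_1$, $\sum_P\beta_j$, $\sum_Q\beta_j$ satisfy the triangle inequalities, and aligning each group along its triangle side gives the $u_j$.) Now set $Q_{ij}=u_i^\top u_j=\cos(\theta_i-\theta_j)$. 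Then $Q$ is a Gram matrix of unit vectors, hence PSD with unit diagonal (a rank-$\le2$ correlation matrix), and $(Q\beta)_i=u_i^\top\bigl(\sum_j\beta_j u_j\bigr)=0$. This produces the realizing $Q$ in closed form and sidesteps the ``tracking PSD across the induction'' difficulty you flagged. I recommend replacing your sketched induction with this construction; the rest of your proposal then stands.
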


The preceding results effectively elucidate the advantages
of MTFA over PCA as the former provides exact recovery guarantees. To
gain a more comprehensive understanding of MTFA and PCA,
particularly regarding PCA's susceptibility to outliers, let's
examine a straightforward example. Although the literature
frequently asserts the ``sensitivity of PCA to outliers", it
often lacks a rigorous mathematical depiction of the underlying
mechanisms, relying instead on empirical observations.

It is customary to take the sample covariance $\hat\Sigma$ as an input of PCA and MTFA, as the true covariance matrix  $\Sigma= L+D$ is unobservable.  To facilitate notation, let us define the matrix $W \in \mathbb{R}^{p \times p}$ as the ``discrepancy,'' expressed as:  $W\defeq \hat \Sigma - \Sigma\in\mathbb{R}^{p\times p}$. This allows us to represent $\hat{\Sigma}$ as the sum of the true covariance matrix $L + D$ and the discrepancy $W$:
\begin{align}
    \hat\Sigma = \Sigma + W = L+ D + W 
\end{align}

\subsection{Motivating example}
\label{sec:example}

Let's consider the 1-factor model with noiseless ($W = 0$) input covariance matrix

\begin{align*}
     \Sigma &=\hat\Sigma =\sigma^2_\text{signal} \beta\beta^\top + D\\ D& \defeq \delta^2_{\text{noise}}I+\sigma^2_{\text{noise}}\eta\eta^\top 
\end{align*}
where $\norm{\beta} = 1$ and $\eta_i = 1$ for an index $i$ and $\eta_j = 0$ for $j \neq i$.
Given the true covariance matrix $\Sigma$, what would happen if we attempt to estimate $\beta$ using PCA?  Let $q= \beta^\top \eta, s = \sigma^2_{\text{signal}}/\sigma^2_{\text{noise}}$. The first eigenvector is proportional to the following form \[\hat\beta \propto \begin{cases}\beta + \frac{1-q^2 + \sqrt{(1-s)^2+4sq^2}}{1+s-\sqrt{(1-s)^2+4sq^2}}\cdot \frac{\eta}{q}, & q\neq 0;\\
1(s > 1)\beta + 1(s<1)\eta, & q = 0.\end{cases}\]
In terms of $\sin\Theta$ distance,
\begin{align*}
    |\sin\Theta(\beta, \hat\beta)|\defeq \sqrt{1-\left(\tfrac{\beta^\top \hat\beta}{\norm{\beta}\cdot \norm{\hat\beta}}\right)^2} = \begin{cases}\left[ 1+ \tfrac{4q^2(1-q^2)^2}{\left(1-s-2q^2 + \sqrt{(1-s)^2 + 4s q^2
    }\right)^2}\right]^{-1/2}, & q\neq 0;\\
    1(s<1), & q = 0.\end{cases}
\end{align*}
As displayed in Figure \ref{fig:example_sin_theta}, one can observe a phase transition phenomenon when we plot the $\sin\Theta$ distance with respect to the signal heteroskedasticity ratio $\sigma^2_\text{singal}/\sigma^2_\text{noise}$.   On the other hand, under the condition that $\beta$ is balanced,  Corollary \ref{cor:balanced} says that one can perfectly recover the low-rank covariance matrix.

\begin{figure}[H]
    \centering
    \includegraphics[width=0.5\linewidth]{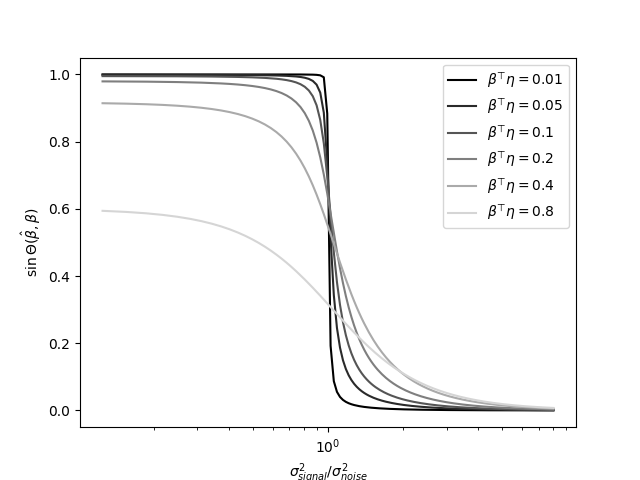}
    \caption{Rank 1 subspace recovery by PCA while heteroskedasticity exists}
    \label{fig:example_sin_theta}
\end{figure}

Ideally, the sample covariance matrix can be close to the true covariance matrix. However, owing to the constraint $\poffdiag(\Sigma) = \poffdiag(\mathscr L)$, when one observes a sample covariance matrix $\hat \Sigma$, which is the true covariance matrix perturbed by some noise, MTFA provides a solution with a linearly increasing rank with respect to its dimension, as demonstrated in Proposition \ref{prop:rank_reducibility}. This could be one of the contributing factors to the method's relative obscurity in the past.

\begin{prop}[\label{prop:rank_reducibility}\cite{shapiro_rank-reducibility_1982}] The Lebesgue measure of all symmetric matrix $\Sigma\in \mathbb{S}^p$ such that the corresponding MTFA solution low-rank component $\tilde L = \Sigma - \tilde{D}$ has rank less than \begin{align*}
	    \phi(p)\defeq \frac{2p+1-(8p+1)^{1/2}}{2}
	\end{align*}
	is zero. 
	\end{prop}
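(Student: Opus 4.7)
The statement is a generic dimension count, so my plan is simply to bound the ambient dimension of the set of $\Sigma \in \bbS^p$ admitting a low-rank plus diagonal decomposition. Fix an integer $r$ with $0 \le r < \phi(p)$ and consider
\[
\scrS_r \defeq \bigl\{\, \Sigma \in \bbS^p : \Sigma = L + D,\ L \in \bbS^p_+,\ \rank(L) = r,\ D = \pdiag(D)\,\bigr\}.
\]
Any $\Sigma$ whose MTFA solution satisfies $\rank(\tilde L) < \phi(p)$ lies in the finite union $\bigcup_{r=0}^{\lfloor \phi(p)\rfloor} \scrS_r$, so it suffices to show each $\scrS_r$ is Lebesgue null in $\bbS^p$.

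To bound $\scrS_r$, I would realize it as the image of the smooth map
\[
\psi_r : \scrM_r \times \scrD \longrightarrow \bbS^p, \qquad (L, D) \longmapsto L + D,
\]
where $\scrM_r \subset \bbS^p_+$ is the manifold of rank-$r$ PSD matrices and $\scrD \subset \bbS^p$ is the $p$-dimensional space of diagonal matrices. Parametrizing $L = U U^\top$ with $U \in \bbR^{p \times r}$ of full column rank and quotienting by the right $O(r)$-action gives $\dim \scrM_r = pr - \tfrac{r(r-1)}{2}$, so
\[
\dim(\scrM_r \times \scrD) = pr - \tfrac{r(r-1)}{2} + p.
\]
Comparing with $\dim \bbS^p = \tfrac{p(p+1)}{2}$, the strict inequality $\dim(\scrM_r \times \scrD) < \dim \bbS^p$ rearranges to $r^2 - (2p+1)r + p(p-1) > 0$, whose smaller root is exactly $\phi(p) = \tfrac{2p + 1 - \sqrt{8p+1}}{2}$. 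Hence $r < \phi(p)$ yields strict dimensional deficit, and the standard fact that a $C^1$-image of a manifold of dimension strictly less than $\dim \bbS^p$ has Lebesgue measure zero (via a Lipschitz-cover argument, or the easy direction of Sard's theorem) gives $\scrS_r = \operatorname{Im}(\psi_r)$ null. A finite union of null sets is null, finishing the proof.

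\textbf{Main obstacle.} The substantive work is the clean dimension count above; the subtlety lies in bookkeeping. First, I must avoid relying on uniqueness of the MTFA minimizer (which can fail; cf.~Proposition \ref{prop:mtfa_exact}): only the existence of a rank-$r$ decomposition is used, which is all that the inclusion into $\operatorname{Im}(\psi_r)$ requires. Second, $\scrM_r$ is only a manifold on its top stratum, but lower strata $\scrM_{r'}$ with $r' < r$ are of even smaller dimension and are absorbed into the finite union. Third, one must justify the measure-zero conclusion for the (possibly non-embedded) image of $\psi_r$; this follows from the local Lipschitz property of $\psi_r$ on compact sub-domains together with the countable-cover argument from geometric measure theory, both of which apply since $\psi_r$ is $C^\infty$ and semi-algebraic.
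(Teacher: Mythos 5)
The paper does not give its own proof of this proposition: it is quoted as a known result from Shapiro (1982), so there is no internal argument to compare against. Your dimension-counting derivation is, however, precisely the classical argument behind the Ledermann bound and (in substance) the one Shapiro uses: the set of rank-$r$ PSD matrices has dimension $pr - \tfrac{r(r-1)}{2}$, the diagonal slice adds $p$, and the inequality $pr - \tfrac{r(r-1)}{2} + p < \tfrac{p(p+1)}{2}$ is exactly $r < \phi(p)$; the rest is the measure-zero property of $C^1$ images of lower-dimensional manifolds plus a finite union. The algebra and the parameter counts all check out, and your handling of the three subtleties (no reliance on uniqueness of the MTFA minimizer, lower strata absorbed by the finite union, Lipschitz-cover justification for the non-embedded image) is correct.

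One small slip in the bookkeeping: the union should be taken over integers $r$ with $r < \phi(p)$ strictly, i.e.\ $r = 0, 1, \dots, \lceil \phi(p)\rceil - 1$. Your upper index $\lfloor\phi(p)\rfloor$ is correct when $\phi(p)$ is not an integer, but when $p$ is a triangular number $\phi(p)$ is itself an integer and $\lfloor\phi(p)\rfloor = \phi(p)$; in that case your union erroneously includes the stratum $\scrS_{\phi(p)}$, for which the dimension count gives \emph{equality} rather than strict deficit, so that stratum cannot be dismissed as null. Since the proposition only concerns $\rank(\tilde L) < \phi(p)$, the fix is simply to drop that boundary term.
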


Due to Proposition \ref{prop:rank_reducibility}, the MTFA program's solutions, when provided with sample covariance inputs, fail to yield low-dimensional outcomes, instead, the rank of the solution grows linearly with dimension $p$. 

Additionally, MTFA is susceptible to another issue, often resulting in improper solutions. In such cases, the covariance matrix decomposition $\Sigma = L + D$ includes a diagonal matrix $D$ with 0 or negative elements on its diagonal. This situation poses a significant challenge for interpretation since variances are inherently expected to be positive. This issue is commonly referred to as ``Heywood cases'' within the factor analysis literature and practice.

To address the Heywood cases, the constrained MTFA (CMTFA) \cite{jackson_lower_1977, bentler_inequalities_1980} imposes the constraint that $D$ must be positive semi-definite. However, as pointed out in \cite{bartholomew_latent_2011}, it is important to acknowledge that one of the most common causes of Heywood cases is the attempt to extract more factors than are genuinely present. Therefore, while imposing $D\in\mathbb{S}^p_+$ is a step in the right direction, it may not always be the most effective approach to resolve the problem.

\subsection{Contributions}
\label{sec:contrib}

To overcome the limitations, we propose a relaxed version of MTFA in this paper. Our enhanced MTFA approach introduces several key advancements:

\begin{itemize}
    \item {\bf Iterative fixed point method with convergence rate guarantees}: We have developed an iterative fixed-point method with convergence rate guarantees, a valuable contribution to the factor analysis literature where such guarantees are not commonly found.
    \item {\bf Optimal subspace recovery guarantees}:
   Our approach establishes optimal subspace recovery guarantees, aligning with the minimax rate within the factor model setting.  
   \item {\bf Overcoming the ``Curse of Ill-Condition'' with even milder condition}:
   As highlighted by \cite{zhou_deflated_2023}, existing methods for subspace estimation degrade in the presence of heteroskedastic noise when the condition number $\kappa$ associated with the signal becomes large. For instance, Diagonal-Deleted PCA \cite{cai_subspace_2021} Theorem 1, and HeteroPCA \cite{zhang_heteroskedastic_2022} Theorem 4, \cite{yan_inference_2021} Theorem 10, and \cite{agterberg_entrywise_2022} Assumption 4 require
   $$\kappa \lesssim p^{1/4}$$
   under model \eqref{eqn: matrix_model} and the statistical bound are all involving $\kappa$. Through the exploitation of convex properties, our method does not require a condition number and imposes less stringent conditions compared to \cite{zhou_deflated_2023}.
   \item  {\bf Addressing Heywood cases in Factor Analysis}: Our method provides a guarantee of preventing Heywood cases for suitably regularization.
    \item {\bf Connections to existing literature}:
    
    Our work establishes several meaningful connections to the existing literature: 
    \begin{itemize}
        \item It presents a relaxation of MTFA of the exact constraint, and a convex relaxation of the HeteroPCA algorithm \cite{zhang_heteroskedastic_2022}.
        
        \item By incorporating concepts from LASSO \cite{tibshirani_regression_1996} into the domain of factor analysis, our approach bridges the gap between these two important methodologies.
    
        \item It establishes a connection between MTFA and  missing value imputation techniques, specifically Soft-impute \cite{mazumder_spectral_2010} and covariance estimation with missing values \cite{lounici_high-dimensional_2014}.
    \end{itemize} 
\end{itemize}

\section{Proposed Mathematical Program and its Properties}
We present the following optimization problem involving a given  covariance matrix $\Sigma$ and a tuning parameter $\tau>0$:

\begin{align}
\begin{aligned}
\minimize_{\mathscr L, \mathscr D}\quad & \,F(\mathscr L, \mathscr D)\defeq \tau\norm{\mathscr L}_* + \frac{1}{2}\norm{\Sigma 
 - (\mathscr L + \mathscr D) }_F^2 \\
\text{s.t. } \quad
&\mathscr L \in\mathbb{S}^p_+,\\
&\mathscr D= \pdiag(\mathscr D). \end{aligned}
\label{eqn:rmtfa}
\end{align}
In this context, the objective function $F$ can be regarded as proportional to the Lagrangian function of the following constrained optimization problem:
 \begin{align}
        \begin{aligned}
        \minimize_{\mathscr L, \mathscr D}\quad & \,\norm{\mathscr L}_* \\
        \text{s.t. } \quad
        &\norm{\Sigma 
         - (\mathscr L + \mathscr D) }_F^2 \leq \psi,\\
        &\mathscr L \in\mathbb{S}^p_+,\\
        &\mathscr D= \pdiag(\mathscr D). \end{aligned}\label{eqn: constrained_form1}
        \end{align}
In optimization language, \eqref{eqn:rmtfa} is the dual problem and \eqref{eqn: constrained_form1} is the primal problem. The equivalence between the two is established in
Lemma \ref{lmm:two_optimization}. %

To interpret \eqref{eqn: constrained_form1}, we relax the MTFA problem represented by \eqref{eqn:mtfa}, such that the exact constraint $\Sigma = \mathscr L + \mathscr D$ is not strictly enforced. Rather, $\mathscr L + \mathscr D$ is close to $ \Sigma$ in the least square sense. This relaxation allows for noise presented in the covariance matrix $ \Sigma$. The parameter $\psi$ in \eqref{eqn: constrained_form1} is required to be non-negative, and controls the distance between the decomposition and the input.

It is evident that our proposed program constitutes a convex optimization problem. Nevertheless, within the literature on Factor Analysis, such instances are rather infrequent. For instance, extensions of minimum residual factor analysis \cite{harman_factor_1966}, often involve rank-constrained problems that are challenging to solve, as highlighted by Bertsimas \cite{bertsimas_certifiably_2017} who resorts to discrete optimization to provide a certifiable solution. Furthermore, the likelihood-based approach imposes distributional assumptions on the residuals, which is a more restrictive requirement. In contrast, our proposed method refrains from assuming a specific distributional assumption and guarantees a unique global minimizer.

\begin{lmm}\label{lmm:cvx_uniqueness} The optimization \eqref{eqn:rmtfa} is convex and has a unique global minimizer as $\tau > 0$.
\end{lmm}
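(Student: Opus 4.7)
My approach is to handle convexity and existence quickly, then focus the work on uniqueness, which does not follow from strict convexity of $F$ alone. Convexity of $F$ is standard: the nuclear norm is a norm, the squared Frobenius residual is strictly convex in $\Sigma-(\mathscr L+\mathscr D)$ and thus convex in $(\mathscr L,\mathscr D)$ after the affine precomposition, and the feasible set $\mathbb{S}^p_+\cap\{\mathscr D=\pdiag(\mathscr D)\}$ is closed and convex. Existence of at least one minimizer follows from coercivity: $\tau\norm{\mathscr L}_*\geq \tau\norm{\mathscr L}_F$ bounds $\norm{\mathscr L}_F$ along any minimizing sequence, after which the residual term controls $\norm{\mathscr D}_F$, and the Weierstrass theorem applies.

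The core obstacle is that $F$ is \emph{not} strictly convex on the feasible set: on $\mathbb{S}^p_+$ the nuclear norm reduces to the trace, so the only strictly convex piece of $F$ is the quadratic in $\mathscr L+\mathscr D$. Assume $(\mathscr L_1,\mathscr D_1)$ and $(\mathscr L_2,\mathscr D_2)$ are both global minimizers. A standard midpoint/strict-convexity argument then forces $\mathscr L_1+\mathscr D_1=\mathscr L_2+\mathscr D_2$ (otherwise the quadratic part at the midpoint strictly decreases while the linear trace term is unchanged). Setting $\Delta:=\mathscr L_1-\mathscr L_2=\mathscr D_2-\mathscr D_1$ shows that $\Delta$ is diagonal and, in particular, $\poffdiag(\mathscr L_1)=\poffdiag(\mathscr L_2)$.

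To rule out a nonzero $\Delta$, I invoke KKT. Replacing $\norm{\mathscr L}_*$ by $\tr(\mathscr L)$ on the feasible set (identical values, unambiguous gradient $\tau I$), stationarity in $\mathscr D$ gives $\mathscr D=\pdiag(\Sigma)-\pdiag(\mathscr L)$, and stationarity in $\mathscr L$ with PSD multiplier $\mathscr Y$ for the cone constraint $\mathscr L\succeq 0$ yields
\[
\mathscr Y = \tau I + \poffdiag(\mathscr L) - \poffdiag(\Sigma), \qquad \mathscr Y\succeq 0, \qquad \mathscr Y\mathscr L = 0.
\]
Because $\poffdiag(\mathscr L_1)=\poffdiag(\mathscr L_2)$, the multiplier $\mathscr Y$ is the \emph{same} matrix at both optima, so $\mathscr Y\Delta = \mathscr Y\mathscr L_1-\mathscr Y\mathscr L_2=0$. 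Since $\Delta$ is diagonal, any index $j$ with $\Delta_{jj}\neq 0$ kills the $j$th column of $\mathscr Y$ and hence, by symmetry, the $j$th row, forcing $\mathscr Y_{jj}=0$. But the KKT identity reads $\mathscr Y_{jj}=\tau>0$, a contradiction. Hence $\Delta=0$, giving $\mathscr L_1=\mathscr L_2$ and then $\mathscr D_1=\mathscr D_2$.

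The step I anticipate as the main technical hazard is the subdifferential of $\norm{\cdot}_*$ at rank-deficient PSD points: writing KKT directly for the nuclear norm introduces extra slack through the subgradient, which could dilute the clean identity $\mathscr Y_{jj}=\tau$. Substituting the trace formulation on the feasible set bypasses this entirely, reducing uniqueness to the elementary observation that a PSD matrix with strictly positive diagonal cannot annihilate a nonzero diagonal matrix.
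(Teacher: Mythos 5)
Your proof is correct and takes a genuinely different route from the paper's. Both proofs start with the same midpoint argument: strict convexity of the quadratic term in $\mathscr L+\mathscr D$ combined with linearity of the trace on $\mathbb{S}^p_+$ forces any two minimizers to agree in $\mathscr L+\mathscr D$, hence in $\poffdiag(\mathscr L)$. The paper then finishes by observing that both $L_1$ and $L_2$ are feasible minimizers for the \emph{exact} MTFA problem \eqref{eqn:mtfa} with input $\poffdiag(L_1)+\pdiag(\Sigma)$, and invokes the external result of \cite{della_riccia_minimum_1982} that the MTFA solution is unique. You instead finish with a self-contained KKT argument: after eliminating $\mathscr D$, the dual multiplier $\mathscr Y=\tau I+\poffdiag(\mathscr L)-\poffdiag(\Sigma)$ is pinned down by the (now shared) off-diagonal of $\mathscr L$, so complementary slackness gives $\mathscr Y\Delta=0$ for the diagonal difference $\Delta$, and $\mathscr Y_{jj}=\tau>0$ kills any nonzero diagonal entry. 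Your version has two advantages: it does not outsource to the literature, and it cleanly exposes where $\tau>0$ enters (it is exactly what makes $\mathscr Y_{jj}>0$). It also addresses existence via coercivity, which the paper leaves implicit. The one thing to keep explicit if you write this up is the constraint qualification (Slater holds, e.g.\ $\mathscr L=I$, $\mathscr D=0$), so KKT is indeed necessary, and that the trace reformulation is an exact rewriting of the objective on the feasible set, not merely an approximation, so its stationarity conditions characterize the same minimizers.
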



We also establish a connection between MTFA and our optimization problem \eqref{eqn:rmtfa}. It can be demonstrated that for $\tau\geq \lambda_1^{(\poffdiag(\Sigma))}$, the solution $\hat L(\tau)$ becomes a zero matrix with a rank of 0. As the tuning parameter $\tau$ approaches 0, we have demonstrated that (Theorem \ref{thm:rmta_mtfa}) $\hat L(\tau)$ converges towards the MTFA solution, indicating that $\hat L(\tau)+\hat D(\tau)$ closely approximates $\Sigma$  and $\rank(\hat L(\tau))$ tends to be high  (Proposition \ref{prop:rank_reducibility}). The monotonic relationship between nuclear norm and $\tau$ is established in Lemma \ref{lmm:monotone}, $\tau$ serves as a user-defined tuning parameter to accommodate the prevailing noise and achieve a desired rank. From now on, we define \[\psi(\tau) \defeq \norm{\Sigma - (\hat L(\tau)+\hat D(\tau))}_F^2. \]

\begin{lmm}
    \label{lmm:monotone}
    
        For $\tau \in (0, \lambda_1^{(\poffdiag(\Sigma))})$, 
        $\psi(\tau)$ is a strictly increasing function  with
        $\psi(\tau) \leq \tau \norm{\Sigma}_*;$ whereas $\norm{\hat L(\tau)}_*$ is a strictly decreasing function. 
    \end{lmm}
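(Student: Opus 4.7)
The plan is to eliminate $\mathscr{D}$ by partial minimization, reducing \eqref{eqn:rmtfa} to a problem in $\mathscr{L}$ alone, and then to exploit its KKT conditions. Since $F$ is a separable quadratic in the diagonal entries of $\mathscr{D}$, the optimum is $\hat D(\tau) = \pdiag(\Sigma - \hat L(\tau))$, and substituting gives the reduced program
\[
g_\tau(L) \defeq \tau\tr(L) + \tfrac{1}{2}\,\|\poffdiag(\Sigma - L)\|_F^2, \qquad L \in \mathbb{S}^p_+,
\]
having used $\|L\|_* = \tr(L)$ on $\mathbb{S}^p_+$; correspondingly, $\psi(\tau) = \|\poffdiag(\Sigma - \hat L(\tau))\|_F^2$. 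Writing $\Lambda(\tau)$ for the dual variable of the conic constraint $L \succeq 0$, stationarity and complementary slackness read
\[
\Lambda(\tau) = \tau I - \poffdiag(\Sigma - \hat L(\tau)), \qquad \Lambda(\tau) \succeq 0, \qquad \Lambda(\tau)\,\hat L(\tau) = 0.
\]
Since $\hat L(\tau) = 0$ would force $\tau I - \poffdiag(\Sigma) \succeq 0$, i.e.\ $\tau \ge \lambda_1^{(\poffdiag(\Sigma))}$, I conclude $\hat L(\tau) \ne 0$ throughout the stated open interval.

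For the strict monotonicity, take $0 < \tau_1 < \tau_2 < \lambda_1^{(\poffdiag(\Sigma))}$ and suppose for contradiction that $\hat L(\tau_1) = \hat L(\tau_2) = L^{\ast}$. Then $\Lambda(\tau_2) - \Lambda(\tau_1) = (\tau_2 - \tau_1)\,I$ would annihilate $L^{\ast}$, forcing $L^{\ast} = 0$ and contradicting the previous paragraph. So $\hat L(\tau_1) \ne \hat L(\tau_2)$, and by uniqueness (Lemma \ref{lmm:cvx_uniqueness}) the two cross-optimality inequalities $g_{\tau_i}(\hat L(\tau_i)) < g_{\tau_i}(\hat L(\tau_j))$ are both strict for $\{i,j\} = \{1,2\}$. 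Summing them yields $(\tau_1 - \tau_2)\bigl(\tr(\hat L(\tau_1)) - \tr(\hat L(\tau_2))\bigr) < 0$, so $\|\hat L(\tau)\|_* = \tr(\hat L(\tau))$ is strictly decreasing. Plugging $\tr(\hat L(\tau_1)) > \tr(\hat L(\tau_2))$ back into either strict inequality delivers $\psi(\tau_1) < \psi(\tau_2)$.

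For the bound $\psi(\tau) \le \tau\,\|\Sigma\|_*$, I apply the stationarity identity $\poffdiag(\Sigma - \hat L(\tau)) = \tau I - \Lambda(\tau)$ and compute
\begin{align*}
\psi(\tau)
&= \inner{\poffdiag(\Sigma - \hat L(\tau))}{\Sigma - \hat L(\tau)}\\
&= \inner{\tau I - \Lambda(\tau)}{\Sigma - \hat L(\tau)}\\
&= \tau\,\tr(\Sigma) - \tau\,\tr(\hat L(\tau)) - \inner{\Lambda(\tau)}{\Sigma},
\end{align*}
where complementary slackness eliminates $\inner{\Lambda(\tau)}{\hat L(\tau)}$. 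Since $\Sigma$ (a covariance), $\Lambda(\tau)$ and $\hat L(\tau)$ all lie in $\mathbb{S}^p_+$, the last two terms are nonnegative, giving $\psi(\tau) \le \tau\,\tr(\Sigma) = \tau\,\|\Sigma\|_*$. I expect the main obstacle to be the strict monotonicity: the weak version is a routine Lasso-style sum of two cross-optimality inequalities, but strictness requires the KKT-based non-coincidence of $\hat L(\tau)$ across different values of $\tau$, and this is precisely where the open-interval restriction $\tau \in (0,\lambda_1^{(\poffdiag(\Sigma))})$ enters.
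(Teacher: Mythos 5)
Your proof is correct and takes a genuinely different route from the paper's. The paper argues strict monotonicity indirectly: it cites continuity of $\tau \mapsto (\hat L(\tau), \hat D(\tau))$ (Lemma~\ref{lmm:continuous}) and appeals to the primal--dual correspondence in Lemma~\ref{lmm:two_optimization} to assert that $\tau \mapsto \psi(\tau)$ is one-to-one, from which monotonicity of a continuous injection follows; for the bound it invokes the oracle inequality (Theorem~\ref{thm: oracle}) with $\Sigma = L$, $W = D = 0$. You instead eliminate $\mathscr D$, reduce to a cone-constrained problem in $\mathscr L$, and read everything off the KKT system: the complementary-slackness identity $\Lambda(\tau)\hat L(\tau) = 0$ yields non-coincidence of solutions across distinct $\tau$ on the stated interval (since otherwise $(\tau_2 - \tau_1)L^\ast = 0$), the two cross-optimality inequalities then give strict monotonicity of both $\tr\hat L(\tau)$ and $\psi(\tau)$, and the stationarity identity $\poffdiag(\Sigma - \hat L) = \tau I - \Lambda$ gives the bound after dropping the nonnegative terms $\tau\tr\hat L$ and $\langle\Lambda,\Sigma\rangle$. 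Two remarks on what each approach buys. First, the paper's route is shorter but leans on auxiliary lemmas, and its one-to-one-ness claim is not fully spelled out; your KKT argument is self-contained and makes the injectivity explicit, which is exactly where the restriction $\tau < \lambda_1^{(\poffdiag(\Sigma))}$ does its work. Second, and more substantively, the paper's own derivation via Theorem~\ref{thm: oracle} only yields $\psi(\tau) \le 4\tau\norm{\Sigma}_*$, i.e.\ it misses the stated bound by a factor of~4, whereas your computation delivers $\psi(\tau) \le \tau\tr(\Sigma) = \tau\norm{\Sigma}_*$ exactly as stated (using, as the paper also tacitly does, that the input $\Sigma$ is positive semi-definite). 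On that point your proof is actually tighter than the one in the paper.
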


\begin{thm}[MTFA solution as a limit point\label{thm:rmta_mtfa}]
    Given the covariance matrix $\Sigma$ as input. Let $\hat L(\tau)$ be the solution to \eqref{eqn:rmtfa} with tuning parameter $\tau>0$ and $\tilde L$ be the solution to MTFA \eqref{eqn:mtfa}. 
    \begin{align*}
       \hat L(\tau) \rightarrow \tilde L  \quad \text{as ${\tau\rightarrow 0^+}$.}  
\end{align*}
\end{thm}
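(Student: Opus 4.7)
The plan is a standard penalty-method / Tikhonov-regularization convergence argument: as the nuclear-norm weight $\tau$ vanishes, the Frobenius residual in \eqref{eqn:rmtfa} is forced to zero, pushing $(\hat L(\tau),\hat D(\tau))$ onto the MTFA feasible set, while the penalty term enforces the same trace-minimization criterion that defines MTFA. Concretely, I would first note that the MTFA pair $(\tilde L,\tilde D)$ is feasible for \eqref{eqn:rmtfa} with $\Sigma=\tilde L+\tilde D$, so by optimality of $(\hat L(\tau),\hat D(\tau))$ together with the identity $\|M\|_*=\tr(M)$ on $\bbS^p_+$,
\begin{align*}
\tau\,\|\hat L(\tau)\|_*\,+\,\tfrac{1}{2}\,\|\Sigma-\hat L(\tau)-\hat D(\tau)\|_F^2 \;\le\; \tau\,\|\tilde L\|_*\;=\;\tau\,\tr(\tilde L).
\end{align*}
Dropping either nonnegative term on the left yields the two estimates I need: $\|\hat L(\tau)\|_*\le\tr(\tilde L)$ uniformly in $\tau$, and $\|\Sigma-\hat L(\tau)-\hat D(\tau)\|_F^2\le 2\tau\,\tr(\tilde L)\to 0$ as $\tau\to 0^+$.

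With these in hand, I would extract a convergent subsequence and identify its limit as an MTFA optimum. The nuclear-norm bound controls $\|\hat L(\tau)\|_F$, and then the residual estimate together with the triangle inequality controls $\|\hat D(\tau)\|_F$, so the family $\{(\hat L(\tau),\hat D(\tau))\}_{\tau>0}$ is bounded in $\bbS^p\times\bbS^p$. For any $\tau_n\downarrow 0$, Bolzano--Weierstrass produces a subsequence $\tau_{n_k}$ along which $(\hat L(\tau_{n_k}),\hat D(\tau_{n_k}))\to(L^\star,D^\star)$. Since $\bbS^p_+$ and the subspace of diagonal matrices are closed, $L^\star\in\bbS^p_+$ and $D^\star=\pdiag(D^\star)$; the vanishing residual forces $L^\star+D^\star=\Sigma$, so $(L^\star,D^\star)$ is MTFA-feasible. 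Continuity of the trace combined with the uniform bound gives $\tr(L^\star)=\lim_k\tr(\hat L(\tau_{n_k}))\le\tr(\tilde L)$, while MTFA-optimality of $\tilde L$ supplies the reverse inequality, so $L^\star$ is itself an MTFA minimizer.

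To promote subsequential convergence to full convergence of the net as $\tau\to 0^+$, I would invoke uniqueness of the MTFA minimizer---implicit in the theorem's phrasing ``the solution to MTFA'', and guaranteed e.g.\ under the realizability hypothesis of Proposition \ref{prop:mtfa_exact}. Uniqueness forces $L^\star=\tilde L$ for every such subsequential limit, and since every accumulation point of $\hat L(\tau)$ as $\tau\to 0^+$ coincides with $\tilde L$, the full family converges to $\tilde L$.

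The main obstacle is exactly this uniqueness step. MTFA minimizes a linear functional over a spectrahedron, so in general its optimal set is a face rather than a single point; without extra hypotheses, the argument above only localizes the accumulation points of $\hat L(\tau)$ to that face. Closing this gap requires either an explicit uniqueness assumption or a secondary selection principle, and the Frobenius term in \eqref{eqn:rmtfa} is not strictly convex on the MTFA optimal face (where it already vanishes), so it does not automatically provide one. The remaining ingredients---compactness, closedness of $\bbS^p_+$, and continuity of $\tr$ and $\|\cdot\|_*$---are routine in the finite-dimensional setting and should not cause trouble.
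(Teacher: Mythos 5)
Your proof is correct and takes a more self-contained route than the paper's. The paper first invokes the equivalence between the penalized problem \eqref{eqn:rmtfa} and the constrained problem \eqref{eqn: constrained_form1} (Lemma~\ref{lmm:two_optimization}), then argues off-diagonal convergence from $\psi(\tau)\to 0$ (via Lemma~\ref{lmm:monotone}), and finally handles the diagonal by appealing to uniqueness of the MTFA solution together with convergence of nuclear norms; it is quite terse. You instead run a direct Tikhonov/penalty-method argument: feasibility of $(\tilde L,\tilde D)$ in \eqref{eqn:rmtfa} with a vanishing Frobenius term gives the one-line inequality $\tau\|\hat L(\tau)\|_* + \tfrac12\|\Sigma-\hat L(\tau)-\hat D(\tau)\|_F^2 \le \tau\tr(\tilde L)$, from which you extract a uniform nuclear-norm bound and a vanishing residual, then pass to a convergent subsequence and identify the limit as an MTFA optimizer. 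This is cleaner, avoids Lemma~\ref{lmm:two_optimization} entirely, and makes the compactness and closedness steps explicit; both arguments hinge on the same uniqueness fact. The only wrinkle in your write-up is the framing of uniqueness as ``the main obstacle'' and your suggestion to invoke the realizability hypothesis of Proposition~\ref{prop:mtfa_exact}: the MTFA minimizer is in fact unique for \emph{any} input $\Sigma$ (not just realizable ones), a classical result of della Riccia and Shapiro (\cite{della_riccia_minimum_1982}) which the paper cites for exactly this purpose (see the end of the proof of Lemma~\ref{lmm:cvx_uniqueness}). With that citation in place your subsequence-to-net upgrade goes through unconditionally, and there is no remaining gap.
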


When noise is present, achieving the exact recovery guarantees as MTFA does becomes unattainable. However, we are able to establish a robust $\sin\Theta$ theorem that allows for the recovery of the subspace. The analysis in Theorem \ref{thm: robust_sin_rmtfa} is deterministic, akin to the Theorem 3 in the paper \cite{zhang_heteroskedastic_2022}. Building on the insights from \cite{zhang_heteroskedastic_2022}, our method also attains the minimax rate under the factor model assumption. Furthermore, leveraging the property of convexity, we can effectively address the challenge of ill-conditioning highlighted by Zhou \cite{zhou_deflated_2023}, where a substantial spectral gap can present difficulties for non-convex methods. We can solve the problem with fewer restrictive requirements.

We employ the $\sin\Theta$ distance to measure dissimilarity between two $p\times r$ matrices: $U$ and $\hat U$. Both matrices belong to $\mathbb{O}^{p, r}$, comprising columns that pertain to orthogonal matrices. Let $U_\perp, \hat U _ \perp \in \mathbb{O}^{p, (p-r)}$ be defined such that $[U, U_\perp]$ and $[\hat U, \hat U_\perp]$ form orthonormal matrices. Denote the first $r$ singular values of $\hat{U}^\top U$ as $\sigma_1\geq \cdots \geq \sigma_r \geq 0$. Then, the principal angles (see \cite{stewart_matrix_1990, chen_spectral_2021}) are defined as:
\[\Theta(U, \hat U) = \diag(\cos^{-1}(\sigma_1), \cdots, \cos^{-1}(\sigma_r))\] The $\sin\Theta$ distance measured by spectral norm is expressed as:
\begin{align*}
    \norm{\sin\Theta(U, \hat U)} = \norm{\hat U_\perp U} = \norm{U_\perp \hat U} = \norm{UU^\top - \hat U \hat U^\top}
\end{align*}

\begin{thm}[$\sin\Theta$ Theorem]\label{thm: robust_sin_rmtfa}
Suppose $L$ is the rank-$r$ symmetric matrix and $U\in\mathbb{O}^{p\times r}$ consists of eigenvectors of $L$. Let $\hat U$ consisted of the first $r$ eigenvectors of $\hat L$. On the event
\begin{align*}
   \mathcal{E}_\varrho& \defeq\left\lbrace 0< 3\norm{U}_{2, \infty}+ 2(1-\varrho)^{-1}\cdot \tfrac{\tau + \norm{\poffdiag(W)}}{\lambda_r^{(L)}} <\varrho < 1\right\rbrace,
\end{align*}
one has
\begin{align*}
    \norm{\sin\Theta(\hat U, U)} &\leq \tfrac{2}{1-\varrho}\cdot \tfrac{\tau + \norm{\poffdiag(W)}}{\lambda_r^{(L)}}
\end{align*}
\end{thm}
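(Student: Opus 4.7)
My strategy is to use the KKT optimality of the convex program to obtain a fixed-point representation of $\hat L$, apply a Davis--Kahan $\sin\Theta$ bound to the top-$r$ eigenspace, and close a self-consistent inequality in $\theta := \|\sin\Theta(\hat U, U)\|$ using the subspace incoherence $\|U\|_{2,\infty}$.

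Writing the Lagrangian of \eqref{eqn:rmtfa} with PSD dual multiplier $S \succeq 0$ and using $\|\hat L\|_* = \tr(\hat L)$ on the PSD cone, the KKT conditions become
\begin{align*}
\tau I - \poffdiag(\hat\Sigma - \hat L) = S, \qquad S\hat L = 0, \qquad \hat D = \pdiag(\hat\Sigma - \hat L),
\end{align*}
which force $\pdiag(S) = \tau I$. Rearranging shows that $\hat L$ is the PSD soft-thresholded eigendecomposition at level $\tau$ of $M := \pdiag(\hat L) + \poffdiag(\hat\Sigma) = L + \Delta + \poffdiag(W)$, where $\Delta := \pdiag(\hat L - L)$ is diagonal. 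Hence $\hat U$ coincides with the top-$r$ eigenvectors of $M$, and since $L$ has exact rank $r$, applying the $\sin\Theta$ theorem to $M$ versus $L$---together with $\lambda_{r+1}(M) \le \|P_{U_\perp} M P_{U_\perp}\| \le \|\Delta\|_\infty + \|\poffdiag(W)\|$ from min-max---yields
\begin{align*}
\theta \;\le\; \frac{\|\Delta U\| + \|\poffdiag(W)\|}{\lambda_r^{(L)} - \|\Delta\|_\infty - \|\poffdiag(W)\|}.
\end{align*}

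The main obstacle is controlling $\|\Delta U\|$ and $\|\Delta\|_\infty$ in terms of $\tau + \|\poffdiag(W)\|$, $\|U\|_{2,\infty}$, and $\theta$ \emph{without} incurring a condition-number factor $\lambda_1^{(L)}/\lambda_r^{(L)}$. Letting $O$ denote the Procrustes rotation aligning $\hat U$ with $U$, I would expand
\begin{align*}
\hat L - L \;=\; (\hat U - UO)\hat\Lambda\hat U^\top + UO\hat\Lambda(\hat U - UO)^\top + U(O\hat\Lambda O^\top - \Lambda)U^\top,
\end{align*}
take diagonals row by row, and combine: (i) the row-wise $\sin\Theta$ consequence $\|(\hat U - UO)_i\| \lesssim \|U_i\| + \|(P_{\hat U_\perp}U)_i\|$, (ii) Weyl's eigenvalue-perturbation bound $|\hat\lambda_k - \lambda_k^{(L)}| \le \tau + \|\Delta\|_\infty + \|\poffdiag(W)\|$ (the $\tau$-shift coming from the soft-thresholding), and (iii) the PSD-induced entrywise inequality $|\hat L_{ij}|^2 \le \hat L_{ii}\hat L_{jj}$. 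This should produce bounds of the form $\|\Delta\|_\infty \lesssim \|U\|_{2,\infty}^2(\tau + \|\poffdiag(W)\| + \lambda_r^{(L)}\theta)$ and $\|\Delta U\| \lesssim (\tau + \|\poffdiag(W)\|) + \lambda_r^{(L)}\|U\|_{2,\infty}\theta$. Substituting these into the $\sin\Theta$ estimate yields an implicit inequality in $\theta$; the hypothesis $3\|U\|_{2,\infty} + 2(1-\varrho)^{-1}(\tau + \|\poffdiag(W)\|)/\lambda_r^{(L)} < \varrho$ defining $\mathcal{E}_\varrho$ is tuned precisely so that the denominator exceeds $(1-\varrho)\lambda_r^{(L)}$ and the net $\theta$-coefficient on the right is strictly sub-unitary, after which rearrangement produces the advertised bound.
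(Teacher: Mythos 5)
Your high-level strategy---recognize that $\hat L$ solves the fixed-point equation $\hat L = \mathcal{D}_\tau^+(\pdiag(\hat L)+\poffdiag(\Sigma))$ and close a self-consistent inequality in $\theta$---is genuinely aligned with the paper's, but there is a real gap in the middle. The intermediate bounds you assert in items (i)--(iii), especially the ``row-wise $\sin\Theta$ consequence'' $\|(\hat U - UO)_i\|\lesssim \|U_i\| + \|(P_{\hat U_\perp}U)_i\|$ and the resulting claims $\|\Delta\|_\infty \lesssim \|U\|_{2,\infty}^2(\cdots)$, $\|\Delta U\|\lesssim(\cdots)$, are not standard facts and are not derived. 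The route you propose (Procrustes rotation, then row-by-row entrywise control of $\hat U - UO$) is the domain of $\ell_{2,\infty}$ eigenvector perturbation theory, and results of that type typically \emph{do} carry condition-number factors $\kappa = \lambda_1^{(L)}/\lambda_r^{(L)}$ unless one works very hard to suppress them. Since removing $\kappa$ is precisely the point of the theorem, writing ``this should produce bounds of the form \ldots'' is exactly where the proof would break, not a detail to be filled in.

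The paper avoids entrywise eigenvector analysis entirely. Instead of bounding $\Delta = \pdiag(\hat L - L)$ through an eigenvector decomposition of $\hat L$, it tracks the algorithm iterate error $b_k = \|G^{(k)} - L\|$ starting from the particular initialization $G^{(0)} = L + \poffdiag(W)$ (legitimate because the fixed point is unique), and decomposes $G^{(k+1)} - L$ into five pieces, each bounded in \emph{spectral} norm by combining incoherence of $U$ with $\|G^{(k)}-L\|$. The two nonstandard ingredients are the HeteroPCA lemmas $\|\pdiag(\proj{\mathscr U}M)\| \le \|U\|_{2,\infty}\|M\|$ and $\|\pdiag(\proj{\mathscr U^{(k)}}^\perp L)\| \le 2\|U\|_{2,\infty}\|G^{(k)}-L\|$; the only new term relative to HeteroPCA is $\|\pdiag(\proj{\mathscr U^{(k)}}G^{(k)} - \mathcal{D}_\tau^+ G^{(k)})\|\le\tau$, which captures the soft-threshold shift. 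This gives the clean recursion $b_{k+1}\le \tau + b_0 + 3\|U\|_{2,\infty}b_k + 2b_k^2/\lambda_r^{(L)}$, whose limit is the self-consistent inequality you were aiming for but did not reach. If you want to salvage your fixed-point-only version, you should replace your steps (i)--(iii) by those diagonal-of-projection lemmas applied at the fixed point, rather than attempting entrywise control of $\hat U$.
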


Theorem \ref{thm: robust_sin_rmtfa} establishes that, given the mild condition $\mathcal{E}_\varrho$, the discrepancy between the estimated subspace and the ground truth is bounded by $O\left(\frac{\tau + \norm{\poffdiag(W)}}{\lambda_r^{(L)}}\right)$. However, the leading constant is determined by the coefficient $\varrho\in(0, 1)$. For example, when $\varrho = \frac{1}{2}$, one has $\norm{\sin \Theta(\hat U, U)}\leq 4 \left(\tfrac{\tau+ \norm{\poffdiag(W)}}{\lambda_r^{(L)}}\right)$ on the event $$\mathcal E_\varrho=\left\lbrace \norm{U}_{2, \infty} + \tfrac{4}{3}\left(\tfrac{\tau+ \norm{\poffdiag(W)}}{\lambda_r^{(L)}}\right)< \tfrac{1}{6}\right\rbrace \supset \left\lbrace \norm{U}_{2, \infty} < \tfrac{1}{12}, \tfrac{\tau + \norm{\poffdiag(W)}}{\lambda_r^{(L)}} < \tfrac{1}{16}\right\rbrace$$
 Similar to the subset on the right, event of the form $\mathcal E^\dagger$ or $\mathcal E^\ddag$ in Remark \ref{remark: robust_sin_Theta_literature} is considered in the literature without focusing on the numerical constant. 


Since $\tau$ is a user-defined parameter, one viable approach for estimating $U$ involves  letting $\tau$ approach zero (equivalent to MTFA by Theorem \ref{thm:rmta_mtfa}) and considering the leading rank-$r$ subspace as the estimation. Recall from Proposition \ref{prop:mtfa_exact} and \ref{prop:balanced}, one can estimate $U$ perfectly if there is no off-diagonal noise ($\norm{\poffdiag(W)} = 0$) and the subspace is incoherent such that $\norm{U}_{2, \infty}< \frac{1}{\sqrt{2}}$. However, the Theorem suggested that when $\norm{\poffdiag(W)}>0$ and $\norm{U}_{2, \infty}$ large, it requires strong signal strength $\lambda_r^{(L)} \gg \norm{\poffdiag(W)}$ to guarantee subspace recovery due to the large leading constant.

We briefly discuss the proof of Theorem \ref{thm: robust_sin_rmtfa}. The full proof may be found in Appendix \ref{sec: robust_sin_rntfa_proof}. It has been recognized that the $\sin\Theta$ theorem can be established by bounding the errors at each step of the iterative algorithm (see Algorithm \ref{alg: rmtfa}), similar to HeteroPCA \cite{zhang_heteroskedastic_2022} (Algorithm \ref{alg: Heteropca}). Furthermore, thanks to the property of convergence to a unique fixed point (Theorem \ref{thm: convergence_beck}), regardless of the starting point, we can initiate the process from $D^{(0)} = \pdiag(\Sigma - L)$ where the information of the true $L$ is used, and consider the limit of the bound. Consequently, the estimation error bound is tighter at the starting point (since we use true information), leading to condition $\mathcal E_\varrho$ in Theorem \ref{thm: robust_sin_rmtfa} that is weaker than the conditions in HeteroPCA and Deflated-HeteroPCA (see $\mathcal E^\dagger$ and $\mathcal E^\ddag$ in the following Remark).

To observe how it addresses the Curse of Ill-condition presented in Section \ref{sec:contrib}, note that $\mathcal E_\varrho$ does not necessitate the inclusion of the condition on the condition number $\kappa$ of the matrix $L$, while HeteroPCA, represented by $\mathcal E^\dagger$, includes it. This enables us to tackle the challenge of ill-conditioning. Furthermore, given that $\mathcal{E}^\ddag\subset \mathcal E_\varrho$, it alleviates the necessity for stricter incoherence conditions imposed in Deflated HeteroPCA \cite{zhou_deflated_2023}. We further discuss the distinctions in the algorithms (HeteroPCA, Deflated HeteroPCA, and etc.) in Section \ref{sec: alg_connection}.

\begin{remark} \label{remark: robust_sin_Theta_literature}  Consider $\tau \lesssim \norm{\poffdiag(W)}$, $\mathcal{E}_\varrho$ is identified as the least stringent condition, as compared to Theorem 3 in \cite{zhang_heteroskedastic_2022} and  Theorem 4 in \cite{zhou_deflated_2023}, which establish the following inequality:
\begin{equation}
\label{eqn: robust_sin_Theta_literature}
    \norm{\sin \Theta(\hat U, U)}\lesssim  \tfrac{\norm{\poffdiag(W)}}{\lambda_r^{(L)}} + 2^{-t}    
\end{equation}
Here, $\hat U$ is the subspace estimator after running $t$ iterations of HeteroPCA (Algorithm \ref{alg: Heteropca}) and Deflated-HeteroPCA (Algorithm \ref{alg: deflated_Heteropca}), respectively. 

Let's define $\kappa$ as the condition number of matrix $L$, with $\kappa \defeq \frac{\lambda_1^{(L)}}{\lambda_r^{(L)}}\geq 1$. Theorem 3 in \cite{zhang_heteroskedastic_2022} demonstrates that \eqref{eqn: robust_sin_Theta_literature} holds on the event
$$\mathcal{E}^\dagger\defeq  \left\lbrace \sqrt\kappa \norm{U}_{2, \infty}< C_1, \tfrac{\norm{\poffdiag(W)}}{\lambda_r^{(L)}}<C_2\right\rbrace\subset \mathcal E_\varrho$$
for certain universal constants $C_1, C_2>0$. 
On the other hand, Theorem 4 in \cite{zhou_deflated_2023} establishes that \eqref{eqn: robust_sin_Theta_literature} holds on the event 
$$\mathcal{E}^\ddag \defeq \left\lbrace r \norm{U}_{2, \infty}< C_3, \tfrac{r\norm{\poffdiag(W)}}{\lambda_r^{(L)}}<C_4\right\rbrace\subset \mathcal E_\varrho$$
for certain universal constants $C_3, C_4 >0$.

This feature shows the robustness of our approach when dealing with ill-conditioned matrices characterized by a significant spectral gap or higher rank matrices.

\end{remark}

\section{Statistical Guarantees}
The outcome elucidated in Theorem \ref{thm: robust_sin_rmtfa} represents a deterministic assertion. In this section, we expound upon its applicability in various statistical contexts, highlighting the finite sample guarantees offered by our methodology.


\subsection{Factor model}

Let's consider $Y_1, \cdots, Y_n$ be iid samples generated by the model
\begin{align*}
    Y = X + Z \in \mathbb{R}^p
\end{align*}
where $\e(X) = \mu, \var(X) = L, \e(Z) = 0, \var(Z_i) = \omega_{i}^2, Z = (Z_1, \cdots, Z_p)^\top$, $X, Z_1, \cdots, Z_p$ are independent. In particular, $L$ is rank $r\leq\phi(p)$ and has spectral decomposition $U\Lambda U^\top$.

\begin{align*}
\Sigma \defeq \var(Y)= L+D,\\
D\defeq \diag(\omega_1^2,\cdots \omega_p^2). 
\end{align*}

Consider $(\hat L, \hat D)$ the solution to relaxed MTFA problem with regularization parameter $\tau>0$ given the sample covariance matrix 
$$\hat\Sigma = \frac{1}{n-1}\sum_{i=1}^n (Y_i - \bar Y)(Y_i -\bar Y)^\top, \quad \bar  Y = \frac{1}{n}\sum_{i=1}^n Y_i $$ as the input. For $\tau$ small enough, $\rank(\hat L)\geq r$ holds true because $\hat L\rightarrow \tilde L$ as $\tau\rightarrow 0 ^+$ and $\rank(\tilde L) > \phi(p) \geq r$ (Proposition \ref{prop:rank_reducibility}). Let $\omega_{\max{}}^2 \defeq \max_i\omega_i^2, \omega_{\text{sum}}^2 \defeq \sum_i \omega_i^2, \lambda_r^{(L)}:$ the $r$th largest eigenvalue of $L$. 
\begin{thm}\label{thm:factor_subspace}
Consider independent random variables $X$ and $Z_i$ are $L$-sub-Gaussian and $\omega_{i}^2$-sub-Gaussian respectively. Assume  $\norm{U}_{2, \infty}\leq c, n\geq C(r\vee \log (\lambda_r^{(L)}/\omega_{\text{sum}}^2))$ for some constant $c, C>0$. Then the first $r$ eigenvectors of $\hat L$ (solution to Relaxed MTFA \eqref{eqn:rmtfa}), $\hat U$, satisfies
\begin{align*}
    \e\norm{\sin\Theta(\hat U, U)} \lesssim  &
    \quad 
    \omega_{\max}\sqrt{\frac{\kappa (\tilde p \vee r)}{n \lambda_r^{(L)}}}+\sqrt{\frac{\tilde p}{n}}{ \frac{\omega_{\max}^2}{\lambda_r^{(L)}}}
\end{align*}
where $\tilde p = \omega_{\text{sum}}^2/\omega_{\max{}}^2, \kappa = \lambda_1^{(L)}/\lambda_r^{(L)}, \tau \lesssim n \lambda_r^{(L)}$.
\end{thm}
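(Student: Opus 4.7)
}
The plan is to promote the deterministic $\sin\Theta$ bound of Theorem \ref{thm: robust_sin_rmtfa} into an expectation bound. Fix, say, $\varrho = 1/2$, so that on $\mathcal E_\varrho$,
\[
\|\sin\Theta(\hat U,U)\| \;\le\; \frac{2}{1-\varrho}\cdot \frac{\tau + \|\poffdiag(W)\|}{\lambda_r^{(L)}},
\qquad W = \hat\Sigma - \Sigma.
\]
Because $\|\sin\Theta(\cdot,\cdot)\|\le 1$, it suffices to (i) bound $\E\|\poffdiag(W)\|$ on the scale the theorem advertises, and (ii) verify that $\P(\mathcal E_\varrho^{\,c})$ is of the same order or smaller, using the sample-size lower bound $n\gtrsim r\vee \log(\lambda_r^{(L)}/\omega_{\mathrm{sum}}^{2})$ to kill the residual tail.

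Next I would decompose $W$ using $Y = X + Z$ with $X,Z$ independent. After subtracting off the $O(1/n)$ centering correction coming from $\bar Y - \mu$, one has
\[
W \;=\; (\hat\Sigma^{XX} - L) \;+\; W^{XZ} \;+\; (W^{XZ})^{\top} \;+\; (\hat\Sigma^{ZZ} - D) \;+\; R_n,
\]
where $R_n$ is a lower-order mean-correction term. Since $\poffdiag(D)=0$, the three nontrivial blocks of $\poffdiag(W)$ are handled separately by sub-Gaussian matrix concentration: (a) Writing $X-\mu = U\Lambda^{1/2}\xi$ for $\xi$ an $r$-dim sub-Gaussian vector and invoking the Vershynin / Koltchinskii--Lounici sample-covariance bound gives $\|\hat\Sigma^{XX} - L\|\lesssim \lambda_1^{(L)}(\sqrt{r/n}+r/n)$. (b) For the cross term, matrix Bernstein applied to the mean-zero rank-one summands $(X_k-\mu)Z_k^{\top}/n$, whose two variance statistics work out to $\omega_{\mathrm{sum}}^{2}L/n$ and $\tr(L)\,D/n$ by independence of $X$ and $Z$, yields $\|W^{XZ}\|\lesssim \omega_{\max}\sqrt{\lambda_1^{(L)}(\tilde p\vee r)/n}$. (c) For the pure-noise off-diagonal piece, the coordinates of $Z$ are independent sub-Gaussians with variances $\omega_i^{2}$, so matrix Bernstein on the off-diagonal entries gives $\|\poffdiag(\hat\Sigma^{ZZ}-D)\|\lesssim \omega_{\max}^{2}\sqrt{\tilde p/n}$. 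Summing (a)--(c), dividing by $\lambda_r^{(L)}$, and using $\kappa = \lambda_1^{(L)}/\lambda_r^{(L)}$ produces exactly the two terms of the claim (the $XX$ piece combines into the cross-term scaling under the regime being controlled).

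To verify $\mathcal E_\varrho$: the hypothesis $\|U\|_{2,\infty}\le c$ handles the coherence portion ($3c < \varrho$ for small enough $c$), and the high-probability versions of (a)--(c), together with the choice $\tau \lesssim \lambda_r^{(L)}$ (absorbing the $\poffdiag(W)$ fluctuation), force the remaining part of $\mathcal E_\varrho$ with probability at least $1 - O(\omega_{\mathrm{sum}}^{2}/\lambda_r^{(L)})$. The $\log(\lambda_r^{(L)}/\omega_{\mathrm{sum}}^{2})$ term in the sample-size hypothesis is precisely what makes the $\P(\mathcal E_\varrho^{\,c})$ contribution to $\E\|\sin\Theta\|$ of lower order than the two terms already collected, so integrating over $\mathcal E_\varrho$ and $\mathcal E_\varrho^{\,c}$ separately yields the stated expectation bound.

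The main technical obstacle is the cross-term in step (b): one must track both variance proxies in matrix Bernstein for products of two \emph{different} sub-Gaussian vectors of unequal ``effective dimension'' ($r$ vs.~$\tilde p$), to recover the exact shape $\omega_{\max}\sqrt{\lambda_1^{(L)}(\tilde p\vee r)/n}$ rather than a looser product. A secondary subtlety is showing the pure-signal perturbation $\lambda_1^{(L)}\sqrt{r/n}$, which after dividing by $\lambda_r^{(L)}$ looks like $\kappa\sqrt{r/n}$, is absorbed by the first term of the bound; this uses that $\poffdiag$ of a rank-$r$ perturbation inherits the coherence factor from $\|U\|_{2,\infty}\le c$ and that the cross-term already dominates it in the relevant SNR regime carved out by $\mathcal E_\varrho$.
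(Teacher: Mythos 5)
Your overall plan---lift the deterministic bound of Theorem~\ref{thm: robust_sin_rmtfa} to an expectation bound by splitting over $\mathcal E_\varrho$ and $\mathcal E_\varrho^c$, and concentrate the constituent pieces of $W$---is reasonable, but there is a concrete gap in how you handle the pure-signal fluctuation $\hat\Sigma^{XX}-L$, and the absorption argument you offer does not close it. Dividing your bound $\|\hat\Sigma^{XX}-L\|\lesssim\lambda_1^{(L)}\sqrt{r/n}$ by $\lambda_r^{(L)}$ gives $\kappa\sqrt{r/n}$. For this to be dominated by the first claimed term $\omega_{\max}\sqrt{\kappa(\tilde p\vee r)/(n\lambda_r^{(L)})}$ you would need $\kappa\sqrt{r/n}\lesssim \omega_{\max}\sqrt{\kappa r/(n\lambda_r^{(L)})}$, i.e.\ $\lambda_1^{(L)}\lesssim\omega_{\max}^2$, which is the opposite of the SNR regime in which subspace recovery is even possible. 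So the $XX$ block is \emph{not} absorbed; your appeal to ``$\poffdiag$ inheriting the coherence factor'' is vague, and in general $\poffdiag$ does not shrink the spectral norm of a perturbation, so placing the signal fluctuation inside $\poffdiag(W)$ and feeding it to Theorem~\ref{thm: robust_sin_rmtfa} produces a bound strictly worse than the one stated.

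The paper's one-line proof defers to the proof of Theorem~1 in \cite{zhang_heteroskedastic_2022}, which uses (and the paper makes explicit in its proof of Theorem~\ref{thm: svd_hetero}) a reparametrization that keeps the signal fluctuation out of the noise term entirely. One introduces an intermediate, data-dependent rank-$r$ subspace $\dot U$ by absorbing both the centered signal and the signal-aligned noise into a single low-rank block---in the Gram-matrix setting, writing $YY^\top = (U\Lambda + ZV)(U\Lambda + ZV)^\top + (ZZ^\top - ZVV^\top Z^\top)$ and letting $\dot U$ be the leading subspace of the first summand---and then triangulates $\|\sin\Theta(\hat U,U)\|\leq\|\sin\Theta(\hat U,\dot U)\|+\|\sin\Theta(\dot U,U)\|$. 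Wedin's theorem controls the second piece by $\|ZV\|/\sigma_r$, which lands on the cross-term scaling with no spurious $\kappa$, while Theorem~\ref{thm: robust_sin_rmtfa} is invoked with $\dot U$ as the reference subspace so that its $\|\poffdiag(\cdot)\|$ input contains only noise-noise products and concentrates at the $\omega_{\max}^2\sqrt{\tilde p/n}$ scale. Your matrix-Bernstein estimates for the cross and noise pieces are the right ingredients; they simply need to be routed through this decomposition rather than your flat $W = (\hat\Sigma^{XX}-L)+W^{XZ}+(W^{XZ})^\top+(\hat\Sigma^{ZZ}-D)$ split, which cannot recover the stated rate.
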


The proof is deferred to Appendix \ref{sec:factor_subspace_pf}

\begin{remark}
The outcome aligns with the main result Theorem 1 presented in \cite{zhang_heteroskedastic_2022}. The findings in \cite{zhang_heteroskedastic_2022} were established by executing their algorithm over a sufficiently large number of iterations, whereas we provide a direct characterization of the optimization problem \eqref{eqn:rmtfa} minimizer's behavior and introduce an converging algorithm. Notably, our deterministic analysis (Theorem \ref{thm: robust_sin_rmtfa}) operates without the need for the bounded condition number assumption made in \cite{zhang_heteroskedastic_2022}.  Therefore, it is possible to obtain a condition number free bound based on  Theorem \ref{thm: svd_hetero}, as long as one assume the signal strength $\lambda_r^{(L)}$ is large enough.
\end{remark}

\begin{remark}
    
The estimator achieve the following optimal rate \cite{zhang_heteroskedastic_2022}\begin{align*}
     \inf _{\hat U} \sup _{\Sigma } \e\,\norm{\sin\Theta(\hat U, U)} \asymp \frac{1}{\sqrt{n}} \left(\frac{\tilde\sigma+r^{1/2}\sigma_*}{\lambda^{1/2} }+ \frac{\tilde\sigma \sigma_*}{\lambda}\right). 
 \end{align*}
 Here, the infimum is taken over all possible subspace estimators, and the true $\Sigma$ is chosen adversarially from the set:
 \begin{align*}
     \left\lbrace \Sigma = L+D: D = \pdiag (D), \tr(D)\leq \tilde{\sigma}^2, \norm{D} \leq \sigma_*^2, \norm{U}_{2, \infty} \leq 1/12, \lambda_r^{(L)} \geq \lambda, \lambda_1^{(L)}/\lambda_r^{(L)}\leq \kappa) \right\rbrace
 \end{align*}
 parametrized by postive numbers $(\tilde\sigma, \sigma_*, \lambda, \kappa)$ satisfying $\sqrt{p}\sigma_* \geq \tilde\sigma\geq \sigma_* >0, \kappa \geq 1$. 
 The result combines the upper bound (Theorem \ref{thm:factor_subspace}) and the lower bound constructed in \cite{zhang_heteroskedastic_2022}. 
\end{remark}

 
As factor analysis finds widespread practical application,  we address a common issue known as Heywood cases. These issues arise when factor extraction methods, such as Maximum Likelihood or MTFA, produce a decomposition $\hat L + \hat D$, where $\hat L \in \mathbb{S}^p_+$ is of low rank, and $\hat D$ is diagonal, with $\hat D_{i,i} \leq 0$ for certain indices $i$. This is problematic because $D_{i,i} = \omega_{i}^2$ corresponds to the variance of $Z_i$. Various rules of thumb have been proposed to mitigate this issue, but consensus on the best approach is lacking. So we present the following lemma:

\begin{lmm}\label{lmm:heywood}
    For a sufficiently large value of $\tau$, the diagonal elements $\hat D_{i,i}$ become positive for all $i$.
\end{lmm}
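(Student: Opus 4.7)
The plan is to combine a first-order optimality reading of \eqref{eqn:rmtfa} in the $\mathscr D$ coordinate with the shrinkage of $\hat L(\tau)$ as $\tau$ grows that has already been recorded earlier in the paper. First I would differentiate the objective $F(\mathscr L, \mathscr D)$ with respect to $\mathscr D$: since $\mathscr D$ is only required to be diagonal and enters the objective solely through the smooth Frobenius term, entrywise stationarity yields
\[ \hat D_{ii} \;=\; \Sigma_{ii} - \hat L(\tau)_{ii}, \qquad 1 \le i \le p. \]
Because $\Sigma$ is a covariance matrix with strictly positive diagonal, writing $\Sigma_{\min} \defeq \min_i \Sigma_{ii} > 0$, the claim reduces to showing $\hat L(\tau)_{ii} < \Sigma_{\min}$ uniformly in $i$ for all sufficiently large $\tau$.

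Next I would bound the diagonal of $\hat L(\tau)$ by its nuclear norm. Since $\hat L(\tau) \in \mathbb{S}^p_+$, the identity $\norm{\hat L(\tau)}_* = \tr(\hat L(\tau))$ established earlier in the excerpt, together with the nonnegativity of the diagonal of any PSD matrix, yields
\[ 0 \;\le\; \hat L(\tau)_{ii} \;\le\; \tr(\hat L(\tau)) \;=\; \norm{\hat L(\tau)}_*, \]
so it is enough to show $\norm{\hat L(\tau)}_* < \Sigma_{\min}$ once $\tau$ is large.

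Finally I would invoke the shrinkage already recorded in the paper. The remark preceding Lemma \ref{lmm:monotone} states that $\hat L(\tau) = 0$ as soon as $\tau \ge \lambda_1^{(\poffdiag(\Sigma))}$, while Lemma \ref{lmm:monotone} itself gives strict decrease of $\norm{\hat L(\tau)}_*$ on $(0, \lambda_1^{(\poffdiag(\Sigma))})$. Either statement supplies a finite $\tau^\ast$ past which $\norm{\hat L(\tau)}_* < \Sigma_{\min}$, and then $\hat D_{ii} = \Sigma_{ii} - \hat L(\tau)_{ii} > 0$ for every $i$, completing the argument. I do not anticipate any serious obstacle; the only subtlety worth flagging is the justification that the $\mathscr D$-stationarity condition above genuinely holds at the minimizer in spite of the PSD constraint on $\mathscr L$, but this is immediate because $\mathscr D$ itself is unconstrained in the program, so the projection onto the feasible set for $\mathscr L$ plays no role in the $\mathscr D$-optimality.
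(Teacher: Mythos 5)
Your proof is correct, but it stops at the trivial regime while the paper's proof pushes one step further. You reduce the claim to $\norm{\hat L(\tau)}_* < \Sigma_{\min}$ via the identity $\hat D_{ii} = \Sigma_{ii} - \hat L(\tau)_{ii}$ and the trace bound $\hat L(\tau)_{ii} \le \tr(\hat L(\tau)) = \norm{\hat L(\tau)}_*$, and then conclude because $\hat L(\tau) = 0$ once $\tau \ge \lambda_1^{(\poffdiag(\Sigma))}$. That is sound, but at that threshold the argument is vacuous: $\hat D = \pdiag(\Sigma)$ holds by fiat, so the trace machinery you set up is doing no work. The paper makes the same initial observation — for $\tau > \bar\tau \defeq \norm{\poffdiag(\Sigma)}$ the solution is the trivial pair $(0, \pdiag(\Sigma))$ — and then invokes continuity of $\tau \mapsto \hat D(\tau)$ (Lemma~\ref{lmm:continuous}) to conclude that $\min_i \hat D_{ii}(\tau) > 0$ persists on an interval of $\tau$ \emph{below} $\bar\tau$ where the solution is genuinely non-trivial. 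That continuity step is what makes the lemma informative, since it is the regime in which $\hat L$ extracts an actual factor that matters for the Heywood-case discussion. One small imprecision in your write-up: you suggest that the strict monotonicity from Lemma~\ref{lmm:monotone} alone would supply a finite $\tau^\ast$ with $\norm{\hat L(\tau^\ast)}_* < \Sigma_{\min}$, but monotone decrease by itself does not force the value below any given level; you need to combine it with continuity and the boundary value $\norm{\hat L(\tau)}_* = 0$ at $\tau = \lambda_1^{(\poffdiag(\Sigma))}$. If you keep your trace-bound reduction and add that continuity argument, you recover exactly the stronger form of the conclusion that the paper proves.
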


Lemma \ref{lmm:heywood} implies that Heywood cases can be consistently avoided by choosing a larger regularization parameter $\tau$. When $\tau$ is small, the factor $\hat L$ is extracted under the assumption that the off-diagonal elements are accurate, whereas increasing $\tau$ corresponds to a more substantial shrinkage to account for the noise inherent in the input covariance matrix $\Sigma$. Consequently, there is a bias and variance trade-off exists in this context. In practice, the choice of the tuning parameter can be determined through cross-validation.

\subsection{SVD under heteroskedastic noise} \label{sec: svd_hetero_setting}

We present the result considering the following general assumption on the data matrix
\begin{align}\label{eqn: matrix_model}
    Y = M + Z \in \mathbb{R}^{p\times n}
\end{align}
where $Z$ is a zero-mean noise matrix and $M$ is a rank-$r$ matrix with singular decomposition $M = U\Lambda V^\top = \sum_{i=1}^r \sigma_i u_iv_i^\top$. This general model is root to many high dimensional statistics problems such as  bipartite stochastic block models, exponential family PCA \cite{liu_epca_2018}, Poisson PCA \cite{salmon_poisson_2012}, and Tensor PCA \cite{zhang_tensor_2018}.  

We are able to match the main result, Theorem 1 in \cite{zhou_deflated_2023}, as well as Theorem 4 in \cite{zhang_heteroskedastic_2022}. The former is robust to ill-conditioned matrix $M$, and our result Theorem \ref{thm: svd_hetero} further reduce the signal strength condition into ``minimal'' up to logarithm factor. See Remark \ref{remark:signal_strength} on \eqref{svd_hetero:signal}. We make sub-Gaussian assumption in Assumption \ref{assmpt: error_term} for simplicity, but it can be as general as Assumption 1 in \cite{zhou_deflated_2023}.

\begin{assmpt}
    \label{assmpt: error_term}
\begin{itemize}
    \item $Z_{i, j}$ are independent random variables with mean $\e (Z_{i, j}) = 0$ for all $i, j$. 
    \item With $\omega_{i, j}^2 \defeq \var(Z_{i, j})$ allowed to be location dependent, hence account for heteroskedastic noise, 
 we define\begin{align*}
     \omega_{\max}^2 \defeq \max_{i, j}\omega_{i, j}^2,\; \omega_{\row}^2 \defeq \max_{i} \sum_{j=1}^n \omega_{i, j}^2,\; \omega_{\col}^2 \defeq \max_{j} \sum_{i=1}^p \omega_{i, j}^2
\end{align*}
$Z_{i, j}$ is $\omega_{i, j}^2$-sub-Gaussian. 
\end{itemize}
\end{assmpt}

\begin{thm}\label{thm: svd_hetero} 
There are universal constants $C_0, C>0$ such that if 
\begin{subequations}
\begin{align}
        &\max(\norm{U}_{2, \infty}, \norm{V}_{2, \infty })< C\label{svd_hetero:incoherence}\\
        &\sigma_r\geq C_0 (\omega_{\col} + \sqrt{\omega_{\col}\omega_{\row}}) \sqrt{\log (n\vee p)}\label{svd_hetero:signal}\\
        &\omega_{\col}^2/\omega_{\max}^2 \gtrsim \max(p\norm{U}_{2, \infty}^2, n\norm{V}_{2, \infty}^2)\label{svd_hetero:var_spike}
\end{align}
\end{subequations}
 the leading $r$ eigenvectors $\hat U$ of $\hat L$ (solution to \eqref{eqn:rmtfa}) with input covariance matrix $\Sigma = YY^\top $ satisfies
        \begin{align*}
            \e\norm{\sin\Theta(\hat U, U)} & \lesssim 
            \frac{\omega_{\col} \sqrt{\log (n\vee p)}}{\sigma_r} + \frac{\omega_{\row} \omega_{\col}{\log (n\vee p)}}{\sigma_r^2}
        \end{align*}
    \end{thm}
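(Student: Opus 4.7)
The natural approach is to reduce Theorem \ref{thm: svd_hetero} to the deterministic $\sin\Theta$ bound of Theorem \ref{thm: robust_sin_rmtfa} applied to the input $\Sigma = YY^\top$, then control the stochastic residual. Writing $M = U\Lambda V^\top$, decompose
\[
YY^\top \;=\; \underbrace{MM^\top}_{L} \;+\; \underbrace{\e[ZZ^\top]}_{D} \;+\; \underbrace{MZ^\top + ZM^\top + \bigl(ZZ^\top - \e[ZZ^\top]\bigr)}_{W}.
\]
By Assumption \ref{assmpt: error_term} the matrix $D = \diag\bigl(\sum_j \omega_{i,j}^2\bigr)_{i=1}^p$ is diagonal, while $L = U\Lambda^2 U^\top$ is rank $r$ with the required eigen-basis $U$ and eigenvalue $\lambda_r^{(L)} = \sigma_r^2$. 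Feeding this decomposition into Theorem \ref{thm: robust_sin_rmtfa} delivers, on the event $\mathcal{E}_\varrho$,
\[
\|\sin\Theta(\hat U, U)\| \;\lesssim\; \frac{\tau + \|\poffdiag(W)\|}{\sigma_r^2}.
\]

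The core of the proof is a high-probability bound on $\|\poffdiag(W)\|$, split into a bilinear part $\poffdiag(MZ^\top + ZM^\top)$ and a centered quadratic part $\poffdiag(ZZ^\top - \e[ZZ^\top])$. For the bilinear piece, factor $MZ^\top = U\Lambda (ZV)^\top$: since $V \in \mathbb{R}^{n\times r}$ has orthonormal columns and is incoherent by \eqref{svd_hetero:incoherence} while the variance profile \eqref{svd_hetero:var_spike} rules out spiked-column configurations, a sub-Gaussian matrix concentration (matrix Bernstein or a net argument) yields $\|ZV\| \lesssim \omega_{\col}\sqrt{\log(n\vee p)}$ with high probability; subtracting the diagonal at most doubles the spectral norm, producing the first term $\omega_{\col}\sqrt{\log(n\vee p)}/\sigma_r$ after division by $\sigma_r^2$. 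For the quadratic piece, a Hanson--Wright or truncated matrix Bernstein estimate, applied to $\poffdiag(ZZ^\top - \e[ZZ^\top])$ rather than the full matrix, leverages the off-diagonal cancellation along with \eqref{svd_hetero:var_spike} to deliver the heteroskedastic-aware rate $\omega_{\row}\omega_{\col}\log(n\vee p)$, contributing the second term of the claim.

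To complete the proof I would then verify $\mathcal{E}_\varrho$: the term $3\|U\|_{2,\infty}$ is controlled by \eqref{svd_hetero:incoherence}, and $(\tau + \|\poffdiag(W)\|)/\sigma_r^2$ is made smaller than an absolute constant by the signal-strength lower bound \eqref{svd_hetero:signal}, which is calibrated precisely to dominate both stochastic bounds obtained above. Choosing $\tau$ of the same order as the high-probability upper bound on $\|\poffdiag(W)\|$ (which is well within the regime $\tau \lesssim n\lambda_r^{(L)}$ used in Theorem \ref{thm:factor_subspace} and consistent with the monotonicity of Lemma \ref{lmm:monotone}) makes the deterministic bound optimal; passage from high-probability to expectation uses the trivial inequality $\|\sin\Theta(\hat U,U)\| \le 1$ to absorb the exceptional event.

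The principal obstacle I anticipate is the off-diagonal quadratic bound. A naive spectral bound on $\|ZZ^\top - \e[ZZ^\top]\|$ scales like $\omega_{\max}\sqrt{n}$ up to poly-log factors, which is far too loose; only after exploiting both the diagonal cancellation afforded by the MTFA formulation \emph{and} the variance-balance condition \eqref{svd_hetero:var_spike} can one achieve the refined rate $\omega_{\row}\omega_{\col}\log(n\vee p)$, and the calibration of constants so that $\mathcal{E}_\varrho$ is verifiable is delicate. A secondary subtlety is that the naive inequality $\|MZ^\top\| \le \sigma_1\|ZV\|$ reintroduces a condition number $\kappa = \sigma_1/\sigma_r$; retaining the $\kappa$-free scaling $\omega_{\col}\sqrt{\log(n\vee p)}/\sigma_r$ advertised in the remark following Theorem \ref{thm:factor_subspace} likely requires a Wedin-style refinement, or a repeated application of the fixed-point argument sketched in Appendix \ref{sec: robust_sin_rntfa_proof}, rather than a direct application of the deterministic bound as a black box.
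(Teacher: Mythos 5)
Your proposal correctly identifies the ingredients (Theorem \ref{thm: robust_sin_rmtfa} as the deterministic engine, concentration of $\|ZV\|$ and $\|\poffdiag(ZZ^\top)\|$, verification of $\mathcal{E}_\varrho$, and absorbing the bad event using $\|\sin\Theta\|\leq 1$), and you correctly flag the $\kappa$-problem as the main obstacle — but the decomposition you start from cannot be repaired by concentration alone, and the repair you gesture at (``a Wedin-style refinement, or a repeated application of the fixed-point argument'') is exactly the missing step. If you put $L = MM^\top = U\Lambda^2 U^\top$ and $W = MZ^\top + ZM^\top + (ZZ^\top - \e[ZZ^\top])$, then $\|\poffdiag(W)\|$ is dominated by $\|\poffdiag(MZ^\top + ZM^\top)\| \gtrsim \sigma_1\|ZV\|$, and Theorem \ref{thm: robust_sin_rmtfa} gives $(\tau + \|\poffdiag(W)\|)/\lambda_r^{(L)} \gtrsim \kappa \|ZV\|/\sigma_r$, reintroducing the condition number the theorem is supposed to avoid. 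Your sketch of the bilinear term as ``producing the first term $\omega_{\col}\sqrt{\log(n\vee p)}/\sigma_r$ after division by $\sigma_r^2$'' silently replaces the factor $\sigma_1$ in $\|U\Lambda(ZV)^\top\|\leq\sigma_1\|ZV\|$ by $\sigma_r$, which is exactly the place the argument breaks.

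The paper's fix is a different decomposition rather than a tighter estimate: let $\dot U\dot\Lambda\dot W^\top$ be the SVD of the $p\times r$ matrix $U\Lambda + ZV$, so that
\begin{align*}
YY^\top = (U\Lambda + ZV)(U\Lambda + ZV)^\top + (ZZ^\top - ZVV^\top Z^\top) = \dot U\dot\Lambda^2\dot U^\top + (ZZ^\top - ZVV^\top Z^\top),
\end{align*}
which absorbs the problematic cross-term $MZ^\top + ZM^\top$ into the \emph{random} low-rank signal $\dot U\dot\Lambda^2\dot U^\top$. Theorem \ref{thm: robust_sin_rmtfa} is then applied to estimate $\|\sin\Theta(\hat U,\dot U)\| \lesssim (\tau + \|\poffdiag(ZZ^\top - ZVV^\top Z^\top)\|)/\dot\sigma_r^2$, with no $\kappa$ because the cross-term no longer appears in the residual, while $\|\sin\Theta(\dot U, U)\| \lesssim \|ZV\|/\sigma_r$ follows separately from Wedin's theorem; the triangle inequality on $\|\hat U\hat U^\top - UU^\top\|$ combines the two. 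Verifying $\mathcal{E}_\varrho$ now requires an incoherence bound on $\dot U$ (not on $U$), obtained by an $\ell_{2,\infty}$ Wedin-type estimate $\|\dot U - UU^\top\dot U\|_{2,\infty}\lesssim (\omega_{\row}+\omega_{\col})\sqrt{\log(n\vee p)}/\sigma_r$; this extra step is not present in your proposal and is essential because $\mathcal{E}_\varrho$ is stated in terms of the leading eigenbasis of the matrix actually fed to the deterministic theorem.
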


The proof can be found in Appendix \ref{sec: svd_hetero_pf}. 

\begin{remark}\label{remark:signal_strength}
    For homoskedastic noise, $\omega_{i, j}\asymp\omega_{\max}$, the result matches minimax rate \cite{cai_rate-optimal_2018,cai_subspace_2021} up to a log factor.
    \eqref{svd_hetero:signal} is less stringent than \cite{zhou_deflated_2023} by a factor of $r$. Furthermore, the condition
    $$\sigma_r \gtrsim [(np)^{1/4}+p^{1/2}]\log (n\vee p)$$
    matches, up to a logarithmic factor, the necessary condition for the existence of a consistent estimator in subspace estimation, as suggested in Theorem 3.3 in \cite{cai_subspace_2021}.
\end{remark}

\subsection{Subspace Estimation with Missing Values}
We present the subspace estimation result in the presence of missing values and heteroskedastic noise, without relying on condition number assumptions. 

It is worth noting that existing results related to missing values inference in the presence of  heteroskedastic noise hinge on condition number considerations, as seen in \cite{yan_inference_2021,agterberg_entrywise_2022}. Addressing the challenge posed by ill-conditioning in inference remains a promising avenue for future research.

We consider a general mathematical model where we aim to separate a signal $M$ (identifying its left-eigenspace) from noise $Z$ using the data matrix $Y$:
\begin{align}
    Y = \proj{\Omega_\text{obs}}(M+Z)    \in\mathbb{R}^{p\times n}
\end{align}
Only the elements with index $(i, j) \in \Omega_\text{obs}\subseteq [p]\times [n]$ are observed, meaning that
\begin{align}
    Y_{i, j} &= \begin{cases}M_{i, j} + Z_{i, j}, & \text{for }(i, j) \in \Omega_\text{obs}\\
    0, & \text{for }(i, j) \not\in \Omega_\text{obs}
    \end{cases}
\end{align}
Assume $Y_{i, j}$ is missing at random with probability $\theta\in (0, 1)$.

\begin{thm}
Assume incoherence condition $\norm{U}_{2,\infty}< c$ for some constants $c>0$, $\max_{i, j}\norm{Y_{i, j}}_{\psi_2} \leq C$. The leading $r$ eigenvectors $\hat U$ of $\hat L$ (solution to \eqref{eqn:rmtfa}) with input $\Sigma = YY^\top $ satisfies
\begin{align*}
    \norm{\sin\Theta(\hat U, U)} \lesssim \frac{\max (\sqrt{n(\theta+\theta^3 n^2)\log n}, \theta p \log^2(p))}{\theta^2 {(\lambda_r^{(M)})}^2 } 
\end{align*}
with probability at least $1- p^{-C}$
\end{thm}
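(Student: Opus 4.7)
The plan is to invoke the deterministic $\sin\Theta$ bound in Theorem \ref{thm: robust_sin_rmtfa} and reduce the statement to a spectral-norm concentration inequality for the off-diagonal part of the discrepancy $W := YY^\top - \Sigma$, where $\Sigma = L + D$ is to be chosen so that $L$ carries the eigenspace $U$. First I would fix the decomposition. Writing $\Omega \in \{0,1\}^{p\times n}$ for the entrywise missing-mask with independent $\operatorname{Bernoulli}(\theta)$ entries, one has $Y = \Omega \odot (M+Z)$, and a direct computation gives $\e[YY^\top] = \theta^2 MM^\top + \pdiag(\Delta)$ for an explicit diagonal $\Delta$ that absorbs the excess $\theta(1-\theta)$ contributions along the diagonal and the noise variances $\omega_{i,j}^2$. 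The natural identification is therefore $L = \theta^2 MM^\top$ (so $\col(L) = \col(M) = U$ and $\lambda_r^{(L)} = \theta^2 (\lambda_r^{(M)})^2$) and $D = \pdiag(\Delta)$, in which case $\poffdiag(W) = \poffdiag(YY^\top - \theta^2 MM^\top)$, which is the object we must control.

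Given this setup, Theorem \ref{thm: robust_sin_rmtfa} yields
\begin{align*}
    \norm{\sin\Theta(\hat U,U)} \;\lesssim\; \frac{\tau + \norm{\poffdiag(W)}}{\theta^2(\lambda_r^{(M)})^2},
\end{align*}
provided the event $\mathcal{E}_\varrho$ is met. The incoherence hypothesis $\norm{U}_{2,\infty} < c$ ensures the first summand inside $\mathcal{E}_\varrho$ is small, while the second summand is small whenever the right-hand side of the displayed bound is itself small, which will be implicit in the hypothesis $\lambda_r^{(M)}$ being large enough compared to the noise-plus-sampling scale. Choosing $\tau$ to match the order of $\norm{\poffdiag(W)}$ (so it contributes no worse than a constant factor) reduces the theorem to establishing
\begin{align*}
    \norm{\poffdiag(YY^\top - \theta^2 MM^\top)} \;\lesssim\; \max\!\bigl(\sqrt{n(\theta+\theta^3 n^2)\log n},\; \theta p \log^2 p\bigr)
\end{align*}
with probability at least $1-p^{-C}$.

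To prove this concentration bound, I would split $YY^\top - \theta^2 MM^\top$ into four pieces obtained by expanding $Y = \Omega\odot M + \Omega\odot Z$: a pure signal-sampling term $(\Omega\odot M)(\Omega\odot M)^\top - \theta^2 MM^\top$; two signal–noise cross terms $(\Omega\odot M)(\Omega\odot Z)^\top$ and its transpose; and a pure noise term $(\Omega\odot Z)(\Omega\odot Z)^\top$. After projecting each piece with $\poffdiag(\cdot)$, they become sums of independent centered matrix-valued summands indexed by the columns $j=1,\dots,n$ of $Y$, to which matrix Bernstein applies cleanly. The pure-signal piece produces the $\sqrt{n\theta^3 n^2 \log n} = \theta^{3/2}n^{3/2}\sqrt{\log n}$ order contribution (variance proportional to $\theta^3 n \norm{MM^\top}$-style quantities, combined with the incoherence assumption on $U$ to bound the entrywise peaks), while the noise pieces contribute the $\sqrt{n\theta \log n}$ variance term and, through the almost-sure column-norm bound, the $\theta p \log^2 p$ Bernstein ``large-deviation'' term; the sub-Gaussian assumption on $Y_{i,j}$ provides the required tail control after a standard truncation at the $\log p$ level.

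The main obstacle I expect is the spectral-norm control of the pure-signal piece $\poffdiag((\Omega\odot M)(\Omega\odot M)^\top - \theta^2 MM^\top)$: unlike the noise pieces, its summands are not centered with isotropic-type variance, and one must track how the incoherence of $U$ propagates through bilinear forms in the mask $\Omega$ so that the matrix Bernstein variance proxy does not pick up an extraneous factor of $r$ or $p$. Once this is done carefully, combining the pieces with a union bound over the four terms, choosing $\tau$ of the same order as the bound, and verifying $\mathcal{E}_\varrho$ from the incoherence and signal-strength hypotheses closes the proof.
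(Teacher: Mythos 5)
The paper does not contain a proof of this theorem anywhere in its appendix---the appendix section for ``Section 3'' gives proofs for Theorem~\ref{thm:factor_subspace}, Lemma~\ref{lmm:heywood} and Theorem~\ref{thm: svd_hetero}, but the missing-values theorem is omitted. So there is no proof on file against which your attempt can literally be checked, and the best available comparison is the paper's proof of the directly analogous Theorem~\ref{thm: svd_hetero}, which handles the full-observation version of the same model.

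That comparison reveals a genuine gap in your plan. You identify $L = \theta^2 MM^\top$, apply Theorem~\ref{thm: robust_sin_rmtfa} directly, and then split $\poffdiag(W)$ into a pure-signal term, two signal--noise cross terms $(\Omega\odot M)(\Omega\odot Z)^\top$ and its transpose, and a pure-noise term. The problem is the cross terms: the only clean spectral-norm control is $\norm{(\Omega\odot M)(\Omega\odot Z)^\top} \le \norm{\Omega\odot M}\,\norm{\Omega\odot Z}$, and the first factor scales with $\sigma_1(M)$, not $\sigma_r(M)$. Dividing by $\lambda_r^{(L)} = \theta^2\sigma_r(M)^2$ then leaves a factor of $\kappa = \sigma_1(M)/\sigma_r(M)$ in the bound. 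But the stated theorem (like the rest of the paper) advertises a condition-number-free rate, so an argument that goes through the cross terms naively cannot produce it. You flag the pure-signal piece as the main obstacle, but in fact the cross terms are the structural one. This is precisely what the paper's proof of Theorem~\ref{thm: svd_hetero} is engineered to avoid: instead of taking $L = MM^\top$, it factors $YY^\top = \dot U\dot\Lambda^2\dot U^\top + (ZZ^\top - ZVV^\top Z^\top)$ with $\dot U\dot\Lambda\dot W^\top = U\Lambda + ZV$, so that the signal--noise cross term is absorbed into a ``tilted'' signal $\dot U\dot\Lambda^2\dot U^\top$. It then triangulates $\norm{\sin\Theta(\hat U, U)} \le \norm{\sin\Theta(\hat U,\dot U)} + \norm{\sin\Theta(\dot U, U)}$, bounding the second term with Wedin and applying Theorem~\ref{thm: robust_sin_rmtfa} to the first with $\dot U$ as the reference subspace, after verifying $\norm{\dot U}_{2,\infty}$ stays bounded by pushing incoherence through $\dot U - UU^\top\dot U = (I - UU^\top)ZV\dot W\dot\Lambda^{-1}$. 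Your proposal would need the masked analogue of this maneuver---something like defining $\dot U$ from $\Omega\odot(M+Z)$ applied to a suitable right factor, and then only running matrix Bernstein on the residual that contains no cross term---before the concentration step can hope to produce the advertised $\kappa$-free bound.

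Two smaller points. First, Theorem~\ref{thm: robust_sin_rmtfa} requires the event $\mathcal{E}_\varrho$, which contains a signal-strength condition $\tau + \norm{\poffdiag(W)} \lesssim \lambda_r^{(L)}$; the statement you are proving does not display such a hypothesis, so you should state explicitly under what regime the bound is nontrivial, as the paper's other statistical theorems do via explicit conditions like \eqref{svd_hetero:signal}. Second, your claim that $\e[YY^\top]$ equals $\theta^2 MM^\top$ plus a diagonal is correct because the off-diagonal of $\e[\Omega_{i\cdot}\Omega_{k\cdot}]$ is $\theta^2$ for $i\ne k$, and this is exactly the right identification of $L$ and $\lambda_r^{(L)}$; that part of your setup matches what the paper would have to use.
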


\section{Proposed Algorithm}
\label{sec:alg}

In this section, we present Algorithm \ref{alg: rmtfa} as a numerical solution for Relaxed MTFA \eqref{eqn:rmtfa}. We introduce the eigenvalue soft-thresholding operator employed in the algorithm and establish the sub-linear convergence property in the worst case of the iterative process.  Consequently, the fixed-point equation characterizes the unique solution to \eqref{eqn:rmtfa}. Additionally, our main theorem (Theorem \ref{thm: robust_sin_rmtfa}) is derived by analyzing the estimation error in each iteration of Algorithm \ref{alg: rmtfa}.

\subsection{Eigenvalue soft-thresholding}
We first introduce the essential ingredient of our  algorithm. Given a symmetric matrix $M\in \mathbb{S}^p$ with spectral decomposition:
\[M =\sum_{i=1}^p \lambda_i u_iu_i^\top\]
where $\lambda_1\geq \cdots \geq \lambda_p$, unit length $u_i\in\mathbb{R}^p$ for $i=1, \cdots, p$, define the eigenvalue soft-thresholding operator by
\[D_\tau^+(M) \defeq \sum_{i=1}^p (\lambda_i - \tau)_+ u_iu_i^\top\]
where $(\cdot)_+\defeq \max(\cdot, 0)$ is taking the positive part of the truncated eigenvalues. It is clear that $D_\tau^+(M)$ is a positive positive semi-definite matrix since it is symmetric and all its eigenvalues are non-negative.  In fact, we have the following lemma.

\begin{lmm}\label{lmm: spectral_thresholding}
$D_\tau^+(M)$ is the unique minimizer of the nuclear norm optimization problem with positive semi-definite constraint:
\begin{align*}
    \minimize_{X\in \mathbb{S}^p_+} \tau \norm{X}_* + \frac{1}{2} \norm{X-M}_F^2.
\end{align*}
That is,
\begin{align*}
    \mathcal{D}_\tau^+(M)= \argmin_{X\in \mathbb{S}^p_+} \tau \norm{X}_* + \frac{1}{2} \norm{ X -  M }_F^2.
\end{align*}
\end{lmm}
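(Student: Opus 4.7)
The plan is to check optimality via the KKT conditions for the convex program, after first simplifying the objective by invoking the earlier lemma that $\|X\|_* = \tr(X)$ for $X \in \mathbb{S}^p_+$. Since the Frobenius-norm term is strongly convex and the feasible set $\mathbb{S}^p_+$ is a closed convex cone, uniqueness of the minimizer is automatic, so it suffices to exhibit one point satisfying first-order optimality and verify it equals $\mathcal{D}_\tau^+(M)$.

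First I would rewrite the problem as
\begin{equation*}
\minimize_{X \succeq 0} \; \tau \tr(X) + \tfrac{1}{2}\|X - M\|_F^2,
\end{equation*}
and introduce a PSD dual variable $Y \succeq 0$ for the conic constraint. The KKT conditions read: $X \succeq 0$, $Y \succeq 0$, $\langle X, Y\rangle = 0$, and the stationarity relation $\tau I + (X - M) - Y = 0$, i.e.\ $Y = \tau I + X - M$.

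Next I would plug in the candidate $X^{\mst} = \mathcal{D}_\tau^+(M) = \sum_{i: \lambda_i > \tau}(\lambda_i - \tau)\, u_i u_i^\top$, computed in the spectral basis of $M = \sum_i \lambda_i u_i u_i^\top$. A direct calculation gives $Y^{\mst} = \tau I + X^{\mst} - M = \sum_{i:\lambda_i \le \tau}(\tau - \lambda_i)\, u_i u_i^\top$, which is clearly PSD. Complementary slackness $\langle X^{\mst}, Y^{\mst}\rangle = 0$ follows because $X^{\mst}$ and $Y^{\mst}$ are supported on orthogonal eigen-subspaces of $M$, i.e.\ their spectral expansions use disjoint sets of the orthonormal vectors $u_i$. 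Since $X^{\mst} \succeq 0$ by construction, all KKT conditions hold.

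Finally I would note uniqueness: the map $X \mapsto \tfrac{1}{2}\|X - M\|_F^2$ is $1$-strongly convex on $\bbS^p$, the trace term is linear on $\bbS^p_+$, and $\bbS^p_+$ is convex, so the objective is strictly convex on the feasible set. Hence the KKT-satisfying point $\mathcal{D}_\tau^+(M)$ is the unique minimizer. I do not expect a substantive obstacle here; the only point that requires a moment of care is the orthogonality argument for complementary slackness, which uses that the same orthonormal basis $\{u_i\}$ diagonalizes both $X^{\mst}$ and $Y^{\mst}$ because both were constructed from the eigendecomposition of $M$.
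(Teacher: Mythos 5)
Your proof is correct, but it follows a different route than the paper. The paper gives a direct argument: it splits the fixed input into positive and negative parts $M = M^+ - M^-$, uses $\tr(M^- X)\ge 0$ for $X\succeq 0$ plus the von Neumann trace inequality to lower-bound the objective by a separable scalar problem $\sum_i \min_{s_i\ge 0}\{\tfrac12(\sigma_i^{(M^+)}-s_i)^2 + \tau s_i\}$ plus a constant, and then observes that the bound is attained exactly when $X$ shares the eigenbasis of $M^+$ with eigenvalues $(\sigma_i^{(M^+)}-\tau)_+$. Your approach instead invokes the identity $\|X\|_* = \tr(X)$ on $\bbS^p_+$ (the paper's earlier lemma) to make the objective smooth, writes conic KKT conditions with a dual PSD variable, and verifies them for the candidate $\mathcal{D}_\tau^+(M)$ by exhibiting the dual certificate $Y^\mst = \sum_{\lambda_i\le\tau}(\tau-\lambda_i)u_iu_i^\top$ supported on the complementary eigensubspace. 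This is clean and valid: for a convex program, a KKT point is automatically optimal without any constraint qualification, so exhibiting the dual variable suffices; uniqueness from strong convexity of the quadratic term is also correct. What the paper's argument buys is self-containedness (no KKT or conic-duality machinery, just a trace inequality and a one-dimensional minimization), which fits the elementary tone there; what yours buys is transparency about why soft-thresholding and the PSD projection combine the way they do — the dual variable makes the complementary-slackness structure explicit.
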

It has been proven by \cite{lounici_high-dimensional_2014}
using the characterization of nuclear norm sub-gradient, while we give a direct prove by Von-Neuman trace inequality in the Appendix \ref{subsection: proof_spectral_thresholding}. It is worth noting that without the positive semi-definite constraint, the solution corresponds to singalur value thresholding \cite{cai_singular_2010} and the iterative matrix completion algorithm -- Soft-Impute \cite{mazumder_spectral_2010} is based on that. We will explore more in Section \ref{sec: alg_connection}. For now, let's define similarly the singular value soft-thresolding operator
$$\mathcal{D}_\tau(M) \defeq \sum_{i=1}^p \text{sign}(\lambda_i)(|\lambda_i|-\tau)_+ u_iu_i^\top $$
for symmetric matrix $M$.

\subsection{Alternating Minimization Algorithm and Convergence}

    \begin{algorithm}
            
	      	\caption{Relaxed MTFA Algorithm \label{alg: rmtfa}}
	      	\begin{algorithmic}
	      		\Require $\tau>0$, input covariance matrix ${\Sigma}$, diagonal matrix $D^{(0)}$
	      		\For{$k = 1, 2, \cdots$} 
	      		\State $L^{(k)} = \mathcal{D}_\tau^+ ( \Sigma - D^{(k-1)})$ 
                        \State $D^{(k)} = \pdiag({\Sigma} - L^{(k)})$
	        		\EndFor

                \end{algorithmic}
	      \end{algorithm}

We now present our algorithm for solving the convex optimization problem. Algorithm \ref{alg: rmtfa} constructs a sequence of variables $(L^{(k)}, D^{(k)})$ by iteratively minimizing $\mathscr L$ with fixed $\mathscr D$ and then minimizing $\mathscr D$ with fixed $\mathscr L$. This decomposition leads to explicit closed-form solutions. Specifically, according to Lemma \ref{lmm: spectral_thresholding}:

\begin{align*}
L^{(k)} &= \argmin_{\mathscr L\in \mathbb{S}^p_+} F(\mathscr L, D^{(k-1)})= \mathcal{D}^+_\tau(\Sigma-D^{(k-1)})\\
D^{(k)} &= \argmin_{\mathscr D = \pdiag(\mathscr D)} F(L^{(k)}, \mathscr D)= \pdiag( \Sigma - L^{(k)})
\end{align*}
As a result, the objective function value monotonically decreases:
\begin{align*}
F(L^{(1)}, D^{(0)})\geq F(L^{(1)}, D^{(1)}) \geq F(L^{(2)}, D^{(1)})\geq
F( L^{(2)}, D^{(2)}) \geq \cdots
\end{align*}
until a stationary point is reached. Notably, there are no additional parameters, such as step size, that need to be chosen.

\begin{thm}\label{thm:fixedpt}
$(L, D)$ is the solution to \eqref{eqn:rmtfa} if and only if it
is a fixed point of the continuous mapping:\begin{align*}\label{eqn:fixedptmap}
\begin{pmatrix}
        L\\D
    \end{pmatrix}  \mapsto \begin{pmatrix}
        \mathcal{D}^+_\tau(\Sigma - D)\\
        \pdiag(\Sigma - L)
    \end{pmatrix} 
\end{align*}

\end{thm}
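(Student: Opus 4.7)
The plan is to characterize the unique minimizer of the convex program \eqref{eqn:rmtfa} via first-order (subgradient) optimality conditions and observe that these conditions decouple across the two blocks $L$ and $D$ into exactly the two coordinates of the claimed map.

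First, since $F$ is convex on the convex product domain $\mathbb{S}^p_+ \times \{D : D = \pdiag(D)\}$ (Lemma~\ref{lmm:cvx_uniqueness}), a feasible pair $(L^*, D^*)$ is the (unique) global minimizer if and only if $0 \in \partial F(L^*, D^*) + N_{\mathcal C}(L^*, D^*)$, where the normal cone to the product feasible set factorizes as $N_{\mathbb{S}^p_+}(L^*)$ in the first coordinate and the off-diagonal subspace $\{M : \pdiag(M) = 0\}$ in the second coordinate. Because the nonsmooth piece $\tau\|L\|_*$ depends only on $L$ and the smooth quadratic has a block-separable gradient, this joint condition splits into one condition per block, yielding two subproblems solved independently.

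Next, I compute each block. The $D$-block condition reads $\pdiag(L^* + D^* - \Sigma) = 0$, equivalently $D^* = \pdiag(\Sigma - L^*)$; this also follows directly from the orthogonal decomposition of the Frobenius norm into diagonal and off-diagonal parts applied to $\tfrac{1}{2}\|\Sigma - L^* - D\|_F^2$. The $L$-block condition is precisely the first-order condition for
\[
\min_{L \in \mathbb{S}^p_+}\; \tau \|L\|_* + \tfrac{1}{2}\|L - (\Sigma - D^*)\|_F^2,
\]
whose unique minimizer is $\mathcal{D}^+_\tau(\Sigma - D^*)$ by Lemma~\ref{lmm: spectral_thresholding}. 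Hence $L^* = \mathcal{D}^+_\tau(\Sigma - D^*)$. The two block conditions taken together are exactly the fixed-point equation in the theorem statement, which proves both directions of the equivalence.

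For the continuity claim, the map $L \mapsto \pdiag(\Sigma - L)$ is affine. The map $D \mapsto \mathcal{D}^+_\tau(\Sigma - D)$ is the composition of an affine shift with the proximal operator of the closed proper convex function $\tau\|\cdot\|_* + \iota_{\mathbb{S}^p_+}(\cdot)$ (this identification is the content of Lemma~\ref{lmm: spectral_thresholding}), and any such proximal operator is $1$-Lipschitz, hence continuous. The main subtlety I anticipate is rigorously justifying the block separability of the joint subdifferential and the fact that coordinate-wise optimality coincides with joint optimality on the product feasible set; both reduce to the sum rule for subdifferentials since the nonsmooth term $\tau\|L\|_*$ is block-separable. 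Beyond that, the proof collapses to Lemma~\ref{lmm: spectral_thresholding} plus a one-line least-squares computation on the diagonal.
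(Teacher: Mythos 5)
Your proof is correct. The paper itself does not give an explicit proof of Theorem~\ref{thm:fixedpt}; the statement is left implicit in the alternating-minimization setup (each block update is the exact blockwise minimizer, and Theorem~\ref{thm: convergence_beck}, citing \cite{beck_convergence_2015}, guarantees the iterates converge to the unique global minimizer, from which the fixed-point characterization follows indirectly). Your route is more direct: you invoke the subgradient optimality condition on the product feasible set, observe that the normal cone factorizes and the nonsmooth term $\tau\norm{\mathscr L}_*+\iota_{\bbS^p_+}$ depends on the first block only, and conclude that the joint stationarity condition is equivalent to the two blockwise first-order conditions, which are exactly the two coordinates of the fixed-point map (the $\mathscr L$-block closing via Lemma~\ref{lmm: spectral_thresholding}, the $\mathscr D$-block by a one-line least-squares argument). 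This is a cleaner, self-contained argument that does not lean on the convergence theorem.

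One small precision issue in your write-up: the phrase \emph{``the smooth quadratic has a block-separable gradient''} is misleading and not actually what makes the split work. The gradient of $\tfrac{1}{2}\norm{\Sigma-L-D}_F^2$ with respect to $L$ and with respect to $D$ are both equal to $L+D-\Sigma$, which depends on both blocks, so the gradient is emphatically coupled. What the decoupling actually relies on is that the smooth part is differentiable (so its subdifferential is the singleton $\{\nabla f\}$, which trivially decomposes into two block components) together with the separability of the nonsmooth/indicator terms $g_1(L)+g_2(D)$ and the product structure of the feasible set. That is exactly the standard hypothesis under which blockwise stationarity coincides with joint optimality for a convex problem. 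The conclusion you reach is right; just replace ``block-separable gradient'' with ``differentiable coupling term'' to state the reason correctly.
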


\begin{remark}
    In terms of variable $L$ alone, the fixed point equation can be expressed as \begin{align*}
    L = \mathcal D_\tau^+ (\poffdiag(\Sigma) + \pdiag(L)))
\end{align*} 
the equation has a unique solution for all $\tau > 0$.  This provides a simple characterization of the solution to \eqref{eqn:rmtfa}.
\end{remark}

\begin{remark} The update rule can be alternatively expressed as:
\begin{align*}
    L^{(k)} = \mathcal{D}^+_\tau(\poffdiag(\Sigma )+\pdiag(L^{(k-1)}))    
\end{align*}
This update rule bears resemblance to the Soft-Impute method \cite{mazumder_spectral_2010}, especially in cases where the diagonal entries are missing. However, it's important to note that our algorithm differs from theirs because we project the matrix to ensure positive semi-definiteness in each iteration. 
\end{remark}

We can establish the sub-linear convergence of the objective function and the convergence of the iterates to the global minimizer, as described in Theorem \ref{thm: convergence_beck}.

    \begin{thm}\label{thm: convergence_beck}
 Given $x_0 = (L^{(0)},D^{(0)})$ with any $L^{(0)}\in \mathbb{R}^{p\times p}$ in the feasible region and $D^{(0)}=\pdiag(\Sigma - L^{(0)})$. Let $x_{k} = (L^{(k)}, D^{(k)})$ for $k\in\mathbb{N}$ be the updated variables in Algorithm \ref{alg: rmtfa}  and $x^*$ is the unique minimizer of the objective function $F$.
        \begin{itemize}
            \item The iterates $\{x_k\}$ converges to $x^*$
            \item  We have the following convergence rate
        \begin{align}
            F(x_{k+1}) - F(x^*)\leq \max\left\lbrace \frac{F(x_0)-F(x^*)}{2^{k}},\frac{16R^2}{k} \right\rbrace 
        \end{align}
        for all $k\geq 1$  where $R =\max\{\norm{\vectorize((L, D)-x^*)}: F(L, D)\leq F(x_0), L \in\mathbb{S}^p_+, D = \pdiag(D)\}$
        \end{itemize}  
    \end{thm}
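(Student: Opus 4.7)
The plan is to recognize Algorithm~\ref{alg: rmtfa} as a two-block alternating minimization (AM) scheme for the composite convex program~\eqref{eqn:rmtfa}, written as $F = H + g_1 + g_2$ with smooth coupling $H(\mathscr L, \mathscr D) = \tfrac12\|\Sigma - \mathscr L - \mathscr D\|_F^2$ and nonsmooth block regularizers $g_1(\mathscr L) = \tau\|\mathscr L\|_* + I_{\mathbb{S}^p_+}(\mathscr L)$ and $g_2(\mathscr D) = I_{\{\mathscr D = \pdiag(\mathscr D)\}}(\mathscr D)$. Each partial gradient $\nabla_{\mathscr L} H$, $\nabla_{\mathscr D} H$ is $1$-Lipschitz in its block. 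Lemma~\ref{lmm: spectral_thresholding} identifies the two exact block minimizers as $L^{(k)} = \mathcal{D}_\tau^+(\Sigma - D^{(k-1)})$ and $D^{(k)} = \pdiag(\Sigma - L^{(k)})$, which are precisely the Algorithm~\ref{alg: rmtfa} updates; Lemma~\ref{lmm:cvx_uniqueness} supplies the unique minimizer $x^* = (L^*, D^*)$.

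An equivalent viewpoint, useful for the rate, eliminates $\mathscr D^{\text{opt}}(\mathscr L) = \pdiag(\Sigma - \mathscr L)$ to yield the reduced objective $\tilde F(\mathscr L) = g_1(\mathscr L) + h(\mathscr L)$ with $h(\mathscr L) = \tfrac12\|\poffdiag(\Sigma - \mathscr L)\|_F^2$. Then $\nabla h(\mathscr L) = \poffdiag(\mathscr L - \Sigma)$ is $1$-Lipschitz, and the iteration reads $L^{(k+1)} = \mathcal{D}_\tau^+(L^{(k)} - \nabla h(L^{(k)})) = \mathcal{D}_\tau^+(\pdiag(L^{(k)}) + \poffdiag(\Sigma))$, i.e., an ISTA step with unit step-size. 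I would then invoke Beck's convergence theorem for alternating minimization (e.g., Beck SIOPT 2015, Theorem~5.2; or Beck's textbook \emph{First-Order Methods in Optimization}, Ch.~14), which yields exactly the stated bound $F(x_{k+1}) - F(x^*) \leq \max\{(F(x_0) - F(x^*))/2^k,\, 16R^2/k\}$. The sublinear $16 R^2/k$ term comes from the standard descent-lemma argument with partial Lipschitz constant $1$ and the full-variable diameter $R$, whereas the geometric $(F(x_0)-F^*)/2^k$ term comes from a sufficient-decrease lemma, leveraging the exactness of both block minimizations to show that the gap at least halves per iteration as long as it exceeds the sublinear threshold.

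For iterate convergence, $F$ is coercive on the feasible set: the nuclear-norm term in $g_1$ bounds $\|\mathscr L\|_*$ on sublevel sets, and $\mathscr D$ is then controlled through the Frobenius residual $\tfrac12\|\Sigma - \mathscr L - \mathscr D\|_F^2$ remaining bounded. Hence the sublevel set $\{F \leq F(x_0)\} \cap \{\mathscr L \in \mathbb{S}^p_+, \mathscr D = \pdiag(\mathscr D)\}$ is bounded and $R < \infty$. Monotone decrease $F(x_{k+1}) \leq F(x_k)$ keeps $\{x_k\}$ in this bounded set, so the sequence has convergent subsequences; combined with $F(x_k) \downarrow F(x^*)$, continuity of $F$, and uniqueness from Lemma~\ref{lmm:cvx_uniqueness}, every subsequential limit must equal $x^*$, so the entire sequence converges to $x^*$.

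The hard part is the geometric-decay half of the max-bound: vanilla proximal-gradient analysis applied to $\tilde F$ only delivers the $O(1/k)$ sublinear rate, and the halving property requires exploiting the \emph{two-block} structure in which each inner subproblem is solved exactly rather than by a single prox step. Carefully tracking the unit partial-Lipschitz constants separately in each block, together with the definition of $R$ as the \emph{vectorized} full-variable diameter (not block-wise), is what pins down the explicit constant $16$ rather than a smaller or larger numerical value.
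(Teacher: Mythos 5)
Your approach is essentially the paper's: both cast Algorithm~\ref{alg: rmtfa} as two-block alternating minimization for the split $F = f + g_1 + g_2$, with $f$ the smooth quadratic coupling, $g_1(\mathscr L) = \tau\|\mathscr L\|_* + \infty\cdot 1_{\{\mathscr L \notin \mathbb{S}^p_+\}}$, $g_2(\mathscr D) = \infty\cdot 1_{\{\mathscr D \neq \pdiag(\mathscr D)\}}$, and both then appeal to Beck's (2015) alternating-minimization convergence theorem after verifying its regularity conditions, with Lemma~\ref{lmm: spectral_thresholding} identifying the exact block minimizers and Lemma~\ref{lmm:cvx_uniqueness} supplying $x^*$. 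The paper cites Theorem~3.9 of that reference, not Theorem~5.2 or the textbook chapter; you should track down the exact statement you rely on, since the numerical constant in the bound depends on it.

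Where your proposal slips is in the bookkeeping that produces the constant $16$. You take the partial Lipschitz constants to be $1$ (consistent with $f = \tfrac12\|\Sigma - \mathscr L - \mathscr D\|_F^2$) and then assert that ``careful tracking'' pins down $16$. The paper, in verifying its conditions~[C] and~[D], instead sets $f(y,z) = \|\Sigma - y - z\|_F^2$ \emph{without} the $\tfrac12$, computes $\nabla_y^2 f = \nabla_z^2 f = 2I$, and declares $L_1 = L_2 = 2$; the constant $16$ then arises as $8\max(L_1, L_2)$. With your normalization $L_1 = L_2 = 1$, the same Beck rate gives $8R^2/k$, not $16R^2/k$, so your stated constant does not follow from your own setup --- you must either adopt the paper's unscaled $f$ (and then reconcile $H = f + g_1 + g_2$ with the true objective $F$, which does carry the $\tfrac12$) or accept the smaller constant. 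Separately, your coercivity and subsequential-limit argument for iterate convergence is correct and actually more explicit than anything in the paper's proof, and the reduced-objective / ISTA reformulation is a valid observation, though it is not used in the paper and is not needed once Beck's two-block theorem is invoked.
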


The inequality presented in Theorem \ref{thm: convergence_beck} also highlights the benefits of employing a warm start --- initializing the algorithm with values that are already in proximity to the optimal solution, often acquired from a prior run or through a heuristic approach.

The quantity $R$ is contingent on how closely the initial value deviates from the global minimum. A small value of $R$ can significantly expedite the convergence process towards an optimal or near-optimal solution. This is attributed to the fact that the reduction of the gap between the function values and the global minimum follows an exponential decay until it approaches the sub-linear term. The proof is based on \cite{beck_convergence_2015} and the conditions are checked in \ref{subsection: relaxed_mtfa_convergence}. 

\section{Connections to Literature}

In this section, we present the connections of our method to
existing works. Our proposed method may be viewed as a LASSO \cite{tibshirani_regression_1996} for factor analysis, which is closely related to \cite{mazumder_spectral_2010, lounici_high-dimensional_2014}. In addition, we establish connections between HeteroPCA and Principle Axis Method in factor analysis, and our algorithm, which is a key observation that allow us to carry out the theoretical analyses of Theorem \ref{thm: robust_sin_rmtfa}.

\subsection{LASSO for Factor Analysis}
When it comes to regression analysis, LASSO \cite{tibshirani_regression_1996} provides a practical solution to the computationally intractable NP-hard problem of best subset selection. This enables the identification of a parsimonious linear regression model. In the context of factor analysis, non-convexity and NP-hardness inherent in the rank-constrained problem exemplified by  Minimum Residual Factor Analysis (MINRES) \cite{harman_factor_1966,shapiro_statistical_2002}, also known as Principal Axis Factoring (PAF) \cite{de_winter_factor_2012}
\begin{align}\label{eqn: fa_minres}
\begin{aligned}
    \minimize \quad& \tfrac{1}{2}\norm{\Sigma - (\mathscr L + \mathscr D)}_F^2\\
    \text{s.t.} \quad& \rank(\mathscr L) \leq r,\\
    &\mathscr L \in \mathbb{S}^p,\\
    &\mathscr D = \pdiag (\mathscr D).
\end{aligned}
\end{align}
among many other generalizations \cite{bertsimas_certifiably_2017}.  
 Our proposed method \eqref{eqn:rmtfa} can be viewed as relaxing the rank constraint in \eqref{eqn: fa_minres} by nuclear norm regularization. In addition, 
 \eqref{eqn:rmtfa} is minimizing the least square objective function $\norm{\Sigma  - (\mathscr L + \mathscr D)}_F^2$  while regularizing the $ l_1$-norm (LASSO penalty) of eigenvalues $\lambda^{(\mathscr L)}=(\lambda_1^{(\mathscr L)}, \lambda_2^{(\mathscr L)} \cdots, \lambda_{p}^{(\mathscr L)}) \in \mathbb{R}^p$ of symmetric matrix $\mathscr L$:
\begin{align}
    \norm{\mathscr L}_* = \sum_{i=1}^p |\lambda_i^{(\mathscr L)}| = \norm{\lambda^{(\mathscr L)}}_1
\end{align}
So in the same vein, the optimization problem \eqref{eqn:rmtfa} avoids the NP-hardness by convex relaxation. 

While LASSO performs variable selection and fitting simultaneously, Relaxed MTFA operates in a similar fashion. It seeks the correct subspace (variable selection) and corresponding eigenvalues (fitting). We examine some properties shared between LASSO and \eqref{eqn:rmtfa}. Firstly, under orthogonal design, the LASSO estimator (given regularization parameter $\tau>0$) can be simplified as soft-thresholding operator $\text{sign}(\cdot)(|\cdot|-\tau)_+$ applied to the ordinary least square estimator. The same is true for Relaxed MTFA: let $$\hat L = \sum_{i=1}^r \lambda_i^{(\hat L)} u_iu_i^\top$$
be the solution to \eqref{eqn:rmtfa} given input $\Sigma$ and tuning parameter $\tau>0$. Then according to Theorem \ref{thm:fixedpt} and basic algebra, the magnitude $\lambda_i^{(\hat L)}$ for an identified subspace $u_i$ is equal to the soft-thresholding operator applied to ordinary least square coefficient:
$$\argmin_\alpha\quad \norm{\Sigma -\hat D- \alpha u_iu_i^\top}_F^2. $$

Furthermore, when the regularization parameter $\tau$ is set to zero, the optimization problem \eqref{eqn:rmtfa} admits infinitely many solutions with different nuclear norms $\norm{\mathscr L}_*$--a scenario analogous to linear regression when the number of explanatory variables $p$ exceeds the number of samples $n$. To ensure unique regression coefficients in LASSO, a positive tuning parameter is necessary \cite{tibshirani_lasso_2013}. Similarly, in this work, we consider $\tau > 0$ to guarantee a unique solution through regularization. 

Finally, akin to LASSO, where the degree of sparsity in the solution is determined by the penalty parameter's magnitude, Lemma \ref{lmm: spectral_thresholding} demonstrates that our proposed method follows a similar pattern. Specifically, a higher value of $\tau$ corresponds to a lower rank of the solution with respect to the matrix variable $\mathscr L$. Therefore, selecting an appropriate $\tau$ value can assist in achieving a specific rank.

We would like to comment on the matrix LASSO estimator proposed in \cite{lounici_high-dimensional_2014}. Their method is designed to estimate the covariance matrix in the presence of missing values. It applies the eigenvalue soft-thresholding operator to a matrix $\Sigma$ with its diagonal rescaled. The underlying concept involves correcting the diagonal first, followed by denoising through $\mathcal D_\tau^+$.  Notably, the distinction between \cite{lounici_high-dimensional_2014} and Relaxed MTFA is that the latter does not directly leverage information from diagonal cells and instead imputes values iteratively. A theoretical comparison between the two methods is a subject for future investigation.

\subsection{General Framework to handle Heteroskedastic noise}\label{sec: alg_connection}

To establish the connection between Relaxed MTFA and existing methods such as HeteroPCA, Deflated-HeteroPCA, Factor Analysis, and Soft-Impute, let's consider the following penalized optimization problem:

\begin{align}\label{eqn: general_opt}
\begin{aligned}
    \minimize \quad& \Pi(\mathscr L)+\tfrac{1}{2}\norm{\Sigma - (\mathscr L + \mathscr D)}_F^2\\
    \text{s.t.} \quad& \mathscr L \in \mathbb{S}^p,\\
    &\mathscr D = \pdiag (\mathscr D).
\end{aligned}
\end{align}
Here, $\Pi$ represents a penalty function utilized to regularize the component $\mathscr L$, encouraging it to be low rank. Regardless of convexity, achieving a stationary point of \eqref{eqn: general_opt} is possible through the application of Alternating Minimization. This is attributed to the monotone decreasing objective function value in each iteration. Algorithm \ref{alg: general} outlines the Alternating Algorithm, which incorporates the proximal map
$$\prox{}_\Pi(M) \defeq\quad  \argmin_{\mathscr L\in \mathbb{S}^p} \quad \Pi(\mathscr L) + \tfrac{1}{2}\norm{M - \mathscr L}_F^2. $$

\begin{algorithm}
            
	      	\caption{Alternating Minimization Algorithm for solving \eqref{eqn: general_opt} \label{alg: general}}
	      	\begin{algorithmic}
	      		\Require 
                    input covariance matrix $\Sigma$, diagonal matrix $D^{(0)}$
	      		\For{$k = 1, 2, \cdots$} 
	      		\State $L^{(k)} = \prox_\Pi(\Sigma - D^{(k-1)})$
                        \State $D^{(k)} = \pdiag({\Sigma} - L^{(k)})$
	        		\EndFor
	      	\end{algorithmic}
	      \end{algorithm}
Obviously, when  $$\Pi(M) = \tau \norm{M}_*+ \infty \cdot 1_{\{M \not\in \mathbb{S}^p_+\}}$$
\eqref{eqn: general_opt} is equivalent to Relaxed MTFA \eqref{eqn:rmtfa}, and $\prox{}_\Pi = \mathcal D_\tau^+$ as we have considered the case in Section \ref{sec:alg}. 
We will establish connections between different algorithms by examining various choices of $\Pi$. 

{\it Factor Analysis and HeteroPCA.} Consider
    $$\Pi(M) = \infty\cdot 1_{\{\rank(M)>r\}}$$ 
    In this case, non-convex optimization \eqref{eqn: general_opt} is equivalent to the Factor Analysis problem \eqref{eqn: fa_minres}. As a popular fitting approach alongside maximum likelihood estimation \cite{de_winter_factor_2012}, Algorithm \ref{alg: general} with
    $$\prox{}_\Pi (M) = \argmin_{\rank(\mathscr L)\leq r} \norm{M - \mathscr L}_F^2\quad\text{(rank-$r$ SVD)}$$
    is implemented in Statistical software such as SPSS and R package \textit{psych}, with different heuristics in initialization of $D^{(0)}$ \cite{grieder_algorithmic_2021}. Such algorithm is called principal axis method or principal factor method in \cite{bartholomew_latent_2011}.

    On the other hand, HeteroPCA (Algorithm \ref{alg: Heteropca}) is an iterative procedure that commences by setting the diagonal entries of the Gram matrix $\Sigma$ to zero and subsequently updating them via low-rank approximation. Surprisingly, the following observation (Lemma \ref{lmm: minres_equiv}) has not been explicitly stated in the literature, even though HeteroPCA is interpreted as the projected gradient descent of the optimization problem
    \begin{equation}\label{eqn: minres_equiv}
        \minimize_{\rank(\mathscr L)\leq r}\quad\norm{\poffdiag(\Sigma- \mathscr L)}_F^2
    \end{equation}
    in \cite{zhang_heteroskedastic_2022}.  
    \begin{lmm}[Equivalance of Factor Analysis and HeteroPCA]\label{lmm: minres_equiv}
        In terms of the optimization problem,
        \eqref{eqn: minres_equiv} is equivalent to \eqref{eqn: fa_minres} for $\Sigma \in \mathbb{S}_+^p$. Moreover, Principal axis method (Algorithm \ref{alg: general} with $\Pi(M) = \infty \cdot 1(\rank(M)>r)$) is equivalent to HeteroPCA (Algorithm \ref{alg: Heteropca}): Given $r, \Sigma$ and $G^{(0)} + D^{(0)} =\Sigma$, the variable $L^{(k)}$ produced by both algorithm are exactly the same for all $k$.
    \end{lmm}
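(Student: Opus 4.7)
I would organize the argument into the two independent claims. The first claim --- that \eqref{eqn: minres_equiv} and \eqref{eqn: fa_minres} define the same optimization in $\mathscr L$ --- I would prove by profiling out the diagonal variable. For any symmetric $\mathscr L$ with $\rank(\mathscr L) \le r$, the off-diagonal and diagonal coordinates of a $p\times p$ matrix are Frobenius-orthogonal and $\mathscr D$ lives only on the diagonal, so
\[
\tfrac{1}{2}\norm{\Sigma - \mathscr L - \mathscr D}_F^2 \;=\; \tfrac{1}{2}\norm{\poffdiag(\Sigma - \mathscr L)}_F^2 \;+\; \tfrac{1}{2}\norm{\pdiag(\Sigma - \mathscr L) - \mathscr D}_F^2.
\]
The inner minimizer is $\mathscr D = \pdiag(\Sigma - \mathscr L)$, which zeros the second summand, and the profiled objective $\tfrac{1}{2}\norm{\poffdiag(\Sigma - \mathscr L)}_F^2$ matches \eqref{eqn: minres_equiv} up to the immaterial factor $\tfrac{1}{2}$. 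Since the feasible sets for $\mathscr L$ coincide, the two programs share the same set of optimal $\mathscr L$. Incidentally, the hypothesis $\Sigma \in \mathbb{S}_+^p$ is not actually used in this step.

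For the algorithmic equivalence, I would prove by induction the single invariant $\Sigma - D^{(k)} = G^{(k)}$ for all $k \ge 0$, which immediately forces $L^{(k)} = \tilde L^{(k)}$ at every step: both are the rank-$r$ truncated SVD of the same symmetric matrix, assuming ties among eigenvalues are broken consistently. The base case is the supplied initialization $G^{(0)} + D^{(0)} = \Sigma$. For the inductive step, assume $\Sigma - D^{(k-1)} = G^{(k-1)}$; the two SVD computations then agree and yield the common $L^{(k)}$. The principal-axis update gives
\[
\Sigma - D^{(k)} \;=\; \Sigma - \pdiag(\Sigma - L^{(k)}) \;=\; \poffdiag(\Sigma) + \pdiag(L^{(k)}),
\]
while on the HeteroPCA side, since $D^{(k-1)}$ is diagonal one has $\poffdiag(G^{(k-1)}) = \poffdiag(\Sigma)$, and HeteroPCA's diagonal-refill step replaces the diagonal of $G^{(k-1)}$ by $\pdiag(L^{(k)})$, so $G^{(k)} = \poffdiag(\Sigma) + \pdiag(L^{(k)})$. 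The invariant is preserved.

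The only real obstacle is conceptual rather than technical: recognizing that HeteroPCA's diagonal-refill step \textit{zero the diagonal, then refill with the diagonal of the current rank-$r$ fit} is literally the same arithmetic as the principal-axis update $\mathscr D \gets \pdiag(\Sigma - \mathscr L)$, once one identifies HeteroPCA's working matrix $G^{(k)}$ with $\Sigma - D^{(k)}$. Once this identification is spotted, each half of the lemma reduces to a one-line check; this is plausibly why the equivalence has not been recorded before, as HeteroPCA is usually motivated by the projected-gradient view of \eqref{eqn: minres_equiv} rather than the alternating-minimization view of \eqref{eqn: fa_minres}.
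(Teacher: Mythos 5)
Your argument is correct, and since the paper records no written proof of Lemma~\ref{lmm: minres_equiv} (it is presented as an observation), your argument effectively serves as the reference one. A few remarks. For the first half, profiling out $\mathscr D$ via Frobenius-orthogonality of the diagonal and off-diagonal parts is exactly the right move; your incidental observation that $\Sigma\in\mathbb{S}_+^p$ is not used here is accurate --- only the \emph{symmetry} of $\Sigma$ matters, since it is symmetry that guarantees the rank-$r$ truncation appearing in the HeteroPCA step is itself a symmetric matrix and hence lies in the feasible set $\mathbb{S}^p$ of \eqref{eqn: fa_minres}, so the two feasible sets have the same optima. For the algorithmic half, the invariant $\Sigma - D^{(k)} = G^{(k)}$ carries the entire argument, but there is one implicit step worth making explicit: the computation $\poffdiag(G^{(k-1)})=\poffdiag(\Sigma)$ and the identity $\Sigma-D^{(k)}=\poffdiag(\Sigma)+\pdiag(L^{(k)})$ both use that $D^{(k-1)}$ and $D^{(k)}$ are diagonal, which should be added to the induction hypothesis; it holds because $D^{(0)}$ is diagonal by assumption and the update $D^{(k)}=\pdiag(\Sigma-L^{(k)})$ evidently outputs a diagonal matrix. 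Your tie-breaking caveat is also appropriate: the truncated SVD of $G^{(k-1)}=\Sigma-D^{(k-1)}$ is not unique when $\sigma_r=\sigma_{r+1}$, so the statement ``exactly the same $L^{(k)}$'' presupposes a deterministic SVD routine shared by both implementations.
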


{\it Soft-Impute \cite{mazumder_spectral_2010}.} Consider $$\Pi(M) = \tau \norm{M}_*$$
    then \eqref{eqn: general_opt} is equivalent to
    \begin{equation}\label{eqn:soft_impute_equiv}
      \minimize_{\mathscr L\in\mathbb{R}^{p\times p}} \quad \tau\norm{\mathscr L}_* + \frac{1}{2}\norm{\poffdiag(\Sigma - \mathscr L)}_F^2  
    \end{equation}
    for symmetric $\Sigma$. This optimization problem \eqref{eqn:soft_impute_equiv}, is known as Soft-Impute, as introduced by Mazumder et al. \cite{mazumder_spectral_2010}. In the context of the matrix completion problem, the goal is to impute the missing values with the observed entries using the nuclear norm minimization paradigm. The convex optimization problem in  \eqref{eqn:soft_impute_equiv} aims to impute the diagonal elements of the matrix. The proximal map for this problem corresponds to singular value thresholding, i.e. $\prox{}_\Pi = \mathcal D_\tau$, the corresponding Alternatiing Minimzation Algorithm has been shown convergent.

Theoretical analysis for noisy matrix completion with SVD \cite{abbe_entrywise_2020} and Soft-Impute \cite{chen_noisy_2020} are established, there are, to our knowledge, no condition number free statistical recovery guarantee. However, due to the algorithm's close resemblance to HeteroPCA and Algorithm \ref{alg: rmtfa}, similar techniques can be employed to derive a subspace recovery theorem. We defer this exploration to future work.

Upon closer examination of the optimization problem \eqref{eqn:soft_impute_equiv}, it becomes evident that it does not impose the constraint of positive semi-definiteness on variable $\mathscr L$. This distinction from \eqref{eqn:rmtfa} could pose a challenge for factor analysis applications, as covariance matrices cannot have negative eigenvalues. This explains why Soft-Impute occasionally exhibits inferior performance in numerical simulations, as observed in Section \ref{sec:numerical}.

{\it HeteroPCA with positive semidefinite constraint.} Motivated by empirical improvement with positive semi-definite constraint (Soft-Impute vs. Relaxed MTFA), we consider additional constraint
\[\Pi(M) = \infty \cdot 1_{\{\rank(M)>r \text{ or } M\not\in\mathbb{S}^p_+\}}\]
for HeteroPCA algorithm, which corresponds to Algorithm \ref{alg: Heteropca+}. A $\sin\Theta$ theorem can similarly be established for this algorithm. While the curse of ill-conditioning may still persist due to the sensitivity of initialization to the converging stationary point, empirical results in Section \ref{sec:numerical} demonstrate its robustness under various configurations. The estimator produced by Algorithm \ref{alg: Heteropca+} can be employed as an initialization for Relaxed MTFA \eqref{alg: rmtfa} to achieve faster convergence.

\begin{algorithm}

\caption{HeteroPCA with positive semi-definite constraint \label{alg: Heteropca+}}
	      	\begin{algorithmic}
	      		\Require 
                    rank $r\in \mathbb N$, input covariance matrix $\Sigma$, diagonal matrix $D^{(0)} = \pdiag(\Sigma)$
	      		\For{$k = 1, 2, \cdots, T_{\max{} }$} 
	      		\State $L^{(k)} = \minimize_{\rank(\mathscr L)\leq r, \mathscr L \in \mathbb{S}^p_+}\norm{(\Sigma - D^{(k-1)})-\mathscr L}_F^2$
                        \State $D^{(k)} = \pdiag({\Sigma} - L^{(k)})$
	        		\EndFor
	      	\end{algorithmic}
	      \end{algorithm}

{\it Deflated HeteroPCA.} Algorithm \ref{alg: deflated_Heteropca} can be viewed as an instance of Algorithm \ref{alg: general} with a proximal map that depends on the iteration number $k$ and the singular values of its argument.  The core concept behind Deflated HeteroPCA is that, instead of extracting an $r$-dimensional subspace simultaneously, it incrementally increases the dimension to extract until the desired dimension of $r$ is reached. This approach is employed to avoid jeopardizing the condition-number-free error bound. In a sense, this approach resembles Soft-Impute and Relaxed MTFA (Algorithm \ref{alg: rmtfa}), where solving the problem with a higher tuning parameter first and then gradually decreasing it to the desired $\tau$ yields faster convergence due to warm starting. (Theorem \ref{thm: convergence_beck})

In summary, Algorithm \ref{alg: general} serves as a template for estimating subspaces in the presence of heteroscedastic noise. Alternative choices for $\Pi$ that were not explored here include the weighted nuclear norm, truncated nuclear norm, and regularization of the rank function. We defer further exploration of these techniques to future work.

\section{Numerical}\label{sec:numerical}

Consider the simple statistical model in Section \ref{sec: svd_hetero_setting}, we are going to compare the aforementioned methods for various parameters. 

\subsection{Algorithms, Data Generation and Expectations}
 We will perform numerical experiments through simulations with $50$ samples, comparing our proposed methods—specifically, relaxed MTFA (Algorithm \ref{alg: rmtfa}) --- against baseline algorithms, including SVD (PCA), Diagonal-Deleted PCA (Algorithm \ref{alg: diag_deleted_pca}), HeteroPCA (Algorithm \ref{alg: Heteropca}), Deflated HeteroPCA (Algorithm \ref{alg: deflated_Heteropca}), HeteroPCA with a positive semi-definite constraint (Algorithm \ref{alg: Heteropca+}), and Soft-Impute. To simplify, we will use the acronyms as summarized in Table \ref{tab:acronym_and_method_summary}.

\begin{table}[H]
    \centering
    \begin{tabular}{llll}
         Acronym & Method & Algorithm   \\\hline
         SVD& Singular value decomposition \\
         DD& Diagonal-deleted PCA & \ref{alg: diag_deleted_pca}\\
         HPCA& HeteroPCA & \ref{alg: Heteropca}\\
         DHPCA& Deflated HeteroPCA & \ref{alg: deflated_Heteropca}\\
         HPCA+ & HeteroPCA with PSD constraint & \ref{alg: Heteropca+}\\
         rMTFA & Relaxed MTFA (our proposal) & \ref{alg: rmtfa}\\
         SI & Soft-Impute &\ref{alg: general} with $\prox{}_\Pi = \mathcal{D}_\tau$
    \end{tabular}
    \caption{Algorithms summary and acronym for numerical experiments}
    \label{tab:acronym_and_method_summary}
\end{table}

\begin{table}[H]
    \centering
    \begin{tabular}{ll}
        Parameter  & Description  \\\hline
        $n, p\in\mathbb N$& dimension of the data matrix $Y= M+Z\in\mathbb{R}^{p\times n}$ \\
        $r\in\mathbb N$ & the rank of signal matrix $M$\\
        $\kappa \geq 1$& the condition number of the signal matrix\\
        $\omega>0$ & maximum noise level of noise matrix $Z$
         
    \end{tabular}
    \caption{Parameter summary for numerical experiments}
    \label{tab:param_summary}
\end{table}

The parameters $(n, p, r, \kappa, \omega)$ for generating the statistical model are summarized in Table \ref{tab:param_summary}. With the parameters  are provided, we construct the signal matrix $M$ by the following steps:
\begin{itemize}
    \item Simulate $U\in\mathbb{R}^{p\times r}$ and $V\in\mathbb{R}^{r\times n}$ by the leading $r$ left/right singular vector of $p\times n$ random matrix -- each entry is iid standard Gaussian. 
    \item Motivated by \eqref{svd_hetero:signal}, we set the signal strength 
    $\sigma_r = (np)^{1/4}+p^{1/2}, \sigma_{r-i} = \kappa^{i/(r-1)} \cdot \sigma_r$ for $i = 1, \cdots, r-1$.
    \item Construct the signal matrix $M = U\diag(\sigma_1, \cdots, \sigma_r) V^\top$. 
\end{itemize}
For the noise matrix $Z$, we generate it by the following step:
\begin{itemize}
    \item Simulate row-wise noise standard deviation $\{\omega_i\}_{i=1}^p$ by uniform random variables on $[0, \omega]$. 
    \item Generate $Z = \diag(\omega_1, \cdots, \omega_p) Z_0$ where $p\times n$ matrix $Z_0$ has iid standard normal entries. 
\end{itemize}
Given $Y = M+Z$, we feed $\Sigma = YY^\top$ into each method, obtaining the leading $r$ eigenspace as estimate, then compute the average $\sin\Theta$ distance.

Here, we describe the details of the implementation. We let the number of iteration $T_{\max{}} = T_{\max{}}^{(1)}=\cdots = 30$ in HeteroPCA based algorithm (HPCA, HPCA+, DHPCA). For Relaxed MTFA and Soft-Impute, as the estimator is not sensitive to the regularization parameter $\tau$ when it is selected small enough, for simplicity we take
$$\tau = \sigma_{r}^2/16.$$

In our numerical experiments, we fix the parameters at $(n, p, r, \kappa, \omega) = (200, 50, 5, 3, 1)$ and systematically vary one parameter at a time. It is anticipated that standard SVD may struggle in the presence of heteroskedastic noise. Referring to the recovery conditions outlined in Remark \ref{remark: robust_sin_Theta_literature}, HPCA might face challenges with large condition numbers, whereas DHPCA could encounter difficulties with increasing values of $r$. By introducing positive semi-definite constraints and convexity to HPCA, HPCA+ and SI are expected to exhibit increased robustness. Overall, rMTFA is anticipated to remain competitive across various scenarios. Nevertheless, all methods may experience reduced performance in the presence of high levels of noise $(\omega)$.

\subsection{Results and discussion}

\begin{figure} [H]
\centering
\begin{tabular}{ccc}
\includegraphics[width=0.45\textwidth]{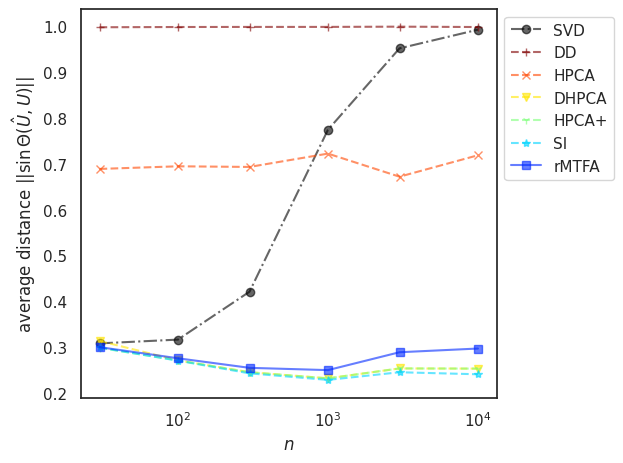} &
\includegraphics[width=0.45\textwidth]{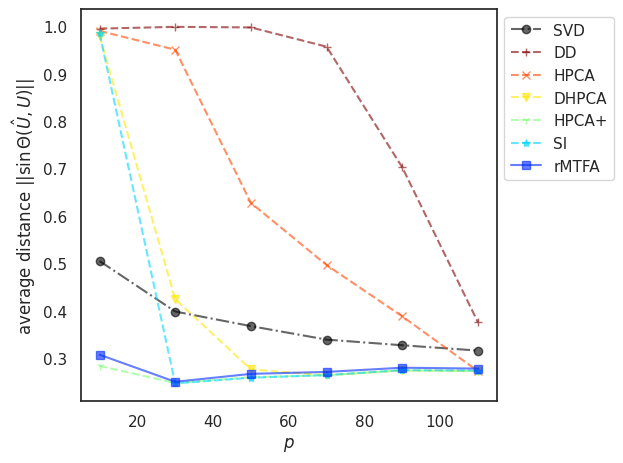} \\
\textbf{(a)} Varying $n$; & \textbf{(b)} Varying $p$;\\[6pt]
\end{tabular}
\begin{tabular}{ccc}
\includegraphics[width=0.45\textwidth]{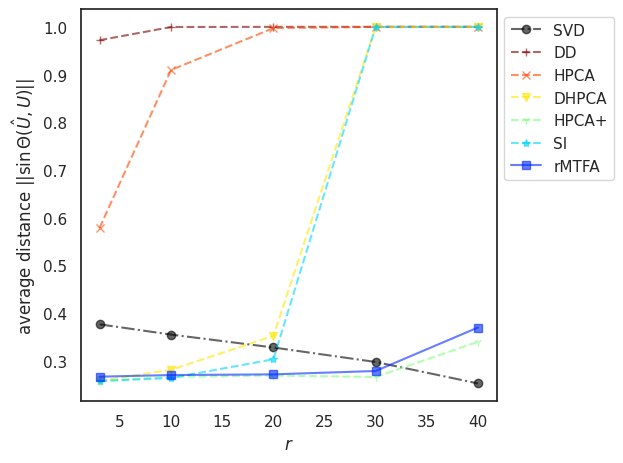} &
\includegraphics[width=0.45\textwidth]{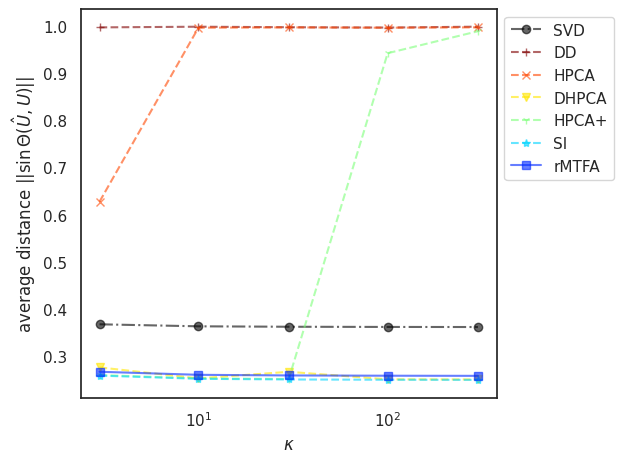} \\
\textbf{(c)} Varying $r$; & \textbf{(d)} Varying $\kappa$; \\[6pt]
\end{tabular}
\begin{tabular}{ccc}
\includegraphics[width=0.45\textwidth]{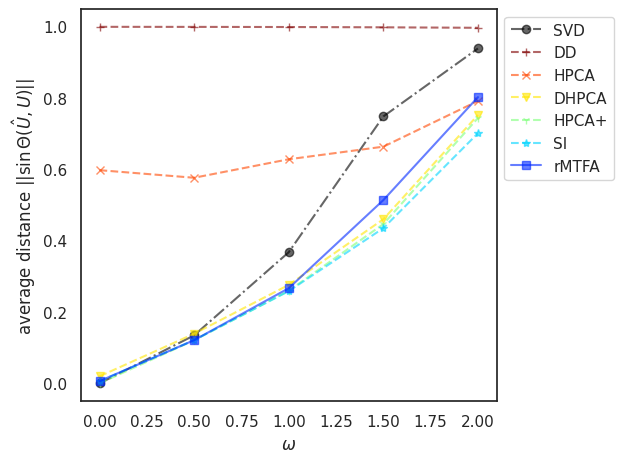} \\
 \textbf{(e)} Varying $\omega$;\\[6pt]
\end{tabular}
\caption{Average $\sin\Theta$ distances for each method, scenario based on 50 simulations. The experiment is conducted with parameters $(n, p, r, \kappa, \omega) = (200, 50, 5, 3, 1)$ while varying one variable at a time. }

\label{fig:numerical}
\end{figure}
The simulation results are depicted in Figure \ref{fig:numerical}. Below, we summarize some key findings. As a reminder, the signal strength increases with $n$ and $p$, specifically $\sigma_r = (np)^{1/4} + p^{1/2}$.
\begin{itemize}
    \item DD consistently exhibits the largest error across all settings, despite the decreasing trend in estimation error with an increase in $p$.
    \item HPCA demonstrates an advantage over SVD as $n$, $p$, and $\omega$ grow. However, it tends to fail rapidly when both $r$ and $\kappa$ increase.
    \item DHPCA successfully mitigates the curse of ill-condition (large $\kappa$) but deteriorates as $r$ increases or $p$ decreases.
    \item HPCA+, with the positive semidefinite constraint, stands out as one of the most robust methods, except for instances with a high spectral gap ($\kappa$).
    \item As a convex relaxation of HPCA, SI performs comparably, if not better, than DHPCA. It handles high condition numbers ($\kappa$) well but is less effective for small $p$ or large $r$.
    \item As a convex relaxation of HPCA+, rMTFA display robustness across all settings, consistently outperforming SVD except when $r$ nears Ledermann bound $\phi(p)$. 
\end{itemize}

As PCA/SVD serves as the foundation for numerous modern methods and applications, including Singular Spectrum Analysis \cite{ghil_advanced_2002} and Principal Component Regression \cite{barshan_supervised_2011, agarwal_robustness_2021}, the introduction of Relaxed MTFA demonstrates both theoretical and empirical advantages over SVD. This advancement not only contributes to the improvement of existing techniques but also opens avenues for more robust and efficient applications in various domains.

\appendix


\bibliographystyle{apalike}

\bibliography{ref.bib}

\section{Appendix: Related Algorithms}

\begin{algorithm}[H]  	
	\begin{algorithmic}
	      	\Require rank $r\in \mathbb{N}$ and  covariance matrix $\Sigma$. 
                        \State $L_\text{dd} = \argmin_{\rank(\mathscr M) \leq r} \norm{\poffdiag({\Sigma}) - \mathscr M}_F^2$ 
	      	\end{algorithmic}
            \caption{Diagonal-deleted PCA \label{alg: diag_deleted_pca}}
    \end{algorithm}

\begin{algorithm}[H]
            
	      	\caption{HeteroPCA$(r, T_{\max}, G^{(0)}= \poffdiag({\Sigma}))$ \cite{zhang_heteroskedastic_2022}\label{alg: Heteropca}}
	      	\begin{algorithmic}

	      		\Require rank $r\in \mathbb{N}$, maximum iteration  $T_{\max}$ and symmetric matrix $G^{(0)}$ taking to be off-diagonal part of covariance matrix, $\poffdiag( {\Sigma})$, by default.
                        
	      		\For{$k = 1, 2, \cdots T_{\max}$} 
                        \State $L ^{(k)} =\argmin_{\rank(\mathscr M) \leq r} \norm{G^{(k-1)}- \mathscr M}_F^2$ \Comment{rank-$r$ partial SVD}
	      		\State $G^{(k)} = \poffdiag({G^{(k-1)}})+\pdiag (L^{(k)}) $
                        
	        		\EndFor 
           
                       \State \Return $(L^{(T_{\max})}$, $G^{(T_{\max})})$  
	      	\end{algorithmic}
	      \end{algorithm}

\begin{algorithm}[H]
     	\caption{Deflated-HeteroPCA \cite{zhou_deflated_2023}\label{alg: deflated_Heteropca}}
	      	\begin{algorithmic}
	      		\Require rank $r\in \mathbb{N}$, maximum iteration $\{T_{\max}^{(k)}\}_{k\in \mathbb{N}}$,  covariance matrix ${\Sigma}$. 
                        \State Set $r_0 = 0, k=0, G^{(0)} = \poffdiag({\Sigma})$
	      		\While{$r_k < r$} 
                        \State $k = k+1$
                        \State $r_k = \sup\left\lbrace r' \in \mathbb{N}\cap (r_{k-1}, r]: \frac{\sigma_{r_{k-1}+1}}{\sigma_{r'}}\leq 4, \frac{\sigma_{r'} - \sigma_{r'+1}}{\sigma_{r'}}\geq r^{-1}\right\rbrace$ \Comment{$\sigma_i$ is the $i$-th singular value of $G^{(k-1)}$ and $\sup \varnothing \defeq r$.}
	      		\State $(L^{(k)}, G^{(k)}) = \text{HeteroPCA}(r_k, T_{\max}^{(k)}, G^{(k-1)})$ \Comment{run Algorithm \ref{alg: Heteropca}}
                        
	        		\EndWhile
	      	\end{algorithmic}
	      \end{algorithm}

\section{Appendix: Proofs}
\subsection{Auxiliary Lemmas}

\begin{lmm}{\label{lmm: constrained_uniqueness}}
    The solution to the constrained optimization problem \eqref{eqn: constrained_form1} is unique.
\end{lmm}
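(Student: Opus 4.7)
The plan is to deduce uniqueness of the constrained program \eqref{eqn: constrained_form1} from the uniqueness of the penalized program \eqref{eqn:rmtfa} (Lemma \ref{lmm:cvx_uniqueness}), by invoking the primal-dual equivalence of Lemma \ref{lmm:two_optimization}. Since the two programs share the same minimizers whenever their parameters $(\psi, \tau)$ are appropriately paired, the uniqueness of the penalized minimizer transfers directly.

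Concretely, I would first verify Slater's condition for \eqref{eqn: constrained_form1}. Taking $(\mathscr L, \mathscr D) = (0, \pdiag(\Sigma))$ yields $\norm{\Sigma - \mathscr L - \mathscr D}_F^2 = \norm{\poffdiag(\Sigma)}_F^2$; this point is strictly feasible whenever $\psi > \norm{\poffdiag(\Sigma)}_F^2$. For smaller $\psi$, a small PSD perturbation of a known boundary-feasible point yields an interior witness. Combined with convexity of the objective and feasible set, this secures strong duality. Lemma \ref{lmm:two_optimization} then furnishes a Lagrange multiplier $\tau^\ast > 0$ such that every minimizer of \eqref{eqn: constrained_form1} is simultaneously a minimizer of \eqref{eqn:rmtfa} with parameter $\tau^\ast$. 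Applying Lemma \ref{lmm:cvx_uniqueness} forces the latter minimizer to be unique, hence so is the former.

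As an alternative direct route (to be used if the primal-dual pairing is not yet available), one can proceed by contradiction: given two candidate minimizers $(\mathscr L_1, \mathscr D_1)$ and $(\mathscr L_2, \mathscr D_2)$, the midpoint $(\bar{\mathscr L}, \bar{\mathscr D})$ is feasible and optimal, and strict convexity of $\norm{\cdot}_F^2$ forces $\mathscr L_1 + \mathscr D_1 = \mathscr L_2 + \mathscr D_2$ unless the Frobenius constraint is slack at the midpoint. In the slack case, scaling $\bar{\mathscr L}$ by some $\alpha < 1$ keeps the pair feasible while strictly reducing $\norm{\mathscr L}_\ast = \tr(\mathscr L)$, a contradiction unless the optimum itself is zero. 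Thus $\Delta := \mathscr L_2 - \mathscr L_1 = \mathscr D_1 - \mathscr D_2$ is diagonal and traceless, with both $\mathscr L_1$ and $\mathscr L_1 + \Delta$ positive semidefinite; the remaining step is to rule out nonzero $\Delta$.

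The main obstacle is precisely this last step—equivalently, ensuring $\tau^\ast > 0$ in the dual approach. If $\psi$ is so large that the Frobenius constraint is slack at the optimum, then $\mathscr L^\ast = 0$ is the unconstrained minimizer of $\norm{\mathscr L}_\ast$ over $\mathbb{S}_+^p$, and $\mathscr D^\ast$ is genuinely not pinned down; the lemma in that regime must be interpreted as uniqueness of $\mathscr L^\ast$ alone. In the active-constraint regime—which is the regime of practical interest, where noise is nontrivial and MTFA-type rank reduction is meaningful—$\tau^\ast > 0$ and the dual argument closes the proof without additional hypotheses.
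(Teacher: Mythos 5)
Your primary route is correct and genuinely different from the paper's. The paper argues by contradiction: if two distinct minimizers exist, some convex combination lies strictly inside the Frobenius ball, contradicting item 2 of Lemma~\ref{lmm:two_optimization} (which forces the constraint to be active when $0 < \psi < \norm{\poffdiag(\Sigma)}_F^2$). Your main argument instead transfers uniqueness from the penalized problem via a Lagrange multiplier: Slater's condition gives strong duality, item 3 of Lemma~\ref{lmm:two_optimization} produces a $\tau^\ast > 0$, and Lemma~\ref{lmm:cvx_uniqueness} kills multiplicity. This is cleaner, but be explicit that the multiplier $\tau^\ast$ is the \emph{same} for all primal minimizers — that is standard convex duality (the dual optimal set does not depend on which primal optimum you pick), but item 3 as stated only asserts existence of \emph{a} $\tau$ for \emph{a given} minimizer, so the reader needs the shared-multiplier fact spelled out. (A more elementary way to phrase it inside the paper's toolbox: Lemma~\ref{lmm:monotone} makes $\tau \mapsto \psi(\tau)$ strictly monotone, so the $\tau \leftrightarrow \psi$ correspondence from items 1 and 3 is a bijection; two distinct constrained minimizers would then both be penalized minimizers for the same $\tau$, contradicting Lemma~\ref{lmm:cvx_uniqueness}.)

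Two further observations. First, you correctly flag that for $\psi > \norm{\poffdiag(\Sigma)}_F^2$ the $\mathscr D$-component is \emph{not} pinned down: the objective $\norm{\mathscr L}_\ast$ is blind to $\mathscr D$, so $\mathscr L^\ast = 0$ is unique but any diagonal $\mathscr D$ in the Frobenius ball works. The paper's assertion that $(0,\pdiag(\Sigma))$ is the \emph{unique} solution in that regime is an overstatement; uniqueness holds only at $\psi = \norm{\poffdiag(\Sigma)}_F^2$, or must be read as uniqueness of $\mathscr L^\ast$ alone, exactly as you say. Second, your ``alternative direct route'' inherits the same subtlety the paper's midpoint argument has: if $\mathscr L_1 + \mathscr D_1 = \mathscr L_2 + \mathscr D_2$, then \emph{every} convex combination has the same Frobenius residual, so nothing lands in the interior and item 2 yields no contradiction. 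You correctly identify the leftover as a traceless diagonal PSD perturbation, but ruling that out is the whole content of the lemma in that branch, so the dual route is the one to keep. Overall: your primary argument is sound and more careful than the paper's; just add the shared-multiplier justification.
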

\begin{proof}
    For $\psi \geq \norm{\poffdiag(\Sigma)}_F^2$, the trivial solution $L = 0, D=\pdiag(\Sigma)$ is a unique solution. For $\psi< \norm{\poffdiag(\Sigma)}_F^2$, 
    suppose there are two distinct solutions $x=(L_1, D_1), y = (L_2, D_2)$. By convexity, there is a $t\in(0, 1)$ such that $w = tx+ (1-t)y$ lies in the interior point of the constraint: $\norm{\Sigma - (L+D)} < \psi$ and $w$ is also a minimizer of \eqref{eqn: constrained_form1}. However, it is not possible because of Lemma \ref{lmm:two_optimization} item 2. 
\end{proof}

\begin{lmm} \label{lmm:two_optimization}
    Put $x = (\hat{L}, \hat{D})$. 
    \begin{enumerate}
    \item If $ x$ is the global minimizer of \eqref{eqn:rmtfa} given tuning parameter $\tau>0$ and $\hat L$, then $x$ is also a global minimizer for \eqref{eqn: constrained_form1} given $\psi = \norm{\Sigma - (\hat L + \hat D)}_F^2$.
    \item If $x$ is the global minimizer of \eqref{eqn: constrained_form1} with $0<\psi< \norm{\poffdiag(\Sigma)}_F^2$, then we have $\|{\Sigma} - (\hat{L} +\hat{D})\|_F^2 = \psi$.

    \item If $x$ is the global minimizer of the constrained optimization \eqref{eqn: constrained_form1} with $0 < \psi < \|\poffdiag({\Sigma})\|_F^2$, then there is a $\tau>0$ such  that $x$ is also a global minimizer for \eqref{eqn:rmtfa}.

\end{enumerate}
\end{lmm}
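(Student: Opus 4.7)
My plan is to handle the three items of Lemma \ref{lmm:two_optimization} in the order (1), (2), (3), since the first two are short and feed into the Lagrangian argument needed for the last.

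For item (1), I would argue by contradiction. If $(\hat L,\hat D)$ minimized $F$ in \eqref{eqn:rmtfa} at a prescribed $\tau>0$ but failed to minimize $\|\mathscr L\|_*$ over the feasible set of \eqref{eqn: constrained_form1} at $\psi=\|\Sigma-(\hat L+\hat D)\|_F^2$, there would exist a feasible $(L',D')$ with $\|L'\|_*<\|\hat L\|_*$ and $\|\Sigma-(L'+D')\|_F^2\le\psi$; adding $\tau(\|L'\|_*-\|\hat L\|_*)$ and the quadratic terms would then contradict optimality for \eqref{eqn:rmtfa}. This step is essentially bookkeeping.

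For item (2), assume toward contradiction that the minimizer $(\hat L,\hat D)$ of \eqref{eqn: constrained_form1} satisfies $\|\Sigma-(\hat L+\hat D)\|_F^2<\psi$ strictly. Since $\psi<\|\poffdiag(\Sigma)\|_F^2$ rules out $\hat L=0$ (otherwise the residual would be at least $\|\poffdiag(\Sigma)\|_F^2>\psi$, contradicting feasibility once combined with optimality), one can shrink $\hat L$ along the PSD-preserving path $L(t)=(1-t)\hat L$ for small $t>0$. Continuity of $t\mapsto\|\Sigma-(L(t)+\hat D)\|_F^2$ keeps feasibility, while $\|L(t)\|_*=(1-t)\|\hat L\|_*$ strictly decreases the objective, contradicting optimality. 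So the constraint must be active.

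Item (3) is the main substantive step and I would obtain it by Lagrangian duality with Slater's condition. The problem is convex: the objective $\|\mathscr L\|_*$ is convex, the constraint $g(\mathscr L,\mathscr D)\defeq\|\Sigma-(\mathscr L+\mathscr D)\|_F^2-\psi\le 0$ is convex, and $\mathbb S^p_+$ together with the diagonal subspace form a convex feasible cone. Slater's condition holds because the pair $\mathscr L=\poffdiag(\Sigma)+cI$, $\mathscr D=\pdiag(\Sigma)-cI$ for large enough $c$ lies in $\mathbb S^p_+\times\{\text{diagonal}\}$ and gives $\mathscr L+\mathscr D=\Sigma$, hence $g=-\psi<0$. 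Strong duality then yields a multiplier $\mu\ge 0$ with $(\hat L,\hat D)\in\arg\min_{\mathscr L\in\mathbb S^p_+,\ \mathscr D=\pdiag(\mathscr D)}\bigl\{\|\mathscr L\|_*+\mu\,g(\mathscr L,\mathscr D)\bigr\}$. Dividing by $2\mu$ and dropping the constant $-\mu\psi$, this is exactly \eqref{eqn:rmtfa} with $\tau=1/(2\mu)$. The obstacle I anticipate is excluding $\mu=0$: if $\mu=0$, then $\hat L$ minimizes $\|\mathscr L\|_*$ over $\mathbb S^p_+$ alone, forcing $\hat L=0$; the best accompanying $\hat D$ is $\pdiag(\Sigma)$, giving residual $\|\poffdiag(\Sigma)\|_F^2>\psi$, which violates feasibility. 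Thus $\mu>0$ and $\tau>0$, completing the proof.

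The real subtlety is this last nondegeneracy check for $\mu$ and verifying Slater's condition in a form compatible with the PSD cone constraint; once those are secured, strong convex duality furnishes the equivalence between the penalty and the constraint directly. Items (1) and (2) could in principle also be derived as consequences of the duality, but proving them by hand first simplifies the discussion of complementary slackness in (3).
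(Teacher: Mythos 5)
Your proposal is correct, and while its high-level outline matches the paper's (direct comparison of objectives for item 1, a perturbation argument for item 2, Slater plus strong duality for item 3), two of your steps differ in a genuinely interesting way. For item 2, you shrink the whole matrix along $L(t)=(1-t)\hat L$, which preserves membership in $\mathbb{S}^p_+$ trivially, keeps $\hat D$ fixed, and decreases $\norm{L(t)}_*$ linearly; the paper instead invokes Von Neumann's trace inequality and deflates a single eigenvector direction by a small $\delta$. Your version is cleaner and avoids the eigenstructure bookkeeping; both need the observation, which you supply, that $\hat L\neq 0$ (otherwise the residual is at least $\norm{\poffdiag(\Sigma)}_F^2>\psi$ and the point is infeasible). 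For item 3, the paper's proof is a one-line appeal to Slater's condition with the MTFA solution as the strictly feasible point; you substitute the explicit, self-contained Slater point $\mathscr L=\poffdiag(\Sigma)+cI$, $\mathscr D=\pdiag(\Sigma)-cI$ for $c$ large, which does not presuppose existence of the MTFA solution, and you spell out the nondegeneracy step $\mu>0$ (if $\mu=0$ the Lagrangian minimizer would force $\hat L=0$, contradicting feasibility of the primal minimizer). This makes the argument more transparent than the paper's terse statement and correctly identifies where the hypothesis $\psi<\norm{\poffdiag(\Sigma)}_F^2$ does the work.
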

\begin{proof}
The proof is inspired by \cite{li_selecting_2022}. 
    \begin{enumerate}
        \item Indeed, for any $(L, D)$ satisfying the constrain of \eqref{eqn: constrained_form1}:
        \begin{align*}
            \frac{\psi}{2} + \tau \norm{\hat L}_* = \frac{\norm{\Sigma - (\hat L + \hat D)}_F^2}{2} + \tau\norm{\hat L}_*\leq \frac{\norm{\Sigma - (L+D)}_F^2}{2} + \tau \norm{L}_*\leq \frac{\psi}{2} + \tau\norm{L}_*
        \end{align*}
        one has $\norm{\hat L}_* \leq \norm{L}_*$. Moreover, $x = (\hat L, \hat D)$ is in the feasible region of \eqref{eqn: constrained_form1}, therefore $x$ solves \eqref{eqn: constrained_form1}. 
        \item If, by contradiction, $x$ is an interior point of the constraint: $\norm{\Sigma -(\hat L+\hat D)}_F^2 < \psi$. By  Von Nuemann's trace inequality \cite{mirsky_trace_1975}, 
        \begin{align*}
            \sum_{i=1}^p (\sigma_i^{( \Sigma -\hat D)} - \sigma_i^{(\hat L)})^2 \leq \norm{ \Sigma - (\hat L + \hat D)}_F^2 < \psi
        \end{align*}
        The inequality implies that there is $r\in\mathbb{N}, \delta \in (0, \sigma_r^{(\hat L)} )$ such that
        \begin{align*}
            \sum_{i=1}^{r-1} (\sigma_i^{(\Sigma -\hat D)} - \sigma_i^{(\hat L)})^2 +  (\sigma_r^{( \Sigma -\hat D)} - \sigma_r^{(\hat L)}+ \delta)^2  + \sum_{i=r+1}^p (\sigma_i^{( \Sigma -\hat D)} )^2 \leq \psi.    
        \end{align*}
        We can construct $L^\dagger = \hat L - \delta u_r u_r^\top, D^\dagger = \hat D$ where $u_r$ is the unit length eigenvector correspond to $r$th largest eigenvalue of $\hat L$, so that $\norm{L^\dagger}_* < \norm{\hat L}_*$ and $\norm{\Sigma - (L^\dagger + D^\dagger)}_F^2\leq \psi.$ Consequently, we arrive at the conclusion that $x$ cannot be the global minimizer, thereby leading to a contradiction.
        \item 
        
        By applying Slater's constraint qualification \cite{boyd_convex_2004} and noting that the MTFA solution lies in the feasible region of the convex constraint optimization problem \eqref{eqn: constrained_form1}. 

    \end{enumerate}
    \end{proof}

    \begin{lmm}
        $\mathcal{D}_\tau^+(\cdot)$ satisfies for any $M_1, M_2\in\mathbb{S}^p$, 
        \[\norm{ \mathcal{D}_\tau^+ (M_1)- \mathcal{D}_\tau^+(M_2)}_F^2 \leq \norm{M_1-M_2}_F^2\]
        This implies $\mathcal{D}_\tau^+(M)$ is a continuous map in $M$. 
    \end{lmm}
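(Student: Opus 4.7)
The map $\mathcal{D}_\tau^+$ is, by Lemma \ref{lmm: spectral_thresholding}, the proximal operator of the function
\[
g(X) \;\defeq\; \tau \norm{X}_* + \iota_{\mathbb{S}^p_+}(X),
\]
where $\iota_{\mathbb{S}^p_+}$ is the convex indicator of $\mathbb{S}^p_+$. Since $g$ is proper, convex, and lower semi-continuous on $\mathbb{S}^p$, the result is a standard consequence of the firm non-expansiveness of proximal maps of convex functions. I would present it in a self-contained way using the subgradient characterization of $\mathcal{D}_\tau^+$ rather than invoking the black-box theorem, so the reader sees the one-line mechanism.

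Concretely, first I would observe that $X_i \defeq \mathcal{D}_\tau^+(M_i)$ for $i=1,2$ is precisely the minimizer of $X \mapsto g(X) + \tfrac{1}{2}\norm{X - M_i}_F^2$, so the first-order optimality condition (on the convex set $\mathbb{S}^p$) gives the subgradient inclusion
\[
M_i - X_i \;\in\; \partial g(X_i), \qquad i=1,2.
\]
Next, I would invoke monotonicity of the subdifferential of the convex function $g$, namely
\[
\langle\, (M_1 - X_1) - (M_2 - X_2),\; X_1 - X_2 \,\rangle_F \;\geq\; 0,
\]
where $\langle\cdot,\cdot\rangle_F$ is the Frobenius inner product. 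Rearranging yields firm non-expansiveness,
\[
\norm{X_1 - X_2}_F^2 \;\leq\; \langle\, M_1 - M_2,\; X_1 - X_2\,\rangle_F,
\]
and a single application of the Cauchy–Schwarz inequality to the right-hand side produces $\norm{X_1 - X_2}_F^2 \leq \norm{M_1 - M_2}_F \norm{X_1 - X_2}_F$, which gives the claimed bound $\norm{\mathcal{D}_\tau^+(M_1) - \mathcal{D}_\tau^+(M_2)}_F^2 \leq \norm{M_1 - M_2}_F^2$. Continuity of $\mathcal{D}_\tau^+$ is then immediate.

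The only mildly delicate point is the monotonicity step, but this is genuinely routine once $g$ is known to be convex: for any $Y_i \in \partial g(X_i)$, adding the two defining inequalities $g(X_2) \geq g(X_1) + \langle Y_1, X_2 - X_1\rangle_F$ and $g(X_1) \geq g(X_2) + \langle Y_2, X_1 - X_2\rangle_F$ and cancelling the $g$-terms gives $\langle Y_1 - Y_2, X_1 - X_2\rangle_F \geq 0$. Applying this to $Y_i = M_i - X_i$ closes the argument. I expect no real obstacle; the only thing worth flagging is that because we work inside $\mathbb{S}^p$ (not all of $\mathbb{R}^{p\times p}$), one should verify convexity and lower semi-continuity of $g$ on the ambient symmetric space, which follows because $\norm{\cdot}_*$ is a norm and $\mathbb{S}^p_+$ is a closed convex cone in $\mathbb{S}^p$.
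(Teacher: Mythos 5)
Your proof is correct and takes essentially the same route as the paper: the paper simply cites Lemma~3 of Mazumder et al.\ \cite{mazumder_spectral_2010}, whose proof is exactly the proximal-operator / subgradient-monotonicity argument you spell out, and you have correctly adapted it to the constrained operator by absorbing the PSD constraint into the convex function $g$ via the indicator $\iota_{\mathbb{S}^p_+}$.
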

    \begin{proof}
    The argument presented in the proof of Lemma 3 in \cite{mazumder_spectral_2010} remains valid.
    \end{proof}

    \begin{lmm}{\label{lmm:continuous}}
        $(\hat L(\tau), \hat D(\tau))$, the solution to \eqref{eqn:rmtfa} is continuous in $\tau>0$. 
    \end{lmm}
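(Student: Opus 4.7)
The plan is to prove continuity by a standard argmin-stability argument from parametric convex optimization, leveraging the uniqueness provided by Lemma \ref{lmm:cvx_uniqueness}. Concretely, I will show sequential continuity at an arbitrary $\tau_0>0$: given $\tau_n\to\tau_0$ with $\tau_n>0$, I will establish that $(\hat L(\tau_n),\hat D(\tau_n))\to(\hat L(\tau_0),\hat D(\tau_0))$. Without loss of generality, restrict to $\tau_n\in[\tau_0/2,2\tau_0]$.

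First, I would derive a uniform bound on the minimizers across this compact range. Comparing with the feasible pair $(0,\pdiag(\Sigma))$, whose objective value equals $\tfrac{1}{2}\|\poffdiag(\Sigma)\|_F^2$ independently of $\tau$, optimality yields
\[
\tau_n\|\hat L(\tau_n)\|_*+\tfrac{1}{2}\|\Sigma-\hat L(\tau_n)-\hat D(\tau_n)\|_F^2\leq \tfrac{1}{2}\|\poffdiag(\Sigma)\|_F^2.
\]
Since $\tau_n\geq\tau_0/2$, this immediately gives a uniform bound on $\|\hat L(\tau_n)\|_*$ (hence on $\|\hat L(\tau_n)\|_F$), and then on $\|\hat D(\tau_n)\|_F$ via the triangle inequality applied to the residual term.

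Second, I would extract a convergent subsequence. Passing to any subsequence, Bolzano--Weierstrass in the finite-dimensional space $\mathbb{S}^p\times\{\text{diagonal matrices}\}$ produces a further subsequence $(\hat L(\tau_{n_k}),\hat D(\tau_{n_k}))\to(L^\star,D^\star)$. Closedness of $\mathbb{S}^p_+$ and of the diagonal subspace ensures $(L^\star,D^\star)$ is feasible for the problem at $\tau_0$. Then I pass to the limit in the defining optimality: for any feasible $(L,D)$ and each $k$, $F(\hat L(\tau_{n_k}),\hat D(\tau_{n_k});\tau_{n_k})\leq F(L,D;\tau_{n_k})$. The objective is jointly continuous in $(\mathscr L,\mathscr D,\tau)$ because the nuclear norm is continuous and $\tau$ enters linearly, so letting $k\to\infty$ yields $F(L^\star,D^\star;\tau_0)\leq F(L,D;\tau_0)$. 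Thus $(L^\star,D^\star)$ is a minimizer at $\tau_0$, and by the uniqueness guaranteed by Lemma \ref{lmm:cvx_uniqueness} it coincides with $(\hat L(\tau_0),\hat D(\tau_0))$. Since every subsequence of $(\hat L(\tau_n),\hat D(\tau_n))$ contains a sub-subsequence converging to this same limit, the whole sequence converges, giving continuity at $\tau_0$.

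The main obstacle is simply establishing the uniform bound on $\|\hat L(\tau_n)\|_*$, which requires bounding $\tau_n$ away from zero; once we restrict to compact sub-intervals of $(0,\infty)$, this is immediate from the trivial comparison point, and the rest of the argument is the routine Berge-type closed-graph/argmin-continuity reasoning. Note that this approach leaves open the behavior as $\tau\to 0^+$, which is handled separately by Theorem \ref{thm:rmta_mtfa}; here we only need continuity on the open set $\tau>0$.
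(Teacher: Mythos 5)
Your proof is correct, and it takes a genuinely different route from the paper's. The paper invokes the fixed-point characterization of the solution (Theorem \ref{thm:fixedpt}), rewrites the fixed-point equation as a maximization of $-\|\mathscr L - f(\mathscr L,\tau)\|_F^2$ over the $\tau$-independent compact set $\mathbb{L} = \{L\in\mathbb{S}^p_+: \|L\|_*\le\|\tilde L\|_*, \|\poffdiag(\Sigma-L)\|_F\le\|\poffdiag(\Sigma)\|_F\}$, and then applies Berge's Maximum Theorem together with uniqueness. You instead argue directly at the level of the original variational problem \eqref{eqn:rmtfa}: compare against the trivial feasible pair $(0,\pdiag(\Sigma))$ to get a local-in-$\tau$ bound on $\|\hat L(\tau_n)\|_*$ (which requires restricting to a compact subinterval bounded away from $0$), extract convergent subsequences, pass to the limit in the optimality inequalities using joint continuity of $F$ in $(\mathscr L,\mathscr D,\tau)$, and then finish with uniqueness (Lemma \ref{lmm:cvx_uniqueness}) and the sub-subsequence trick. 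Your version is more elementary and self-contained: it avoids both the fixed-point machinery and the explicit appeal to Berge's theorem (you essentially reprove the relevant special case), and it sidesteps the need to know beforehand that $\|\hat L(\tau)\|_*\le\|\tilde L\|_*$ uniformly in $\tau$ (which the paper obtains from Lemma \ref{lmm:two_optimization}). The paper's formulation has the mild advantage of exhibiting a single $\tau$-independent compact constraint set, but for the stated claim your local compactness bound is fully adequate. Both approaches leave $\tau\to 0^+$ to Theorem \ref{thm:rmta_mtfa}, as you correctly note.
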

    \begin{proof}
        By Theorem \ref{thm:fixedpt}, $\hat L(\tau)$ solves the fixed point equation $$L = f(L, \tau)\defeq \mathcal{D}_\tau^+ (\poffdiag(\Sigma) + \pdiag(L))$$
        where $f$ is continuous by Lemma \ref{lmm:continuous}.
        At the same time, it solves
        \begin{align*}
            \maximize_{\mathscr L \in  \mathbb{L}}\quad -\norm{\mathscr L - f(\mathscr L, \tau)}_F^2
        \end{align*}
        where $\mathbb{L} = \{L\in\mathbb{S}^{p}_+: \norm{L}_*\leq \norm{\tilde L}_*, \norm{\poffdiag(\Sigma - L)}_F \leq \norm{\poffdiag(\Sigma)}_F\}$ is a compact set. 

        By Berge's Maximum Principle, and the fact that the fixed point equation admit unique solution for each $\tau > 0$, we have the optimizer $\hat L(\tau)$ is continuous in $\tau > 0$.  Hence, $\hat D(\tau) = \pdiag(\Sigma - \hat L(\tau))$ is also continuous in $\tau >0$. 
    \end{proof}

\begin{thm}[Oracle Inequality]\label{thm: oracle}  For any $p\geq 2, n\geq 1$, we have on the event $$\mathcal{F} \defeq\left\lbrace\tau\geq  \norm{\poffdiag(W)}\right\rbrace,$$
it holds that
\begin{multline*}
    \norm{\poffdiag(L-\hat L)}_F^2 \leq
    \\\inf_{\mathscr L \in\mathbb{S}^p_+ }\left\lbrace \norm{\poffdiag(\mathscr L-L)}_F^2 + \min\left( 4 \tau \norm{\mathscr L}_*, 4\tau^2\rank(\mathscr L) + \norm{\pdiag(\hat L - \mathscr L)}_F^2\right) \right\rbrace\\
    \leq \min\left(4\tau \norm{L}_*, 4\tau^2 \rank(L) + \norm{\pdiag(L - \hat L)}_F^2\right) 
\end{multline*}
and 
\begin{equation*}
    \norm{\poffdiag(L-\hat L)} \leq 2 \tau.
\end{equation*}
\end{thm}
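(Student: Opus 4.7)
The plan is to extract everything from the first-order optimality of $(\hat L, \hat D)$ in \eqref{eqn:rmtfa}. At the optimum $\hat D = \pdiag(\hat\Sigma - \hat L)$, so the fitted residual collapses to $\hat\Sigma - \hat L - \hat D = \poffdiag(\hat\Sigma - \hat L) = A + N$, where I set $A \defeq \poffdiag(L - \hat L)$, $B \defeq \poffdiag(L - \mathscr L)$ and $N \defeq \poffdiag(W)$. Comparing against the test $(\mathscr L, \pdiag(\hat\Sigma - \mathscr L))$ (whose residual is $B + N$), using the fact that $N$ has zero diagonal so $\langle B - A, N\rangle = \langle \hat L - \mathscr L, N\rangle$, and expanding the squared Frobenius norms, yields the basic inequality
\begin{equation*}
\tfrac{1}{2}\norm{A}_F^2 \,\leq\, \tau(\norm{\mathscr L}_* - \norm{\hat L}_*) + \tfrac{1}{2}\norm{B}_F^2 + \langle \hat L - \mathscr L, N\rangle.
\end{equation*}

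For the $4\tau\norm{\mathscr L}_*$ branch, I would apply H\"older and the triangle inequality on the event $\mathcal F$: $|\langle \hat L - \mathscr L, N\rangle| \leq \norm{\hat L - \mathscr L}_*\cdot\norm{N} \leq \tau(\norm{\hat L}_* + \norm{\mathscr L}_*)$; the $\tau\norm{\hat L}_*$ piece cancels the penalty difference, leaving $\norm{A}_F^2 \leq 4\tau\norm{\mathscr L}_* + \norm{B}_F^2$. For the $4\tau^2\rank(\mathscr L) + \norm{\pdiag(\hat L - \mathscr L)}_F^2$ branch, I would instead use the test $\mathscr D = \hat D$, which introduces the extra diagonal term $\tfrac{1}{2}\norm{\pdiag(\hat L - \mathscr L)}_F^2$ in the basic inequality. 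Writing $\mathscr L = U\Lambda U^\top$, $r = \rank(\mathscr L)$, $X = \hat L - \mathscr L$, and decomposing $X = P_TX + P_{T^\perp}X$ via the tangent projector at $\mathscr L$, the decomposability identity
$\norm{\hat L}_* \geq \norm{\mathscr L}_* + \langle UU^\top, P_TX\rangle + \norm{P_{T^\perp}X}_*$
pairs with $|\langle P_{T^\perp}X, N\rangle| \leq \tau\norm{P_{T^\perp}X}_*$ so that the $T^\perp$ contributions annihilate. What remains is controlled by $|\langle\tau UU^\top + N, P_TX\rangle| \leq 2\tau\norm{P_TX}_* \leq 2\tau\sqrt{2r}\norm{P_TX}_F$, and Young's inequality $2ab \leq a^2 + b^2$ turns this into $4\tau^2 r + \norm{P_TX}_F^2$; the leftover is absorbed via $\norm{P_TX}_F^2 \leq \norm{X}_F^2 = \norm{\pdiag X}_F^2 + \norm{\poffdiag X}_F^2$, with the off-diagonal part expressible through $A$ and $B$.

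For the operator norm statement $\norm{\poffdiag(L - \hat L)} \leq 2\tau$, I would invoke the fixed-point characterization $\hat L = \mathcal D_\tau^+(\hat\Sigma - \hat D)$ from Theorem \ref{thm:fixedpt}: because $\hat L$ and $\hat\Sigma - \hat D$ share eigenvectors, the eigenvalues of $A + N = (\hat\Sigma - \hat D) - \hat L$ are exactly $\min(\mu_i, \tau)$ for the eigenvalues $\mu_i$ of $\hat\Sigma - \hat D$. This gives $\lambda_{\max}(A+N) \leq \tau$ directly; the matching lower bound $\lambda_{\min}(A+N) \geq -\tau$ requires the KKT multiplier analysis of the PSD constraint, producing $A + N = \tau S - M$ with $\norm{S}_{op} \leq 1$ and $M \succeq 0$, $\langle M, \hat L\rangle = 0$. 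Triangle inequality against $\norm{N}_{op} \leq \tau$ on $\mathcal F$ yields the claim. The main obstacle is the constant bookkeeping in the second branch --- ensuring that the $\norm{P_TX}_F^2$ leftover redistributes into $\norm{\pdiag X}_F^2$ plus $\norm{B}_F^2$ without degrading the coefficients shown --- together with the two-sided operator-norm control, whose lower half depends on the PSD multiplier being tractable under the structure of $\hat\Sigma - \hat D = \pdiag(\hat L) + \poffdiag(L) + N$.
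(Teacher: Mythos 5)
Your first branch matches the paper's argument in substance, and your honest flagging of two potential obstacles is well placed. Here is the gap that actually bites.

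For the $4\tau^2\rank(\mathscr L) + \norm{\pdiag(\hat L-\mathscr L)}_F^2$ branch, the issue is not mere bookkeeping: your starting point is strictly weaker than what the argument needs. You begin from the function-value comparison
$\tfrac{1}{2}\norm{A}_F^2 \leq \tau(\norm{\mathscr L}_* - \norm{\hat L}_*) + \tfrac{1}{2}\norm{B}_F^2 + \tfrac{1}{2}\norm{\pdiag(\hat L-\mathscr L)}_F^2 + \langle \hat L-\mathscr L, N\rangle$,
whereas the paper starts from the first-order optimality condition for $\hat L$ and, via the polarization identity
$\tfrac{1}{2}\bigl[\norm{\poffdiag(\hat L - L)}_F^2 + \norm{\poffdiag(\hat L - \mathscr L)}_F^2 - \norm{\poffdiag(\mathscr L - L)}_F^2\bigr] = \langle \poffdiag(\hat L - L), \hat L - \mathscr L\rangle$,
obtains the strengthened inequality with an \emph{extra} $\norm{\poffdiag(\hat L-\mathscr L)}_F^2$ on the left-hand side. (Equivalently: the objective is $1$-strongly convex in $\poffdiag(\mathscr L)$, so the Pythagorean inequality for the minimizer gives this free quadratic term.) That extra term exactly cancels the off-diagonal contribution of $\norm{\hat L - \mathscr L}_F^2 = \norm{\pdiag(\hat L-\mathscr L)}_F^2 + \norm{\poffdiag(\hat L-\mathscr L)}_F^2$ that appears after the Cauchy--Schwarz/Young step. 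Without it, when you substitute $\norm{P_T X}_F^2 \leq \norm{\pdiag X}_F^2 + \norm{\poffdiag X}_F^2$, you pick up an irreducible $\norm{B-A}_F^2$ on the right and your constants inflate to $8\tau^2 r$ and $2\norm{\pdiag X}_F^2$. So the fix is not Young-constant tuning; you must replace the function-value basic inequality by the variational (first-order) one before running the tangent-space argument. Your decomposability and $\norm{P_TX}_* \le \sqrt{2r}\norm{P_TX}_F$ steps are otherwise the right shape and parallel the paper's subgradient-monotonicity chain.

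Two remarks on the rest. First, for the $4\tau\norm{\mathscr L}_*$ branch your H\"older plus triangle argument on $\langle\hat L-\mathscr L, N\rangle$ reproduces the paper's bound and is fine. Second, the operator-norm statement $\norm{\poffdiag(L-\hat L)}\le 2\tau$ is actually not proved in the paper's appendix at all. Your observation that the spectrum of $G-\hat L$ with $G=\hat\Sigma-\hat D$ is $\{\min(\mu_i,\tau)\}$ is correct and gives $\lambda_{\max}(A+N)\le\tau$ cheaply; but for the matching lower bound you still need $\lambda_p(G)\ge -\tau$, and the KKT multiplier is $M = \sum_i(\tau-\lambda_i(G))_+ u_iu_i^\top$ with $\lambda_{\max}(M) = (\tau-\lambda_p(G))_+$, so your route reduces precisely to that same unverified claim. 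Your honest flag on this point is warranted --- the two-sided operator bound is the genuinely open part of this statement's proof on the paper's side as well.
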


\begin{proof}

    For all $\mathscr L  \in \mathbb{S}^p_+$,
    \begin{align*}
        \tau \norm{\hat L}_* + \frac{1}{2}\norm{\poffdiag( \Sigma - \hat L)}_F^2 \leq \tau \norm{\mathscr L}_* + \frac{1}{2}\norm{\poffdiag(\Sigma - \mathscr L)} _F^2
    \end{align*}
    The off-diagonal error can be bounded by
    \begin{align*}
        &\norm{\poffdiag (L - \hat L)} _F^2 \\ \leq& \norm{\poffdiag(\mathscr L - L)}_F^2 + 4\tau\norm{\mathscr L}_*  + 2 \left[\tr\left(\poffdiag(L- \Sigma )\poffdiag(\mathscr L - \hat L)\right)- \tau \norm{\hat L}_*-\tau \norm{\mathscr  L}_*\right]\\
         \leq& \norm{\poffdiag(\mathscr L - L)}_F^2 + 4\tau\norm{\mathscr L}_*  + 2 \left[\tr\left(\poffdiag(L- \Sigma )(\mathscr L - \hat L)\right)- \tau \norm{\mathscr  L- \hat L}_*\right]\\
        \leq & \norm{\poffdiag (\mathscr L - L)}_F^2 + 4\tau \norm{\mathscr L}_*
    \end{align*}
    for all $\mathscr L \in \mathbb{S}^p_+$ where the last inequality holds if $\tau \geq \norm{\poffdiag( \Sigma - L)}$ by Von Neumann's Trace inequality.  

    On the other hand, by sub-gradient optimality condition, there exists $\hat\Gamma \in\partial \norm{\hat L}_*$ such that for all $\mathscr L \in \mathbb{S}^p_+$:
    \begin{align}
        \tr\left((\poffdiag(\hat L -  \Sigma) +\tau \hat \Gamma)(\hat L - \mathscr L)\right)\leq 0. \label{eqn: optimality_foc}
    \end{align}
    Take any $\mathscr L \in\mathbb{S}^p_+$ of rank $r$, with spectral decomposition 
    \begin{align*}
        \mathscr L  = \sum_{i=1}^r \sigma_i u_iu_i^\top 
    \end{align*}
    it follows that for any $\Gamma \in \partial \norm{\mathscr L}_*$:
    \begin{align*}
                & \tr\left(\poffdiag(\hat L - \Sigma) (\hat L - \mathscr L)\right)\\
              \leq &  \tr\left(\poffdiag(\hat L - \Sigma) (\hat L - \mathscr L)\right) +\tau \cdot \tr\left( (\hat \Gamma - \Gamma)(\hat L - \mathscr L)\right) \\ \leq &
               \tr\left((-\tau \Gamma + \poffdiag(  \Sigma -L ))(\hat L - \mathscr L)\right)
    \end{align*}
    where the first inequality is due the monotonicity of sub-differential, and the second inequality is due to \eqref{eqn: optimality_foc}.  By the characterization of sub-differential of the nuclear norm, there exists a $T$ such that $\norm{T} \leq 1$,
    \begin{align*}
        \Gamma  =  \sum_{i=1}^r u_iu_i^\top + \proj{\nul(\mathscr L)} T \proj{\nul(\mathscr L)} = \proj{\col(\mathscr L)} + \proj{\nul(\mathscr L)} T \proj{\nul(\mathscr L)} , 
    \end{align*}
    and \begin{align*}
        \tr\left( \proj{\nul(\mathscr L)} T \proj{\nul(\mathscr L)}(\hat L - \mathscr L)\right) = \norm{\proj{\nul(\mathscr L)} \hat L \proj{\nul(\mathscr L)}}_*.
    \end{align*}
    For such choice of $T$,
    \begin{align*}
                &\frac{1}{2}\left[\norm{\poffdiag(\hat L - L)}_F^2 + \norm{\poffdiag(\hat{L}- \mathscr L)}_F^2  - \norm{\poffdiag(\mathscr L - L)}_F^2\right]\\
                =& \tr\left(\poffdiag(\hat L - \Sigma) (\hat L - \mathscr L)\right)\\
                \leq &
               \tr\left((-\tau \Gamma + \poffdiag( \Sigma - L))(\hat L - \mathscr L)\right) \\
               = &  -\tau\norm{\proj{\nul(\mathscr L)} \hat L \proj{\nul(\mathscr L)}}_*  + \tau \tr\left(\proj{\col(\mathscr L)}(\mathscr L - \hat L)\proj{\col(\mathscr L)}\right) + \tr\left(\poffdiag( \Sigma - L)(\hat L - \mathscr L)\right)\\
               \leq &  
               -\tau\norm{\proj{\nul(\mathscr L)} \hat L \proj{\nul(\mathscr L)}}_*  + \tau \norm{\proj{\col(\mathscr L)}(\hat L - \mathscr L)\proj{\col(\mathscr L)}}_*+ \tr\left(\poffdiag( \Sigma - L)(\hat L - \mathscr L)\right) 
    \end{align*}
    Hence    \begin{multline*}
         \norm{\poffdiag(\hat L - L)}_F^2 + \norm{\poffdiag(\hat{L} - \mathscr L)}_F^2 +2\tau    \norm{\proj{\nul(\mathscr L)} \hat L \proj{\nul(\mathscr L)}}_*\\
        \leq \norm{\poffdiag(\mathscr L - L)}_F^2 + 2\tau \norm{\proj{\col(\mathscr L)} (\hat L - \mathscr L) \proj{\col(\mathscr L)}}_* + 2\tr\left(\poffdiag(\Sigma - L)(\hat L - \mathscr L)\right)
    \end{multline*}
        In particular, \begin{align*}
            \norm{\proj{\col(\mathscr L)} (\hat L - \mathscr L) \proj{\col(\mathscr L)}}_* &\leq \sqrt{\rank(\mathscr L )} \norm{\hat L - \mathscr L}_F = \sqrt{r} \norm{\hat L - \mathscr L}_F
        \end{align*}
        by Von-Numann's trace inequality and Cauchy-Sshwarz's inequality. Similarly,
        \begin{align*}
            &\tr\left(\poffdiag( \Sigma - L)(\hat L - \mathscr L)\right)\\
            \leq &  \tr\left(\poffdiag(\Sigma - L)\proj{\col(\mathscr L)}(\hat L - \mathscr L)\proj{\col(\mathscr L)}\right)+\tr\left(\poffdiag( \Sigma - L)\proj{\nul(\mathscr L)}(\hat L - \mathscr L)\proj{\nul(\mathscr L)}\right)\\
            \leq &  \norm{\poffdiag(\Sigma - L)\proj{\col(\mathscr L)}}_F\norm{\hat L - \mathscr L}_F + \norm{\poffdiag( \Sigma - L)}\norm{\proj{\nul(\mathscr L)}(\hat L - \mathscr L)\proj{\nul(\mathscr L)}}_*\\
            \leq & \left(\sqrt{r}\norm{\hat L - \mathscr L}_F +  \norm{\proj{\nul(\mathscr L)}\hat L \proj{\nul(\mathscr L)}}_*\right)\norm{\poffdiag( \Sigma - L)} 
        \end{align*}
        One has 
        \begin{align*}
         &\norm{\poffdiag(\hat L - L)}_F^2 + \norm{\poffdiag(\hat{L} - \mathscr L)}_F^2 +2\left(\tau - \norm{\poffdiag({\Sigma} -L )}\right)    \norm{\proj{\nul(\mathscr L)} \hat L \proj{\nul(\mathscr L)}}_*\\
        \leq& \norm{\poffdiag(\mathscr L - L)}_F^2 + 2\sqrt{\rank(\mathscr L)}\left(\tau + \norm{\poffdiag(\Sigma - L)} \right) \norm{\hat L - \mathscr L}_F \\
        \leq & \norm{\poffdiag(\mathscr L - L)}_F^2 + \rank(\mathscr L)\left(\tau + \norm{\poffdiag( \Sigma - L)} \right)^2+ \norm{\hat L - \mathscr L}_F^2
    \end{align*}
    So on the event $\tau > \norm{\poffdiag (\Sigma - L)}$,
    \begin{align*}
        \norm{\poffdiag(\hat L - L)}_F^2 \leq \norm{\poffdiag(\mathscr L - L)}_F^2 + 4\tau^2r+ \norm{\pdiag(\hat L - \mathscr L)}_F^2
    \end{align*}    

\end{proof}

\subsection{Section 1}
\subsubsection{Proof of Toy Example in Section \ref{sec:example}}
We can take $\delta^2 =0$. Notice that
\begin{align*}
    v^* = \text{arg max}_{|v|=1}  v^\top \Sigma v\; \Leftrightarrow \; U^\top  v^* = \text{arg max}_{| v|=1}  v^\top ( U^\top \Sigma  U) v 
\end{align*}
for any $p$ by $p$ full rank unitary matrix $ U = ( u_1 \cdots u_p)$. 

Take $ u_1 =  \beta,  u_2 = \frac{\eta - (\beta^\top \eta)  \beta}{\sqrt{1-(\beta^\top\eta)^2}}$. Then
\begin{align*}
     U^\top \Sigma  U/\kappa^2= \begin{pmatrix}(\beta^\top\eta)^2+\sigma^2/\kappa^2 & (\beta^\top\eta)\sqrt{1-(\beta^\top\eta)^2}\\
    (\beta^\top\eta)\sqrt{1-(\beta^\top\eta)^2} & 1-(\beta^\top \eta)^2 \\
    & & 0 \\
    & & & \ddots\\ 
    & & & & 0\end{pmatrix}
\end{align*}
Let $q = \beta^\top \eta, s = \sigma^2/\kappa^2$.
It suffices to solve for the eigenvectors for the 2 by 2 matrix
\begin{align*}
    M = \begin{pmatrix}(\beta^\top\eta)^2+\sigma^2/\kappa^2 & (\beta^\top\eta)\sqrt{1-(\beta^\top\eta)^2}\\
    (\beta^\top\eta)\sqrt{1-(\beta^\top\eta)^2} & 1-(\beta^\top \eta)^2   \end{pmatrix} =  \begin{pmatrix}q^2+s & q\sqrt{1-q^2}\\
    q\sqrt{1-q^2} & 1-q^2   \end{pmatrix} 
\end{align*}
The eigenvalues are
\begin{align*}
    \lambda_\pm = \frac{(1+s)\pm \sqrt{(1-s)^2 + 4sq^2}}{2}
\end{align*}
and eigenvectors are
\begin{align*}
    v_\pm = \begin{pmatrix}q\sqrt{1-q^2}\\\lambda_\pm - (q^2+s)\end{pmatrix}
\end{align*}
so 
\begin{align*}
    v_\pm^* &\propto \beta + \frac{ \lambda_\pm-(q^2+s)}{q({1-q^2})}(\eta - q\beta)\\
    &\propto \beta + \frac{\lambda_\pm -(s+q^2)}{q(1+s-\lambda_\pm)} \eta
\end{align*}
\hfill $\square$
\subsection{Section 2}

\subsubsection{Proof of Lemma \ref{lmm:cvx_uniqueness}} To show convexity of the objective function $F$, consider two points $x = (L_1, D_1), y = (L_2, D_2)$ and parameter $t\in[0, 1]$. Using the Cauchy inequality and AM-GM inequality, we can write:
\begin{align*}
    &\norm{\Sigma - t(L_1+D_1)-(1-t)(L_2+D_2) }_F^2 \\
    =& t^2 \norm{\Sigma - (L_1+D_1)}_F^2 + (1-t)^2 \norm{\Sigma - (L_2+D_2)}_F^2 + 2t(1-t) \tr\left\lbrace(\Sigma - (L_1+D_1))(\Sigma - (L_2+D_2))\right\rbrace \\
    \leq&t^2 \norm{\Sigma - (L_1+D_1)}_F^2 + (1-t)^2 \norm{\Sigma - (L_2+D_2)}_F^2 + 2t(1-t) \norm{\Sigma - (L_1+D_1)}_F\norm{\Sigma - (L_2+D_2)}_F\\
    \leq& t \norm{\Sigma - (L_1+D_1)}_F^2 + (1-t) \norm{\Sigma- (L_2+D_2)}_F^2
\end{align*}
Additionally, we can employ the triangle inequality, which states that $\norm{tL_1 + (1-t)L_2}_* \leq t\norm{L_1}_* + (1-t)\norm{L_2}_*$, to obtain:
\begin{align*}
    F(tx+(1-t)y) =& \tau\norm{tL_1 + (1-t)L_2}_* + \tfrac{1}{2}\norm{\Sigma - t(L_1+D_1)-(1-t)(L_2+D_2) }_F^2\\
    &\leq t F(x) + (1-t) F(y)
\end{align*}
Thus, the objective function is convex. Combing with the fact that the constraint is convex, we proved that the optimization is a convex program.

To establish the uniqueness of the minimum for the convex function $F$, we begin by assuming the contrary, which is that the minimum is achieved at two distinct points, denoted as $x=(L_1, D_1)$ and $y = (L_2, D_2)$.

By convexity, any convex combination of these points, given by $tx + (1-t) y$ for all $t\in[0, 1]$, must also attain the minimum. This necessitates the equalities in the Cauchy inequality and AM-GM inequality to hold, specifically for all $i\neq j$: \[[\Sigma - L_1]_{i,j} = [\Sigma - L_2]_{i,j}. \]
This means that $L_1$ and $L_2$ agree on the off-diagonal cells. Furthermore, we have: $$\norm{\Sigma - (L_1+D_1)}_F^2 = \norm{\poffdiag(\Sigma - L_1)}_F^2 = \norm{\poffdiag(\Sigma - L_2)}_F^2 = \norm{\Sigma - (L_2+D_2)}_F^2. 
$$ This equality arises from the fact that, at the solution $D_i =\pdiag(\Sigma-L_i)$ for $i = 1, 2$. 
Next, let's solve the MTFA problem \eqref{eqn:mtfa} with input  $\poffdiag(L_1)+\pdiag(\Sigma)$ to ensure the solution $w=(\tilde L , \tilde D)$ satisfies: \[\norm{\Sigma - (\tilde L + \tilde D)}_F^2 = \norm{\poffdiag(\Sigma - \tilde L)}_F^2=\norm{\poffdiag(\Sigma - L_1)}_F^2=\norm{\Sigma - (L_1+D_1)}_F^2\]
 and at the same time $\norm{\tilde L}_* \leq \norm{L_1}_* $. Thus,

\[F(w) = \tau \norm{\tilde L}_*+\frac{1}{2}\norm{\Sigma-(\tilde L+\tilde D)}_F^2 \leq  F(x) = F(y).\]
By assumption, $x, y$ are the minimizer of $F$ which implies the equality holds. However, this contradicts the fact that the MTFA solution $w$
must be unique \cite{della_riccia_minimum_1982}. 

\hfill $\square$

\subsubsection{Proof of Lemma \ref{lmm:monotone}}
 $\psi(\tau) = \norm{\Sigma - (\hat L + \hat D)}_F^2$ is a monotone function as it is continuous (Lemma \ref{lmm:continuous}) and one-to-one (Lemma \ref{lmm:two_optimization}) for $0<\tau< \lambda_1^{(\poffdiag(\Sigma))} $.

Finally, consider taking $\Sigma=L, W = D = 0$ in Theorem \ref{thm: oracle}. We obtain the following inequality:
\begin{align*}\label{eqn: off_diag_bound_tau}
    \psi(\tau) = \norm{\poffdiag({\Sigma }-\hat{L}(\tau))}_F^2 \leq 4\tau \norm{ \Sigma}_* 
\end{align*}

\hfill $\square$

\subsubsection{Proof of Theorem \ref{thm:rmta_mtfa}}

For $\tau>0$ small, $\psi(\tau)< \norm{\poffdiag(\Sigma)}_F^2$, by Lemma \ref{lmm:two_optimization}, relaxed MTFA \eqref{eqn:rmtfa} with regularization parameter $\tau$ is equivalent to constrained optimization \eqref{eqn: constrained_form1} with $\psi = \psi(\tau)$. 

Since $\poffdiag(\tilde L) = \poffdiag(\Sigma)$, the matrix $\tilde L$ always lies within the feasible region of \eqref{eqn: constrained_form1} with $\psi = \psi(\tau)$ the above inequality shows the convergence of off-diagonal entries $$\lim_{\tau \rightarrow 0^+} \poffdiag(\hat L(\tau )) = \poffdiag(\tilde L).$$
To demonstrate the convergence of the diagonal entries, we rely on two key facts: uniqueness of the MTFA solution \cite{della_riccia_minimum_1982} and the convergence $\norm{\hat{L}(\tau)}_* \rightarrow \norm{\tilde L}_*$. 
\hfill$\square$

\subsubsection{Proof of Theorem \ref{thm: robust_sin_rmtfa}}\label{sec: robust_sin_rntfa_proof}

    Consider the notation from Algorithm \ref{alg: rmtfa}. 
    Let $U^{(k)} \in \mathbb{O}^{p, r}$ be first $r$ eigenvectors of $G^{(k)} =  \Sigma - D^{(k)}$, $\mathscr U^{(k)} = \col (U^{(k)}), \mathscr U = \col(U)$. Define $b_k = \norm{G^{(k)} - L}$. 
 Since our algorithm converges to a unique fixed point regardless the initial $G^{(0)}$, let's take $G^{(0)} = \poffdiag( \Sigma) + \pdiag(L) = L+\poffdiag(W) $ which means $b_0 = \norm{\poffdiag(W)}$ and our goal is to control the upper bound of $b_k$. By the updating rule of the Algorithm \ref{alg: rmtfa}, we know that $\pdiag(G^{(k+1)}) = \pdiag(L^{(k)}) = \pdiag(\mathcal{D}_\tau^+(G^{(k)}))
 $, and that $\poffdiag(G^{(k+1)}) = \poffdiag(\Sigma)$ for all $k=0, 1, 2, \cdots$. Consequently, 
 \[G^{(k+1)} =  \pdiag(\mathcal D_\tau^+ (G^{(k)})) + \poffdiag(L+W) \]
 and 
 \begin{align*}
      &G^{(k+1)} - L \\
    =& \pdiag(G^{(k+1)}-L)+ \poffdiag(G^{(k+1)}-L)\\
     =& \pdiag(\proj{\mathscr U^{(k)}}(G^{(k)}-L)) - \pdiag(\proj{\mathscr U^{(k)}}^\perp  (L) )  + \pdiag(\mathcal{D}_\tau^+ (G^{(k)})-\proj{\mathscr U^{(k)}}(G^{(k)})) + \poffdiag(W)\\
     =& 
         \pdiag(\proj{\mathscr U}(G^{(k)}-L))+\pdiag((\proj{\mathscr U^{(k)}}-\proj{\mathscr U})(G^{(k)}-L)) 
     \\& \qquad - \pdiag(\proj{\mathscr U^{(k)}}^\perp ( L) )  - \pdiag(\proj{\mathscr U^{(k)}}(G^{(k)})-\mathcal{D}_\tau^+(G^{(k)})) + \poffdiag(W)
 \end{align*}
 By Lemma 1 and 5 in \cite{zhang_heteroskedastic_2022} and the definition of $\mathcal{D}^+_\tau$,\begin{itemize}
     \item $\norm{\pdiag(\proj{\mathscr U}(G^{(k)}-L))}\leq \norm{U}_{2,\infty} \norm{G^{(k)}-L}$
     \item $\norm{\pdiag((\proj{\mathscr U^{(k)}}-\proj{\mathscr U})(G^{(k)}-L))}\leq \norm{(\proj{\mathscr U^{(k)}}-\proj{\mathscr U})(G^{(k)}-L)}\leq \norm{\proj{\mathscr U^{(k)}}-\proj{\mathscr U}}\cdot\norm{G^{(k)}-L}$
     \item $\norm{\proj{\mathscr U^{(k)}}-\proj{\mathscr U}} = \norm{(U^{(k)}_\perp )^\top U }\leq \frac{2\norm{G^{(k)}-L}}{\lambda_r^{(L)}}\wedge 1$
     \item $\norm{\pdiag(\proj{\mathscr U^{(k)}}^\perp (L))}\leq 2\norm{U}_{2, \infty}\norm{G^{(k)}-L}$
     \item $\norm{\pdiag(\proj{\mathscr U^{(k)}}(G^{(k)})-\mathcal{D}_\tau^+(G^{(k)}))}\leq \norm{\proj{\mathscr U^{(k)}}(G^{(k)}) - \mathcal{D}^+_\tau (G^{(k)})}\leq \tau$     
 \end{itemize}
where the last item does not present in the HeteroPCA analysis. 
 
 By triangle inequality, and combine the bounds above. We have for $k\geq 0$,
 \begin{align*}
     b_{k+1}\leq \tau + b_0 + 3\norm{U}_{2, \infty} b_k + \tfrac{2b_k^2}{\lambda_r^{(L)}}
 \end{align*}
 We can bound the $\sin\Theta$ distance in terms of $b_k$
\begin{align*}
    \norm{\sin\Theta(U^{(k)}, U)} &= \norm{(U^{(k)}_\perp)^\top U} \leq \tfrac{\norm{(U_\perp^{(k)})^\top L}}{\lambda_r^{(L)}} \\
    &\leq \tfrac{\norm{(U_\perp^{(k)})^\top G^{(k)}} + \norm{(U_\perp^{(k)})^\top(L-G^{(k)})}}{\lambda_r^{(L)}}\\
    &\leq \tfrac{\sigma_{r+1}^{(G^{(k)})} + \norm{L-G^{(k)}}}{\lambda_r^{(L)}}=  \tfrac{\min_{\rank(\mathscr L)\leq r }\norm{\mathscr L - G^{(k)}} + \norm{L-G^{(k)}}}{\lambda_r^{(L)}}\\
    &\leq \tfrac{2\norm{G^{(k)}-L}}{\lambda_r^{(L)}} = \tfrac{2b_k }{\lambda_r^{(L)}} \leq 2a_k
\end{align*}
where $b_k\leq \lambda_r^{(L)}a_k$ for all $k$, and the sequence $\{a_k\}$ is defined by $a_0 = \frac{\tau+b_0}{\lambda_r^{(L)}}$,\begin{align*}
    a_{k+1} = a_0 + 3\norm{U}_{2, \infty} a_k + 2 a_k^2. 
\end{align*}
 for all $k\in\mathbb{N}$. 
 Consider $\epsilon_1, \epsilon_2, \varrho>0$ such that
 \[3\norm{U}_{2, \infty} + 2(1-\varrho)^{-1} a_0 < \epsilon_1<\epsilon_2 <\varrho < 1\]
 We show $a_k \leq \frac{1}{1-\varrho}a_0 + (\epsilon_2-\epsilon_1)\varrho^k/2$ by induction. Indeed, it holds for $k = 0$. Assume it holds for $k=m$.  Then for $k = m+1$, 
 \begin{align*}
     a_{m+1} &\leq  a_0 + [3\norm{U}_{2, \infty} + 2((1-\varrho)^{-1}a_0+(\epsilon_2-\epsilon_1)\varrho^m/2)]a_m\\
     &\leq a_0 + [\epsilon_1+ (\epsilon_2-\epsilon_1)\varrho^m]a_m\\
     &\leq a_0 + \varrho a_m\\
     &\leq a_0 + \varrho [(1-\varrho)^{-1}a_0+(\epsilon_2-\epsilon_1)\varrho^m/2]\\
     &\leq \tfrac{1}{1-\varrho }a_0+(\epsilon_2-\epsilon_1)\varrho^{m+1}/2
 \end{align*}
As a result, $$\norm{\sin\Theta(\hat U, U)}\leq \limsup_{k\rightarrow\infty} 2a_{k} \leq  \tfrac{2a_0}{1-\varrho}=\tfrac{2}{1-\varrho}\cdot \tfrac{\tau + \norm{\poffdiag(W)}}{\lambda_r^{(L)}}.$$

\hfill $\square$

\subsection{Section 3}

\subsubsection{Proof of Theorem \ref{thm:factor_subspace}}
\label{sec:factor_subspace_pf}


The proof in \cite{zhang_heteroskedastic_2022} still go through, but in Step 3, invoke our robust $\sin \Theta$ Theorem (Theorem \ref{thm: robust_sin_rmtfa}) instead. 

\subsubsection{Proof of Lemma \ref{lmm:heywood}}

For $\tau$ large enough ($\tau > \bar \tau \defeq \norm{\poffdiag(\Sigma)}$), the trivial solution $(\hat L(\tau), \hat D(\tau)) = (0, \pdiag(\Sigma))$ minimizes \eqref{eqn:rmtfa}. Hence $\hat D_{i, i} = \Sigma_{i, i} >0$. By continuity (Lemma \ref{lmm:continuous}), there is an interval of values of $\tau< \bar \tau$ such that the corresponding solution is non-trivial $ \min _{1\leq i\leq p} [\hat D(\tau)]_{i, i} >0 $. 

\subsubsection{Proof of Theorem \ref{thm: svd_hetero}}
\label{sec: svd_hetero_pf}

Define $\mu = \max(n\norm{V}_{2, \infty}, p\norm{U}_{2, \infty})/r, d = p\vee n$. Consider $\dot U \in \mathbb{O}^{p\times r}, \dot W \in \mathbb{O}^{r\times r}, \dot\Lambda= \diag(\dot \sigma_1, \dots \dot \sigma_r), \dot\sigma_1\geq \cdots \geq \dot\sigma _r > 0$ such that $$\dot U \dot \Lambda \dot W^\top = U\Lambda+ZV$$  

Weyl's theorem yields:
\begin{align*}
    \max_{1\leq i\leq r}|\sigma_i - \dot \sigma _ i| \leq \norm{ZV} \lesssim \sqrt{(\mu r \omega_{\max{}}^2 + \omega_{\col}^2)\log (d)}
\end{align*}
where the latter inequality hold with probability at least $1-O(d^{-10})$. It is shown by \cite{zhou_deflated_2023}. Their Assumption 1 is a general setting compared to Assumption \ref{assmpt: error_term}. Now,
\begin{align*}
    \norm{\dot\Sigma^{-1}} = \frac{1}{\dot\sigma_r} \leq \frac{1}{\sigma_r - \max_{1\leq i \leq r} |\sigma_i-\dot \sigma_i|}
    \leq \frac{2}{\sigma_r} 
\end{align*}
provided $\sigma_r \geq C_0 \sqrt{(\mu r \omega_{\max}^2 + \omega_{\col}^2)\log (d)}$ for some $C_0$ sufficiently large. Likewise,
\begin{align*}
    \sigma_r \leq \dot\sigma_r + \max_{1\leq i\leq r} |\sigma_i - \dot\sigma_i| \leq \dot\sigma_r + \frac{\sigma_r}{2}
\end{align*}
Therefore, $$\frac{\dot\sigma_r}{2}\leq\sigma_r\leq 2\dot\sigma_r. $$

By the identity of $\sin\Theta$ distance and triangular inequality:
\begin{align*}
    \norm{\sin \Theta(\hat U, U)} &= \norm{\hat U\hat U^\top  - UU^\top} \\
    &\leq \norm{\hat U \hat U^\top - \dot U\dot U^\top} + \norm{U U^\top - \dot U\dot U^\top} \\
    &= \norm{\sin\Theta(\hat U, \dot U)} + \norm{\sin\Theta(\dot U, U)}
\end{align*}

We first bound $\norm{\sin\Theta(\dot U, U)}$ via Wedin's $\sin \Theta$ theorem (Theorem 2.9 in \cite{chen_spectral_2021}), 
\begin{align*}
    \norm{\sin \Theta (\dot U, U)} \lesssim \frac{\norm{ZV}}{\sigma_r} \lesssim \frac{\sqrt{(\mu r\omega_{\max}^2 + \omega_{\col}^2 )\log(d)}}{\sigma_r}
\end{align*}
with probability at least $1-O(d^{-10})$. 

Next, to bound $\norm{\sin\Theta (\hat U, \dot U)}$, we notice that \begin{align*}
    YY^\top &= (M+Z)(M+Z)^\top\\
    &= (U\Lambda +ZV)(U\Lambda+ZV)^\top + (ZZ^\top - ZVV^\top Z^\top)\\
    &= \dot U \dot \Lambda^2 \dot U^\top + (ZZ^\top -ZVV^\top Z). 
\end{align*}
We can invoke Theorem \ref{thm: robust_sin_rmtfa} provided event $\mathcal E_\varrho$ stated in the theorem happens, i.e.~$\norm{\dot U}_{2, \infty}$ is bounded by some small constant and $\norm{\poffdiag(ZZ^\top - ZVV^\top Z)}\ll \dot\sigma_r^2$. 

From $(\dot U - UU^\top \dot U) \dot \Lambda \dot W^\top =(I-UU^\top)ZV$, one has
\begin{align*}
    \norm{\dot U - UU^\top \dot U}_{2, \infty} &= \norm{(I-UU^\top)ZV\dot W \dot \Lambda^{-1}}_{2, \infty}\\
    &\leq \left( \norm{ZV}_{2, \infty} + \norm{UU^\top ZV}_{2, \infty}\right)\norm{\dot \Lambda^{-1}}\\
    &\lesssim \frac{(\omega_{\row}+\omega_{\col})\sqrt{\log (d)}}{\sigma_r}\ll 1
\end{align*}
with probability at least $1-O(d^{-10})$ provided $\sigma_r\gg \sqrt{(\mu r\omega_{\max}^2+ \omega_{\col}^2)\log(d)}$. Thus,
\begin{align*}
    \norm{\dot U}_{2, \infty} &\leq \norm{\dot U  - UU^\top \dot U}_{2, \infty} + \norm{UU^\top \dot U}_{2, \infty}\\&\leq \norm{\dot U - UU^\top \dot U}_{2, \infty} + \norm{U}_{2, \infty} \norm{U^\top \dot U}\\
    &\leq c.
\end{align*}
for some constant $c>0$.
Also,
\begin{align*}
    \norm{\poffdiag(ZZ^\top - ZVV^\top Z)} &\leq \norm{\poffdiag(ZZ^\top ) }+ 2\norm{ZV}^2\\
    &
    \lesssim  \omega_{\col}(\omega_{\row}+ \omega_{\col})\log (d)+ (\mu r\omega_{\max}^2+ \omega_{\col}^2)\log(d)
    \ll \sigma_r^2
    \end{align*}
    with probability at least $1-O(d^{-10})$.
    By Theorem \ref{thm: robust_sin_rmtfa},
    $$\norm{\sin \Theta(\hat U, \dot U)}\lesssim \left(\tfrac{\sqrt{(\mu r \omega_{\max}^2 + \omega_{\col}^2)\log(d)}}{\sigma_r}\right)^2 + \tfrac{\omega_{\col}\omega_{\row}\log (d)}{\sigma_r^2}$$
    with high probability. 

Combine the results together, taking expectation, one has
\begin{align*}
   \e \norm{\sin\Theta(\hat U, U)} &\lesssim  \left(\tfrac{\sqrt{(\mu r \omega_{\max}^2 + \omega_{\col}^2)\log(d)}}{\sigma_r}+ \tfrac{\omega_{\col}\omega_{\row}\log (d)}{\sigma_r^2}\right)(1-O(d^{-10})) + O(d^{-10})\\
   &\approx \tfrac{\sqrt{(\mu r \omega_{\max}^2 + \omega_{\col}^2)\log(d)}}{\sigma_r}+ \tfrac{\omega_{\col}\omega_{\row}\log (d)}{\sigma_r^2}\\
   &\lesssim \tfrac{\omega_{\col}\sqrt{\log(d)}}{\sigma_r} + \tfrac{\omega_{\col}\omega_{\row} \log(d)}{\sigma_r^2}
\end{align*}
The last inequality is by assumption: $\omega_{\col}^2/\omega_{\max}^2\gtrsim \mu r$. \hfill $\square$

\subsection{Section 4}

\subsubsection{Proof of Lemma \ref{lmm: spectral_thresholding}}\label{subsection: proof_spectral_thresholding}

By spectral decomposition, a symmetric matrix can be represented as difference of positive semi-definite matrices: $ X =  X^+- X^-$ where $X^+, X^-\in \mathbb{S}^p_+$ and $\tr(X^+ X^-) = 0$. Consider $M \in \mathbb{S}^p_+$, by $\tr(X^-M)\geq 0$ and Von Nuemann's trace inequality,

\begin{align*}
    &\quad\tau\norm{M}_*+\tfrac{1}{2}\norm{ X -  M}_F^2\\ &= \tau \norm{ M}_* + \tfrac{1}{2}\norm{ X^+ -  M}_F^2 + \tfrac{1}{2}\norm{ X^-}_F^2-\tr( X^-( X^+-M))\\
    &=  \tau \norm{ M }_* + \tfrac{1}{2}\norm{ X^+ - M }_F^2 + \tfrac{1}{2}\norm{X^-}_F^2+\tr( X^- M )\\
    &\geq \tau \norm{ M }_* + \tfrac{1}{2}\norm{ X^+ - M }_F^2+ \tfrac{1}{2}\norm{ X^-}_F^2\\
     &\geq \sum_i \left\lbrace  \tfrac{1}{2}(\sigma_i^{(X^+)}-\sigma_i^{(M)})^2+\tau |\sigma_i^{(M)}|\right\rbrace+  \tfrac{1}{2}\norm{X^-}_F^2\\
     &\geq \sum_i \min_{\sigma_i^{(M)}\geq 0}\left\lbrace  \tfrac{1}{2}(\sigma_i^{(X^+)}-\sigma_i^{(M)})^2+\tau |\sigma_i^{(M)}|\right\rbrace+  \tfrac{1}{2}\norm{X^-}_F^2
\end{align*}

The last term attains its minimum when $\sigma_i^{(M)} = (\sigma_i^{(X^+)}-\tau)_+$, and the equal signs preceding hold true if and only if $X^+$ and $M$ share the same eigenspace. Consequently, $ M =\mathcal{D}_\tau^+(X)$ stands as the unique minimizer. \hfill$\square$

\subsubsection{Proof of Theorem \ref{thm: convergence_beck}}\label{subsection: relaxed_mtfa_convergence}
\begin{proof}
    Use Theorem 3.9 in \cite{beck_convergence_2015} with \begin{align*}
        x &= (y, z) = (L, D),\\ H(y, z) &= f(y, z)+g_1(y)+g_2(z),\\
        f(y, z) &= \norm{ \Sigma -y-z}_F^2\\
        g_1(y) &=\tau\norm{y}_* + \infty \cdot 1_{\{y\not \in \mathbb{S}^p_+\}}\\
        g_2(z) &= \infty \cdot 1_{\{ z \neq \pdiag (z)\}}
    \end{align*}
    Conditions [A]-[E] are satisfied:
    \begin{enumerate}
        \item[][A] The functions $g_i:\mathbb{R}^{p^2}\rightarrow (-\infty, \infty], i=1, 2$ are closed and
proper convex functions that are subdifferentiable over their domain.  
\item[][B] The function $f$ is a continuously differentiable convex function over $\text{dom} (g_1) \times 
\text{dom} (g_2)$
\item[][C], [D] The gradient of $f$ is (uniformly) Lipschitz continuous with respect to the
variables vector $y, z$ over $\text{dom}(g_1), \text{dom}(g_2)$, respectively with constant $L_1 = L_2 = 2; $ since
\begin{align}
    \nabla_y^2 f =\nabla_z^2f = 2I\preceq 2I
\end{align}
\item[][E] The optimal set of solution is nonempty and for any $\bar y\in\text{dom}(g_1), \bar z \in \text{dom}(g_2)$, the problems
\begin{align*}
    \min_z f(\bar y, z)+g_2(z),\quad \min_y f(y, \bar z)+ g_1(y) 
\end{align*}
have minimizers.
\end{enumerate}
\end{proof}

\end{document}